\def\withcolors{1}
\def\withnotes{1}
\renewcommand{\epsilon}{\ve}
\def\ve{\varepsilon}
\newcommand{\E}{\mbox{\bf E}}
\newcommand{\pr}[2][]{\mathrm{Pr}\ifthenelse{\not\equal{}{#1}}{_{#1}}{}\!\left[#2\right]}
\newtheorem{theorem}{Theorem}
\newtheorem{definition}[theorem]{Definition}
\numberwithin{theorem}{section} 
\numberwithin{nontheorem}{section} 
\numberwithin{proposition}{section} 
\numberwithin{observation}{section} 
\numberwithin{remark}{section} 
\numberwithin{fact}{section} 
\numberwithin{lemma}{section} 
\numberwithin{claim}{section} 
\numberwithin{corollary}{section} 
\numberwithin{case}{section} 
\numberwithin{dfn}{section} 
\numberwithin{definition}{section} 
\numberwithin{question}{section} 
\numberwithin{openquestion}{section} 
\numberwithin{res}{section}
  \newcommand{\gcolor}[1]{{\color{red}#1}} 
  \newcommand{\gcolor}[1]{{#1}}
  \newcommand{\gnote}[1]{\par\gcolor{\textbf{G: }\sf #1}} 
  \newcommand{\gfootnote}[1]{\footnote{{\bf \gcolor{Gautam}}: {#1}}}
  \newcommand{\gnote}[1]{}
  \newcommand{\gfootnote}[1]{}
\newcommand{\ignore}[1]{\leavevmode\unskip} 
\title{The Role of Adaptive Optimizers \\ for Honest Private Hyperparameter Selection\thanks{Authors SM and SS have equal contribution and are listed in alphabetical order. Authors XH, GK, OT have equal contribution and are listed in alphabetical order. }}
\author {
Shubhankar Mohapatra\thanks{Cheriton School of Computer Science, University of Waterloo. \url{s3mohapa@uwaterloo.ca}.}
\and
Sajin Sasy\thanks{Cheriton School of Computer Science, University of Waterloo. \url{ssasy@uwaterloo.ca}. Supported by an Ontario Graduate Scholarship, and also grateful to the Royal Bank of Canada and NSERC grant CRDPJ-534381.}
\and
Xi He\thanks{Cheriton School of Computer Science, University of Waterloo. \url{xi.he@uwaterloo.ca}. Supported by an NSERC Discovery Grant, a University of Waterloo startup grant, and a Compute Canada RRG grant.}
\and
Gautam Kamath\thanks{Cheriton School of Computer Science, University of Waterloo. \url{g@csail.mit.edu}. Supported by an NSERC Discovery Grant, a University of Waterloo startup grant, and a Compute Canada RRG grant.}
\and
Om Thakkar\thanks{Google. \url{omthkkr@google.com}.}
}
\newcommand{\optname}{\ensuremath{\mathsf{DPAdamWOSM}}}
\newcommand{\stitle}[1]{\smallskip \noindent{\bf #1}}
\DeclareMathOperator*{\argmin}{arg\,min}
\begin{document}
\maketitle

\begin{abstract}
Hyperparameter optimization is a ubiquitous challenge in machine learning, and the performance of a trained model depends crucially upon their effective selection. 
While a rich set of tools exist for this purpose, there are currently no practical hyperparameter selection methods under the constraint of differential privacy (DP). 
We study honest hyperparameter selection for differentially private machine learning, in which the process of hyperparameter tuning is accounted for in the overall privacy budget. 
To this end, we i) show that standard composition tools outperform more advanced techniques in many settings, ii) empirically and theoretically demonstrate an intrinsic connection between the learning rate and clipping norm hyperparameters, iii) show that adaptive optimizers like DPAdam enjoy a significant advantage in the process of honest hyperparameter tuning, and iv) draw upon novel limiting behaviour of Adam in the DP setting to design a new and more efficient optimizer.



\end{abstract}

\section{Introduction}
Over the last several decades, the field of machine learning has flourished.
However, training machine learning models 
frequently involves personal data, which leaves data contributors susceptible to privacy attacks.
This is not purely hypothetical: recent results have shown that models are  vulnerable to membership inference~\cite{MIAagainstMLM, TSS, CPAofDL} and model inversion attacks~\cite{MIAECIBC, MLMRTM}.
The leading approaches for privacy-preserving machine learning are based on differential privacy (DP)~\cite{DworkMNS06}.
Informally, DP rigorously limits and masks the contribution that an individual datapoint can have on an algorithm's output. 
To address the aforementioned issues, DP training procedures have been developed~\cite{WM10, BST14, song2013stochastic, DLwithDP}, which generally resemble non-private gradient-based methods, but with the incorporation of gradient clipping and noise injection. 


In both settings, \emph{hyperparameter selection} is instrumental to achieving high accuracy.
The most common methods are grid search or random search, both of which incur a computational overhead scaling with the number of hyperparameters under consideration.
In the private setting, this issue is often magnified as most private training procedures introduce new hyperparameters. 
Regardless, and more importantly, hyperparameter tuning on a sensitive dataset also costs in terms of \emph{privacy}, naively incurring a multiplicative cost which scales as the square root of the number of candidates (based on composition properties of differential privacy~\cite{kairouz2015composition}). 

Most prior works on private learning choose not to account for this cost~\cite{DLwithDP,DPMPDL,tramerboneh}, focusing instead on demonstrating the accuracy achievable by private learning under idealized conditions; if the best hyperparameters were somehow known ahead of time.
Some works assume the presence of supplementary public data resembling the sensitive dataset~\cite{avent2020automatic,ramaswamy2020training}, which may be freely used for hyperparameter tuning.
Naturally, such public data may be scarce or nonexistent in settings where privacy is a concern, leaving practitioners with little guidance on how to choose hyperparameters in practice.
As explored in our paper, poor hyperparameter selection with standard private optimizers can have catastrophic effects on model accuracy.



Hope is afforded by the success of adaptive optimizers in the non-private setting. 
The canonical example is Adam~\cite{kingma2014adam}, which exploits moments of the gradients to adaptively and dynamically determine the learning rate. It works out of the box in many cases, providing accuracy comparable with tuned SGD.
However, Adam has been overlooked in the context of private learning since previous works have show than fine-tuned DPSGD tends to perform better than DPAdam~\cite{papernot2020making}, which has lead to several subsequent works~\cite{Dalowrank,tramerboneh} to limit themselves to highlighting accuracy under ideal DPSGD hyperparameters.
We navigate the different options available to a practitioner to solve the \textit{honest private hyperparameter tuning problem} and ask, \textit{are there optimizers which provide strong privacy, require minimal hyperparameter tuning, and perform competitively with tuned counterparts?}


\subsection{Our Contributions}
\begin{itemize}
\item We investigate techniques for private tuning of hyperparameters.
We perform the first empirical evaluation of the proposed theoretical method of~\cite{liu2019private} and demonstrate that it can be expensive;
in certain cases, one can tune over sufficiently many hyperparameters using standard composition tools such as moments accountant~\cite{DLwithDP}.
\item We empirically and theoretically demonstrate that two hyperparameters, the learning rate and clipping threshold, are intrinsically coupled for non-adaptive optimizers. While other hyperparameters and the model architecture are restricted by the scope of the task, privacy and utility targets, and computational resources, the learning rate and clipping norm have no a priori bounds.
Since the resulting hyperparameter grid adds up to the privacy cost while tuning to achieve the model with the best utility, we explore leveraging adaptive optimizers to reduce the hyperparameter space.
\item We empirically demonstrate the DPAdam optimizer (with default values for most  hyperparameters), can match the performance of tuned non-adaptive optimizers on a variety of datasets, thus enabling private learning with honest hyperparameter selection.
This finding complements a prior claim of~\cite{papernot2020making}, which suggests that a well-tuned DPSGD can outperform DPAdam. 
However, our findings show that this difference in performance is relatively insignificant.
Furthermore, in the realistic setting where hyperparameter tuning must be accounted for in the privacy loss, we show that DPAdam is much more likely to produce non-catastrophic results.
\item We show that the adaptive learning rate of DPAdam converges to a static value. To leverage this, we introduce a new private optimizer, $\optname$ that matches the performance of DPAdam without computing the second moments.
\end{itemize}

\section{Preliminaries}\label{sec:background}
\label{subsec:dp}

\begin{definition}[Differential Privacy]~\cite{DBLP:conf/eurocrypt/DworkKMMN06,  DworkMNS06}
	A randomized algorithm $\mathcal{M}$
	achieves ($\epsilon,\delta$)-DP if for all $\mathcal{S} \subseteq$ Range($\mathcal{M}$) and for any two database instances $D,D' \in \mathcal{D}$ that differ only in one tuple:
	$$\Pr[\mathcal{M}(D) \in \mathcal{S}]\le e^\epsilon \Pr[\mathcal{M}(D') \in \mathcal{S}] + \delta.$$
\end{definition}

The privacy cost is measured by the parameters ($\epsilon,\delta$) also referred to as the privacy budget.
Smaller values of $\epsilon$ correspond to stricter privacy guarantees, and it is standard in literature to set $\delta \ll \frac{1}{n}$, where $n$ is the size of the database. We set the $\delta_f$ in our work to $\frac{1}{n}$ scaled down to the nearest power of $10$.
Complex DP algorithms can be built from the basic algorithms following two important properties of differential privacy:
1) Post-processing states that for any function $g$ defined over the output of the mechanism $\mathcal{M}$, if $\mathcal{M}$ satisfies ($\epsilon,\delta$)-DP, so does $g(\mathcal{M})$;
2) Basic composition states that if for each $i \in [k]$, mechanism $\mathcal{M}_i$ satisfies ($\epsilon_i,\delta_i$)-DP, 
then a mechanism sequentially applying $\mathcal{M}_1$, $\mathcal{M}_2, \ldots, \mathcal{M}_k$ satisfies ($\sum_{i=1}^k\epsilon_i, \sum_{i=1}^k\delta_i$)-DP.

\label{subsec:dpopt}
Given a function $f:\mathcal{D} \rightarrow \mathbb{R}^d$, the \emph{Gaussian mechanism} adds noise drawn from a normal distribution $\mathcal{N}(0,S_f^2\sigma^2)$  to each dimension of the output, where $S_f$
is the $\ell_2$-sensitivity of $f$, 
defined as
$S_f = \max_{D,D'\text{differ in a row}} \|f(D) - f(D')\|_2$.
For $\epsilon\in (0,1)$, if $\sigma \geq \sqrt{2\ln(1.25/\delta)}/\epsilon$, 
then the Gaussian mechanism satisfies $(\epsilon,\delta)$-DP.

The Gaussian mechanism is used to privatize optimization algorithms. In contrast to non-private optimizers where batches are sliced from the training dataset, DP optimizers at each iteration work by sampling ``lots'' from the training with probability $L/n$, where $L$ is the (expected) lot size and $n$ is the total data size. A set of queries are computed over those samples. These queries include gradient computation, 
updates to batch normalization or accuracy metric calculations. As there is not any a priori bound on these query outputs, the sensitivity $S_f$ is set by clipping the maximum $\ell_2$ norm of the gradient to a user-defined parameter $C$. The gradient of each point is then noised and published. All DP optimizers follow the same framework in which they take steps on the computed noisy gradient as in its non-private counterpart~\cite{mcmahan2018general}. The privacy cost of the whole training procedure is calculated by advanced composition techniques such as
the Moments accountant~\cite{DLwithDP}. 

\subsection{DP Optimizers}
\stitle{DP-SGD:}\label{dpsgd}
The most popular private optimizer is the differentially private stochastic gradient descent (DPSGD)~\cite{WM10, BST14, song2013stochastic, DLwithDP}. DPSGD takes individual steps for each point in the sampled lot just like in SGD. Due to these individual steps, SGD is more locally unstable and empirically generalizes better than other optimizers~\cite{zhou2020towards}. However, SGD requires the learning rate to be properly tuned when changing architectures or datasets, without which SGD may show subpar performance. There are five main hyperparameters involved in  DPSGD. We start with those also present in the non-private setting, highlighting any differences that arise due to privacy.
\begin{itemize}
	\item Training iterations ($T$) - In the private setting, more iterations results in a larger privacy cost. 
	\item Lot size ($L$) - Lot size factors into the privacy calculation, due to amplification by subsampling~\cite{BalleBG18}.
	\item Learning rate ($\alpha$) - Learning rate has an important interplay with the clipping threshold $C$, discussed in Section~\ref{sec:LRClip-tuning}. 
\end{itemize}
The following hyperparameters are new in the private setting.
\begin{itemize}
	\item Clipping threshold ($C$) - To limit sensitivity, per-example gradients are clipped to have $\ell_2$-norm bounded by $C$.
	\item Noise scale ($\sigma$) - Scale of the noise added, as a multiple of $C$. A larger value gives higher privacy but (typically) lower accuracy.
	
\end{itemize}

\stitle{DPMomentum:} The private counterpart of SGD-Momentum~\cite{rumelhart1986learning, momentum}, which adds the momentum parameter to the update rule of DPSGD~\cite{dpmomentum}. This optimizer adds an extra hyperparameter to tune as no default value for momentum is known.

\stitle{DP-Adam:}\label{dpadam}
 Adam~\cite{kingma2014adam} is an adaptive optimizer that combines the advantages from AdaGrad~\cite{duchi2011adaptive} and RMSProp~\cite{hinton2012neural}. At the core of Adam, exponentially averaged first and second moment estimates of the gradients are used to take a step. Converting Adam to its differentially private counterpart DPAdam can be done trivially by replacing the standard gradients with their clipped and noised counterparts. Adam adds two extra hyperparameters ($\beta_1, \beta_2$) to tune in the DP setting. However, default values of these parameters are known in the non-private setting. We will tune these parameters to the private setting in Section~\ref{sec:tuningaopt}. The adaptivity of these optimizers imply they need not be tuned across learning rates, hence reducing a hyperparameter to tune.
 
 \stitle{ADADP:} This DP adaptive optimizer finds the best learning rate at every alternate iteration~\cite{koskela2020learning}. It does so by leveraging the $\ell_2$ error of taking a full step and taking two half steps. If the error computed is greater than a threshold $\tau$, the learning rate is updated using a closed form expression. As suggested by the authors, for all our experiments using ADADP, we use the threshold $\tau = \sqrt{\frac{d}{2T}}$, where $d$ is the model dimension and $T$ is the total number of iterations.

\subsection{Related Work}\label{app:relatedwork}

Hyperparameter tuning plays a vital role in machine learning practice. In the non-private setting, ML practitioners use grid search, random search, Bayesian optimization techniques~\cite{swersky2013multi} or AutoML~\cite{he2021automl} techniques to tune their models. However, there hasn't been much research on private hyperparameter tuning procedures due to the significant associated privacy costs. Each set of hyperparameter configuration results in a privacy-utility tradeoff. This tradeoff for multiple configurations can be captured by Pareto frontiers using multivariate Bayesian optimization over parameter dimensions~\cite{avent2020automatic}. 
However, this method asks the model curator to query the dataset multiple times which requires non-private access to the dataset.
There have been some end-to-end private tuning procedures~\cite{chaudhuri2011differentially, chaudhuri2013stability,kusner2015differentially} which work for a selected number of hyperparameter sets. 
These results work either in restricted settings for few combinations of candidates under relaxations of approximate differential privacy.
The most relevant work to ours is an approach for private selection from private candidates \cite{liu2019private}. Their work provides two methods, one which outputs a candidate with accuracy greater than a given threshold, and another which randomly stops and outputs the best candidate seen so far.
The first approach is of limited utility in practice as it requires a prior accuracy bound for the dataset. The second variant incurs a considerable overhead in privacy cost.  We study the second approach and compare it with a naive approach based on Moments Accountant~\cite{DLwithDP} which would scale as the square root of the number of candidates. Recent work shows generalized version of the approach for private selection on diverse with better bounds using R\'enyi DP~\cite{papernot2021hyperparameter}.




\section{Problem Setup and Overview}\label{sec:setup}

Consider a sensitive dataset $D$ which lies beyond a privacy firewall and has $n$ points of the form ${(x_1,y_1), (x_2,y_2), \dots, (x_n,y_n)}$ where $x_i \in \mathcal{X}$ is the feature vector of the $i$th point and $y_i \in \mathcal{Y}$ is its desired output. 
Though our experiments are carried out in the supervised setting, all results can be translated to unsupervised setting as well. The dataset has been divided into two parts, the training set and the validation set. A trusted curator wants to train a machine learning model by making queries on the dataset with a total end-to-end training privacy budget of ($\epsilon_f, \delta_f$) such that the model can perform with high accuracy on the validation set. The curator wants to try multiple hyperparameter candidates for the model to figure out which candidate gives the maximum accuracy. However, as the model is private, each candidate requires multiple queries made on the dataset and all of them need to be accounted  in the total privacy budget of ($\epsilon_f, \delta_f$).

Note that any validation accuracy  must also be measured privately. Since this accuracy is a low-sensitivity query with a scalar output, and must only be computed once per choice of hyperparameters, the cost of this procedure is generally a lower order term versus the main training procedure. Thus for simplicity, we do not noise these validation accuracy queries.  
As we will see later, some optimizers require more candidates to tune and hence would also require more privacy budget than others.

To tackle private hyperparameter selection, we first compare the available private tuning procedures in Section~\ref{sec:tuningcost}. We show that the privacy cost for training a model depends on the hyperparameter grid size and standard composition theorems provide the best guarantees when the grid is small. In Section~\ref{sec:tuningopt}, we investigate different optimizers to see how many candidates are required to output a good solution. In Section~\ref{sec:LRClip-tuning}, we provide theoretical and empirical evidence to demonstrate an intrinsic coupling between two hyperparameters -- the learning rate and clipping norm in DPSGD. We show that this coupling makes DPSGD sensitive to these parameter choices, which can drastically affect the validation accuracy. In Section~\ref{sec:tuningaopt}, we demonstrate that an adaptive optimizer, DPAdam, translates well from the non-private setting and obviates tuning of the learning rate. In Section~\ref{sec:exp2}, we empirically compare DPAdam with DPSGD and DPMomentum to show that DPAdam performs at par with less hyperparameter tuning. Finally, in Section~\ref{sec:exp3}, we establish that DPAdam converges to a static learning rate in restricted settings, and unveil a new optimizer $\optname$ which can leverage this converged value without computing the second moments.

\section{Privately Tuning DP Optimizers}\label{sec:tuningcost}
Effective hyperparameter tuning is crucial in extracting good utility from an optimizer. 
Unlike the non-private setting, DP optimizers typically i)  have more hyperparameters to tune; ii) require additional privacy budget for tuning.
Existing work on DP optimizers acknowledge this problem (e.g., \cite{DLwithDP}), but do not address the privacy cost incurred during hyperparameter tuning~\cite{DLwithDP, DPMPDL, tramerboneh}.
There are two main prior general-purpose approaches for  private hyperparameter selection. 
The first performs composition via Moments Accountant~\cite{DLwithDP}, and the second is the algorithm of~\cite{liu2019private} (LT). The latter is a theoretical result, and to the best of our knowledge, has not been previously evaluated in practice.
We investigate the privacy cost of these two techniques in practice and discuss situations in which each method is preferred.


\paragraph{Tuning cost via LT}\label{sec:ltselect}
\cite{liu2019private} propose a random stopping algorithm (LT) to output a `good' hyperparameter candidate from a pool of $K$ candidates, \{$x_1,\ldots,x_K$\}. 
They assume sampling access to a randomized mechanism $Q(D)$ which samples $i \sim [K]$, and returns the $i$-th candidate $x_i$, and a score $q_i$ for this candidate. It is a random stopping algorithm, in which at every iteration, a candidate is picked from $Q$ i.i.d.\ with replacement and a $\gamma$-biased coin is flipped to randomly stop the algorithm. When the algorithm stops, the candidate with the maximum score seen so far is outputted. In the approximate DP version of this algorithm, an extra parameter $\Upsilon$ is set to limit the total of number of iterations. The pseudocode of this algorithm is deferred to Appendix~\ref{sec:ltalgo}.


\begin{theorem}[\cite{liu2019private}, Theorem 3.4]
\label{thm:lt}
Fix any $\gamma \in [0,1]$, $\delta_2>0$ and let $\Upsilon = \frac{1}{\gamma}\log{\frac{1}{\delta_2}}$. 
If $Q$ is ($\epsilon_1, \delta_1$)-DP, then  the LT algorithm is ($\epsilon_f, \delta_f)$-DP for $\epsilon_f = 3\epsilon_1 + 3\sqrt{2\delta_1}$ and $\delta_f = \sqrt{2\delta_1}\Upsilon + \delta_2$.
\end{theorem}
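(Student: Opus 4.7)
The plan is to isolate the pure-DP heart of the argument (pure-DP base mechanism $Q$ with untruncated geometric stopping) and then add back the two sources of $\delta$-slack: the $\delta_2$ cost of capping the number of iterations at $\Upsilon$, and the $\sqrt{2\delta_1}$-style cost of promoting $Q$ from $(\epsilon_1,\delta_1)$-DP to a pure-DP mechanism.

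\textbf{Pure-DP core.} Assume $\delta_1=0$, write $\epsilon':=\epsilon_1$, and temporarily ignore the $\Upsilon$ cap. Let $p_D(z)$ denote the density of a single call $Q(D)$ at output $z=(x,q)$, and let $F_D(z)$ denote its CDF in the score coordinate. Outputting $z$ requires one of the $T$ i.i.d.\ samples to equal $z$ with the others weakly below, so
\[
\mu_D(z)=\sum_{t\ge 1}\gamma(1-\gamma)^{t-1}\cdot t\,p_D(z)\,F_D(z)^{t-1}=\frac{\gamma\,p_D(z)}{\bigl(1-(1-\gamma)F_D(z)\bigr)^{2}},
\]
where the closed form uses $\sum_{t\ge 1}t\,x^{t-1}=(1-x)^{-2}$. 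The numerator ratio $p_D(z)/p_{D'}(z)$ is bounded by $e^{\epsilon'}$. For the denominator, rewrite $1-(1-\gamma)F(z)=\gamma+(1-\gamma)(1-F(z))$; since $1-F_{D'}(z)\le e^{\epsilon'}(1-F_D(z))$ by applying pure DP to the tail event $\{\mathrm{score}>z\}$, and since the additive $\gamma$ is data-independent, the denominator ratio is a convex combination of $1$ and at most $e^{\epsilon'}$, hence at most $e^{\epsilon'}$. Squaring yields $\mu_D(z)/\mu_{D'}(z)\le e^{3\epsilon'}$, so this variant is $3\epsilon'$-pure-DP.

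\textbf{Truncation and approximate-DP reduction.} Capping $T$ at $\Upsilon$ changes the output only on $\{T>\Upsilon\}$, an event of probability at most $(1-\gamma)^{\Upsilon}\le e^{-\gamma\Upsilon}=\delta_2$; standard coupling then contributes $\delta_2$ to the final $\delta$. To upgrade $Q$ from $(\epsilon_1,\delta_1)$-DP to pure DP, invoke the standard decomposition $Q=(1-p)Q_0+p\,Q_1$ with $Q_0$ pure $\epsilon'$-DP for $\epsilon':=\epsilon_1+\sqrt{2\delta_1}$ and $p:=\sqrt{2\delta_1}$, realized via data-independent coin flips. The event $E$ that all (at most $\Upsilon$) iterations draw from $Q_0$ has the same probability $(1-p)^{\Upsilon}\ge 1-\Upsilon p$ under $D$ and under $D'$, so it factors cleanly out of the DP inequality. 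Conditioned on $E$ the algorithm reduces to the pure-DP case above and is $3\epsilon'$-DP; unconditioning $E$ by the standard mixture-to-DP argument (a common mixture on both datasets loses an additive $\Pr[\bar E]$ in $\delta$) adds $\Upsilon p=\Upsilon\sqrt{2\delta_1}$. Combining,
\[
\epsilon_f=3\epsilon'=3\epsilon_1+3\sqrt{2\delta_1},\qquad \delta_f=\delta_2+\Upsilon\sqrt{2\delta_1}.
\]

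\textbf{Main obstacle.} The delicate step is the pure-DP density calculation: a naive composition across the random number of iterations would scale with $T$ or its expectation $\approx 1/\gamma$, but the geometric weighting collapses the sum into a closed form whose denominator carries exponent exactly $2$, giving the constant blowup $3\epsilon'$ that is independent of $\gamma$. The truncation argument and the approximate-to-pure decomposition are then standard DP lemmas and contribute only bookkeeping.
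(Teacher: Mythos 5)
The paper does not actually prove this statement: it is imported verbatim from Liu and Talwar (Theorem 3.4 of \cite{liu2019private}) and used as a black box, so there is no in-paper proof to compare against. Measured against the proof in the cited work, your sketch has the right architecture --- the same three-layer structure (a pure-DP core via the geometric-sum density identity, truncation at $\Upsilon$ costing $\delta_2$, and an approximate-to-pure reduction contributing the $\sqrt{2\delta_1}$ terms), and your computation $\mu_D(z)=\gamma p_D(z)/(1-(1-\gamma)F_D(z))^2$ combined with the tail-event bound $1-F_{D'}(z)\le e^{\epsilon'}(1-F_D(z))$ is exactly how the factor $3$ arises there.

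The genuine gap is the step you dismiss as standard: the claim that an $(\epsilon_1,\delta_1)$-DP mechanism $Q$ can be written as a \emph{data-independent} mixture $(1-p)Q_0+pQ_1$ with a single $Q_0$ that is pure $(\epsilon_1+\sqrt{2\delta_1})$-DP uniformly over all datasets and $p=\sqrt{2\delta_1}$. The standard mixture decomposition of approximate DP is pointwise --- it holds for one fixed pair of neighboring datasets at a time, with components depending on that pair --- and gluing these into one globally pure-DP mechanism at uniformly bounded total-variation distance is precisely the content of a dedicated lemma that Liu and Talwar have to prove (and is where the particular constant $\sqrt{2\delta_1}$ comes from); it is not an off-the-shelf fact. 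Moreover, if one only has TV-closeness rather than an exact data-independent mixture, your ``factors cleanly out of the DP inequality'' step breaks: the naive hybrid argument then picks up an extra $e^{3\epsilon'}$ factor on the $\Upsilon\sqrt{2\delta_1}$ term, so recovering the stated $\delta_f=\sqrt{2\delta_1}\,\Upsilon+\delta_2$ requires exactly the care you are eliding. A minor further point: the density formula $t\,p_D(z)F_D(z)^{t-1}$ for the running maximum assumes an atomless score distribution, so ties need a separate (easy) tie-breaking argument.
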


Theorem~\ref{thm:lt} expresses the privacy cost of the algorithm in terms of the privacy cost of individual learners, and parameters of the algorithm itself. The $\delta_2$ parameter does not significantly affect the final epsilon $\varepsilon_f$ of the algorithm and in practice, one can set it to a very small value ($10^{-20}$). Though a small value of $\delta_2$ has little effect on $\delta_f$, it increases the hard stopping time of the algorithm, $\Upsilon$. 

To understand the LT algorithm, we will compare the privacy costs of training a single hyperparameter candidate with a final $\epsilon_f, \delta_f$ budget via LT and compare it with the privacy cost $\epsilon_1, \delta_1$ of the underlying individual learner. 
This setting might seem unnatural for LT as it was designed to select from a pool of candidates but we choose this setting to show the minimum privacy cost overhead associated with LT and later show how the privacy cost changes for multiple candidates (varying $\gamma$). To use LT, one needs to figure out the $\epsilon_1, \delta_1$ via Theorem~\ref{thm:lt} using the final $\epsilon_f, \delta_f$ values and in this case, $\gamma = 1$ (as we have just one candidate). The individual learner is then trained using  $\epsilon_1, \delta_1$ budget.

\begin{figure*}[ht]
	\centering
	\subfigure{\label{fig:gamma_1}\includegraphics[width=0.32\linewidth]{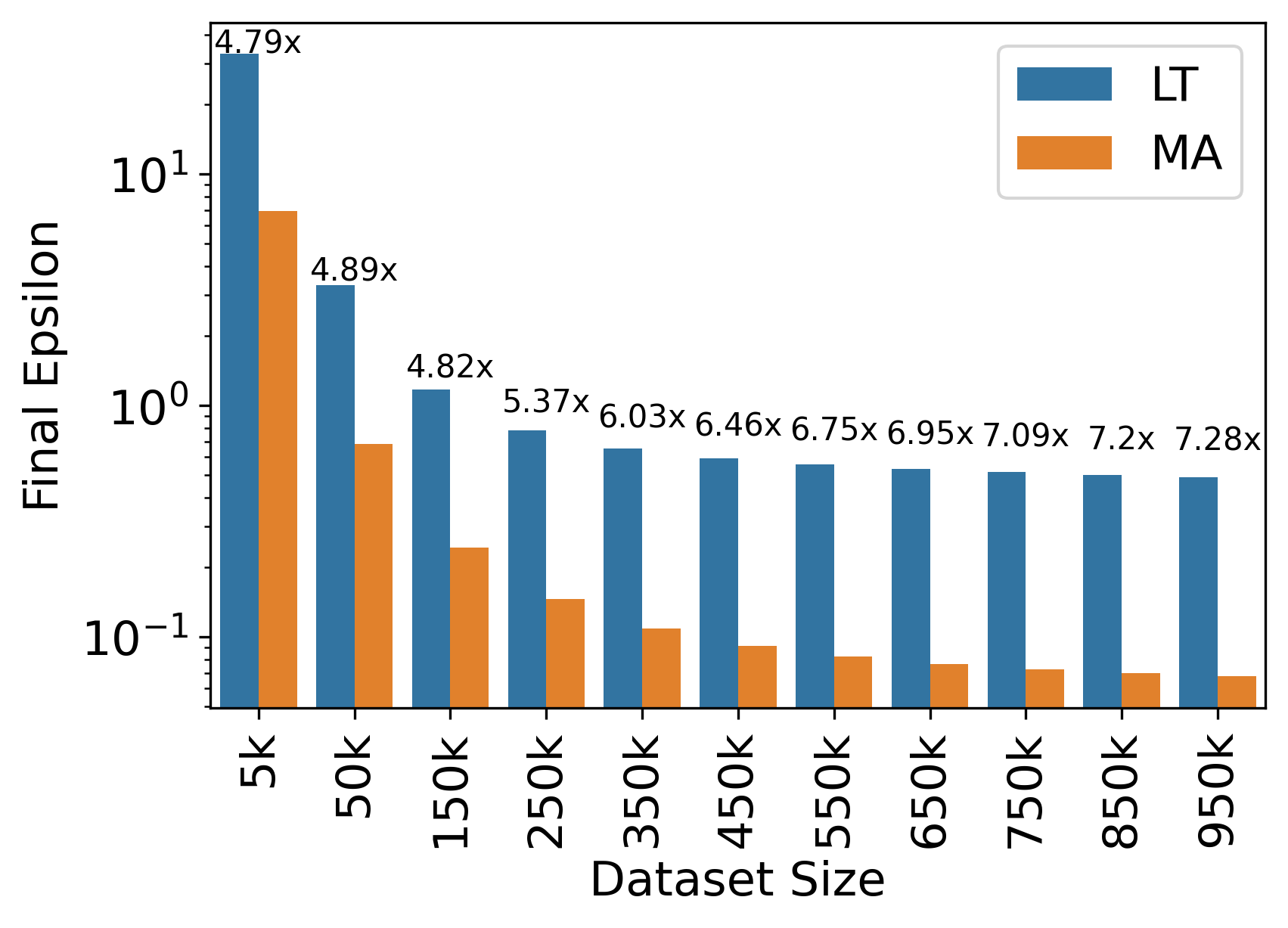}}
	\subfigure{\label{fig:varying_gamma}\includegraphics[width=0.32\linewidth]{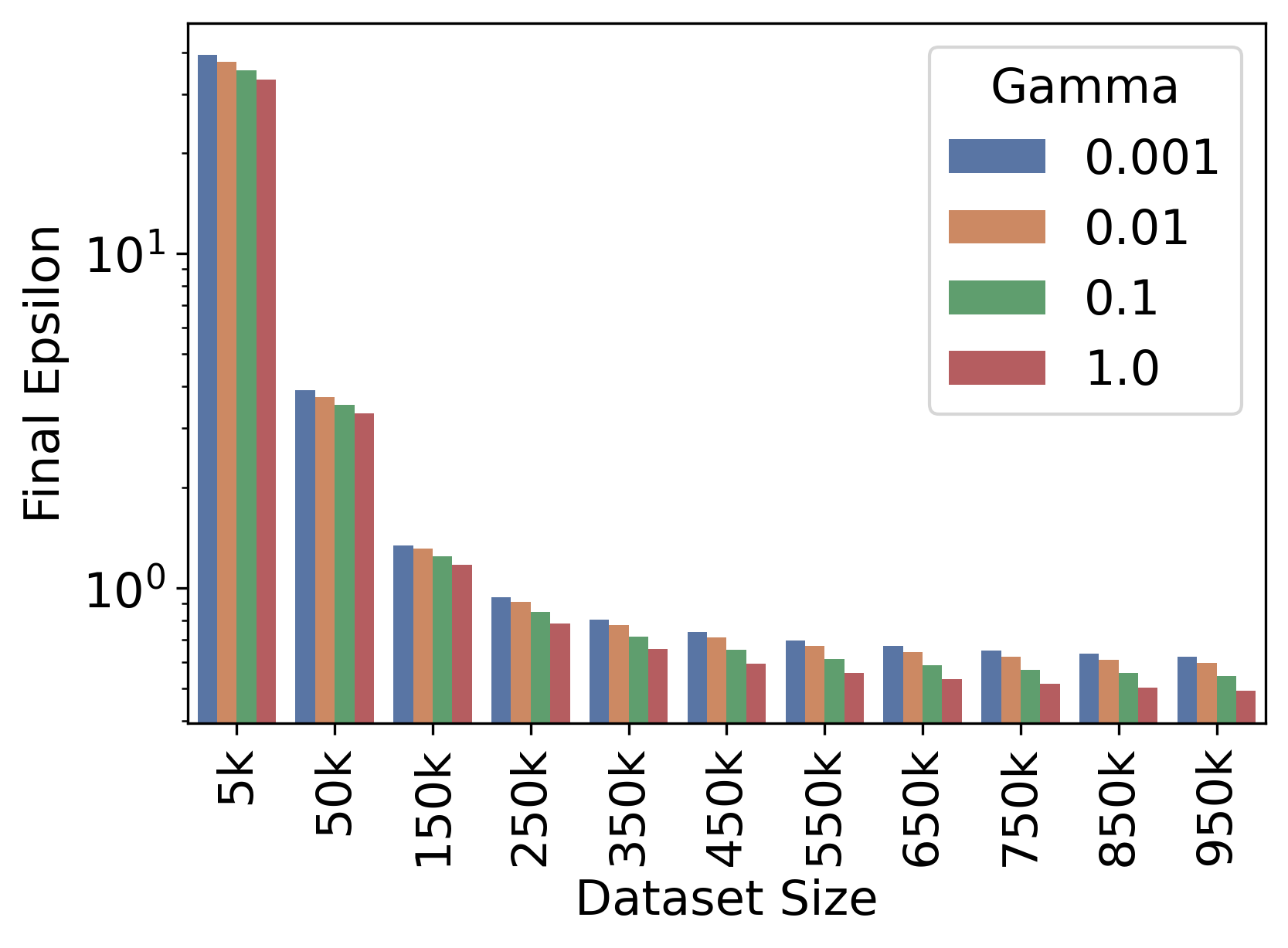}}
	\subfigure{\label{fig:ltvsma}\includegraphics[width=0.32\linewidth]{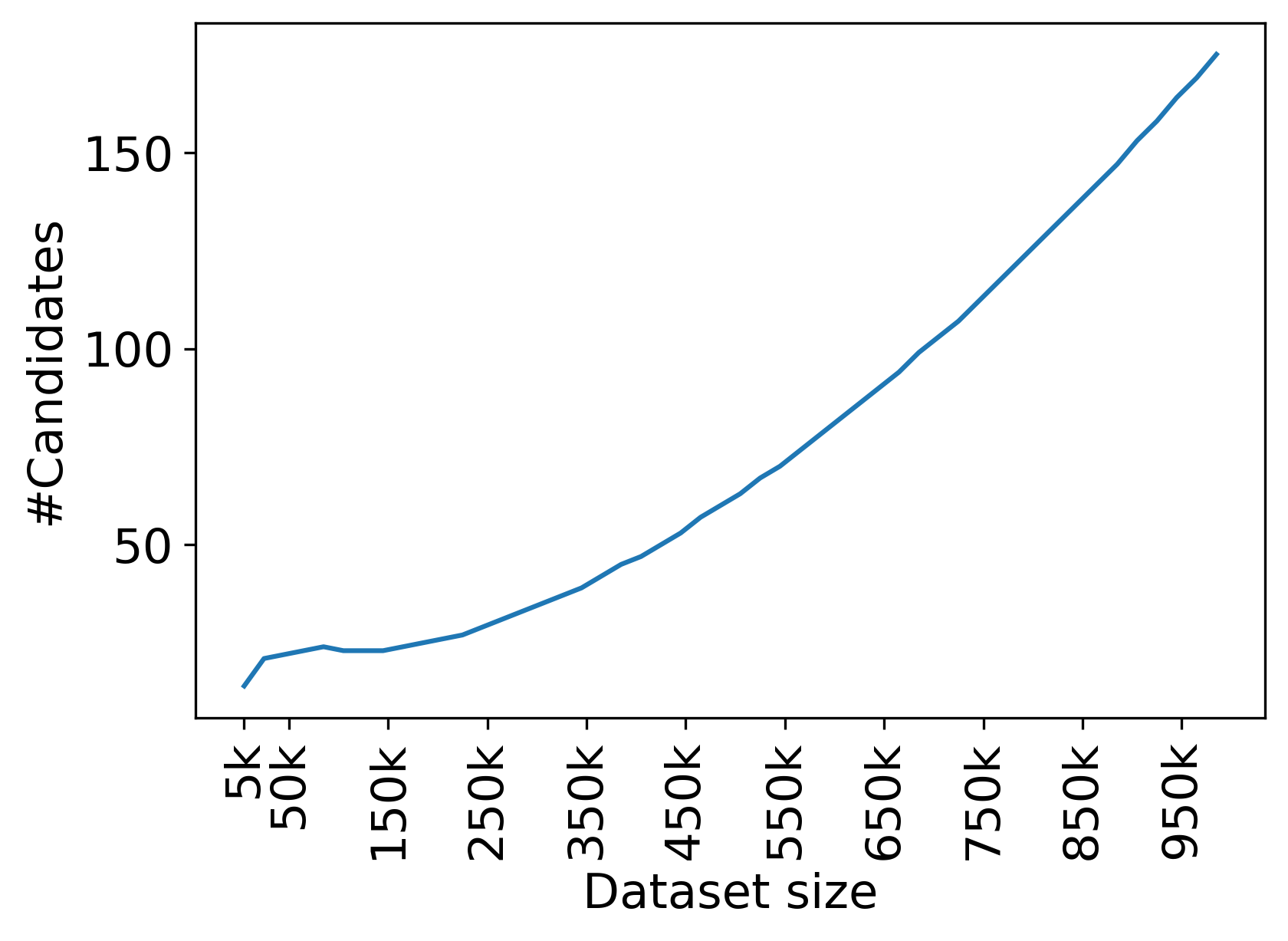}} \\
	\caption{
		Comparing the privacy cost of LT versus Moments Accountant. The minimal privacy overhead incurred by LT is at least \textasciitilde 5x, and increases with the dataset size (left).
	However, as we allow LT to sample and test more candidate hyperparameters, the privacy cost barely increases (middle).
	Moments Accountant is able to test a significant number of candidates at the same cost as the minimal privacy overhead of LT (right).	}
	\label{fig:LTvsMA}
	\vspace{-1em}
\end{figure*}

Due to the delicate balance of $\delta_f$ in Theorem~\ref{thm:lt}, one can see the $\delta_1$ comes out to be much smaller than $\delta_f$. This change in $\delta_1$ results in a blowup of $\epsilon_1$ and hence, the final privacy cost of the LT algorithm ($3\epsilon_1 + 3\sqrt{2\delta_1}$), is much larger than what it would have been for learning one candidate without LT. We call this increase the \textit{blowup} of privacy. We measure this blowup in Figure~\ref{fig:LTvsMA}(left), for the setting of $\sigma=4$, $L=250$, $T = 10,000$ with varying dataset sizes ($n$). It can be seen that for $n=5,000$, the blowup is 4.8x whereas for for $n=950,000$, the blowup is almost 7.3x (note the log scale on y-axis). Qualitatively similar trends persist for other choices of noise multiplier, lot size and iterations. We add more experiments to compare LT vs MA with varying candidate sizes in Appendix~\ref{sec:ltvsma_varyingcand}. 

Furthermore, we show that although LT entails a privacy blowup, decreasing $\gamma$ (corresponding to training more individual learners with  $\epsilon_1, \delta_1$) doesn't result in a significant difference in the final epsilon guaranteed by LT. In Figure~\ref{fig:LTvsMA}(middle), we show the final epsilon cost for different dataset size and varying values of $\gamma \in [0.001, 0.01, 0.1, 1]$. It is interesting to note here that with smaller $\gamma$ values, one can train  many candidates (in expectation, $\frac{1}{\gamma}$) for negligible additional privacy cost. The blowup to train $1$ candidate ($\gamma=1$) versus $1,000$ candidates ($\gamma = 0.001$) increases from $33$ to $39$ for $n=5,000$ and increases from $0.49$ to $0.69$, for $n=950,000$. This increase is minimal in comparison to advanced composition, which grows proportional to $\mathcal{O}(\sqrt{k})$. However, another resource at play is the total training time, proportional to $1/\gamma$ (i.e., the total number of candidates).
In summary, the LT algorithm is effective if an analyst has the privacy budget to afford the initial blowup, as the privacy cost of testing additional hyperparameters is insignificant. 

\paragraph{Tuning cost via MA}
We learnt from the previous section that, LT permits selection from a large pool of hyperparameters (depending on the $\gamma$ value) but incurs a constant privacy blowup. We compare LT with tuning using Moments Accountant (MA); for tuning via MA each hyperparameter candidate is trained by adding necessary Gaussian noise at each iteration, and the best hyperparameter candidate is selected at the end, MA is used as the composition mechanism for arriving at the final privacy cost of this process. 
We notice that with the same initial privacy blowup of the LT algorithm, MA is able to compose a considerable number of hyperparameter candidates. In Figure~\ref{fig:LTvsMA}(right), we show the number of candidates that can be composed using MA for the minimum privacy cost for running the LT algorithm ($\gamma = 1$), for the setting of $\sigma=4$, $L=250$, $T$ = $10,000$ and varying dataset size ($n$) on the x-axis. As the $T$ and $L$ is set constant, bigger $n$ values in this graph correspond to fewer epochs of training and hence, worse utility. Depending on dataset size, MA can compose $14$ candidates for $n=5000$ and up to $175$ candidates when $n=100000$. It is perhaps surprising how well a standard composition technique performs versus LT. This information can be highly valuable to a practitioner who has limited privacy budget. Qualitatively similar trends persist for other choices of batch size, noise multiplier, and iterations. 

From our experiments for both these tuning procedures, we conclude that while tuning with LT entails an initial privacy blowup, the additional privacy cost for trying more candidates (smaller $\gamma$) is minimal. Even though this has an additional computation cost, it can be appealing when an analyst wants to try numerous hyperparameters. On the other hand, for the same overall privacy cost, MA can be used to compose a significant number of hyperparameter candidates. Additionally, MA allows access to all intermediate learners, whereas LT allows access to only the final output parameters. In the sequel, this conclusion will be useful in making the naive MA approach a more appealing tool for some settings (e.g., tighter privacy budgets).

\section{Tuning DP Optimizers}\label{sec:tuningopt}
We detail aspects of tuning both non-adaptive and adaptive optimizers.
We start with tuning non-adaptive optimizers.
We theoretically and empirically demonstrate a connection between the learning rate and  clipping threshold. We also establish that  non-adaptive optimizers inevitably require searching over a large LR-clip grid to extract performant models. 
Adaptive optimizers forego this problem as they do not need to tune the hyperparameter dimension of learning rate.
However, they introduce other hyperparameters that have known good choices in the non-private setting, and later we empirically show that they are good candidates in the private setting.

\paragraph{Tuning DP non-adaptive optimizers
}\label{sec:LRClip-tuning}
While many hyperparameters are restricted due to computational and privacy/utility targets, the learning rate $\alpha$ and the clipping threshold $C$ have no a priori bounds. 
In what follows, we show an interplay between these parameters by first theoretically analyzing the convergence of DPSGD. We then explore an illustrative experiment which demonstrates their entanglement. In the following theorem, we derive a bound on the expected excess risk of DPSGD and while doing so, show that the optimal learning rate, $\alpha_{opt}$, is proportional to the inverse of $C$. 
The proof appears in Appendix~\ref{proof:dpsgd_conv}. 

\begin{theorem}
\label{thm:dpsgd_conv}
Let $f$ be a convex and $\beta$-smooth function, and let $x^* = \underset{x \in \mathcal{S}}{\argmin}  f(x)$. Let $x_0$ be an arbitrary point in $\mathcal{S}$, and $x_{t+1} = \Pi_{\mathcal{S}}(x_t - \alpha(g_t + z_t))$, 
where $g_t = \min(1, \frac{C}{{\|\nabla f(x)\|}^2}) \nabla f(x)$ and $z_t \sim \mathcal{N}(0, \sigma^2C^2)$ is the noise due to privacy. After $T$ iterations, the optimal learning rate is 
$  \alpha_{opt} = \frac{R}{CT\sqrt{1 + \sigma^2}}$, where $\E[f( \frac{1}{T} \sum_i^T x_{t}) - f(x^*)] \leq \frac{RC\sqrt{1 + \sigma^2}}{\sqrt{T}}$ and $R = \E[\|x_0 - x^*\|]$.
\end{theorem}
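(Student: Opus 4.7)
The plan is to adapt the classical projected-SGD convergence analysis, tracking how the clipping threshold $C$ and noise multiplier $\sigma$ enter as an effective stochastic-gradient second-moment bound. I would start from the non-expansiveness of $\Pi_{\mathcal{S}}$ to drop the projection and write
$$\|x_{t+1}-x^*\|^2 \;\leq\; \|x_t - \alpha(g_t+z_t) - x^*\|^2.$$
Expanding the square and taking expectation over $z_t$ (which is zero-mean and independent of $x_t$ and $g_t$) gives
$$\E[\|x_{t+1}-x^*\|^2] \;\leq\; \E[\|x_t-x^*\|^2] - 2\alpha\,\E[\langle g_t, x_t-x^*\rangle] + \alpha^2\bigl(\E[\|g_t\|^2] + \E[\|z_t\|^2]\bigr).$$
The clipping step guarantees $\|g_t\|\leq C$, so $\E[\|g_t\|^2]\leq C^2$, and the noise contributes $\E[\|z_t\|^2] = \sigma^2 C^2$ as written in the theorem, yielding an effective squared-gradient bound $C^2(1+\sigma^2)$.

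Next I would relate the cross term to the excess risk. Since $g_t$ is a non-negative scalar rescaling of $\nabla f(x_t)$, namely $g_t = \eta_t \nabla f(x_t)$ with $\eta_t = \min(1, C/\|\nabla f(x_t)\|_2) \in (0,1]$, convexity gives $\langle g_t, x_t - x^*\rangle \geq \eta_t (f(x_t)-f(x^*))$. Under the standard working assumption that $C$ is chosen large enough (and we exploit $\beta$-smoothness) so that clipping is inactive along the trajectory, $\eta_t = 1$ and rearranging yields
$$\E[f(x_t)-f(x^*)] \;\leq\; \tfrac{1}{2\alpha}\bigl(\E[\|x_t-x^*\|^2]-\E[\|x_{t+1}-x^*\|^2]\bigr) + \tfrac{\alpha C^2(1+\sigma^2)}{2}.$$
Summing from $t=0$ to $T-1$ telescopes the first group of terms, upper bounding them by $R^2/(2\alpha)$, and Jensen's inequality applied to the convex function $f$ lets me pass from the arithmetic mean of $\E[f(x_t)]$ to $\E[f(\bar x_T)]$ for $\bar x_T = \tfrac{1}{T}\sum_{t} x_t$, giving
$$\E[f(\bar x_T)-f(x^*)] \;\leq\; \tfrac{R^2}{2\alpha T} + \tfrac{\alpha C^2(1+\sigma^2)}{2}.$$
Balancing the two terms by setting the derivative in $\alpha$ to zero delivers $\alpha_{opt} \propto R/(C\sqrt{T(1+\sigma^2)})$ (the form stated in the theorem, with the relevant inverse dependence $\alpha_{opt}\propto 1/C$ that justifies the LR--clip coupling the paper wants to highlight), and substituting back produces the excess-risk bound $RC\sqrt{1+\sigma^2}/\sqrt{T}$.

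The main obstacle is the clipping factor $\eta_t$: when $\|\nabla f(x_t)\|_2 > C$ the per-step progress inequality is weakened by a multiplicative $\eta_t<1$, and carrying this through the telescoping sum would otherwise degrade the bound. The smoothness hypothesis in the statement is the natural lever here, since $\beta$-smoothness together with the bounded diameter of $\mathcal{S}$ (implicit in $R = \E[\|x_0 - x^*\|]$) controls $\|\nabla f(x_t)\|_2$ in terms of the current excess risk, so one can argue that after a burn-in phase clipping becomes inactive, or alternatively split the sum into ``clipped'' and ``unclipped'' iterations and absorb the former into lower-order terms. The qualitative conclusion $\alpha_{opt}\propto 1/C$ the paper needs, however, is already visible from the clipping-inactive derivation, so the technical handling of $\eta_t$ is more about achieving the clean rate than about the structural claim.
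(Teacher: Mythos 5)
Your proposal follows essentially the same route as the paper's proof: non-expansiveness of the projection, expansion of the squared distance, convexity for the cross term, the second-moment bound $C^2(1+\sigma^2)$, telescoping with Jensen's inequality, and optimizing over $\alpha$. You are in fact more careful than the paper, which silently applies convexity as though $g_t = \nabla f(x_t)$ and never addresses the clipping factor $\min(1, C/\|\nabla f(x_t)\|)$ that you correctly flag as the main technical obstacle.
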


\begin{figure}
    \begin{minipage}{0.48\textwidth}
    \centering
    \includegraphics[width=\linewidth]{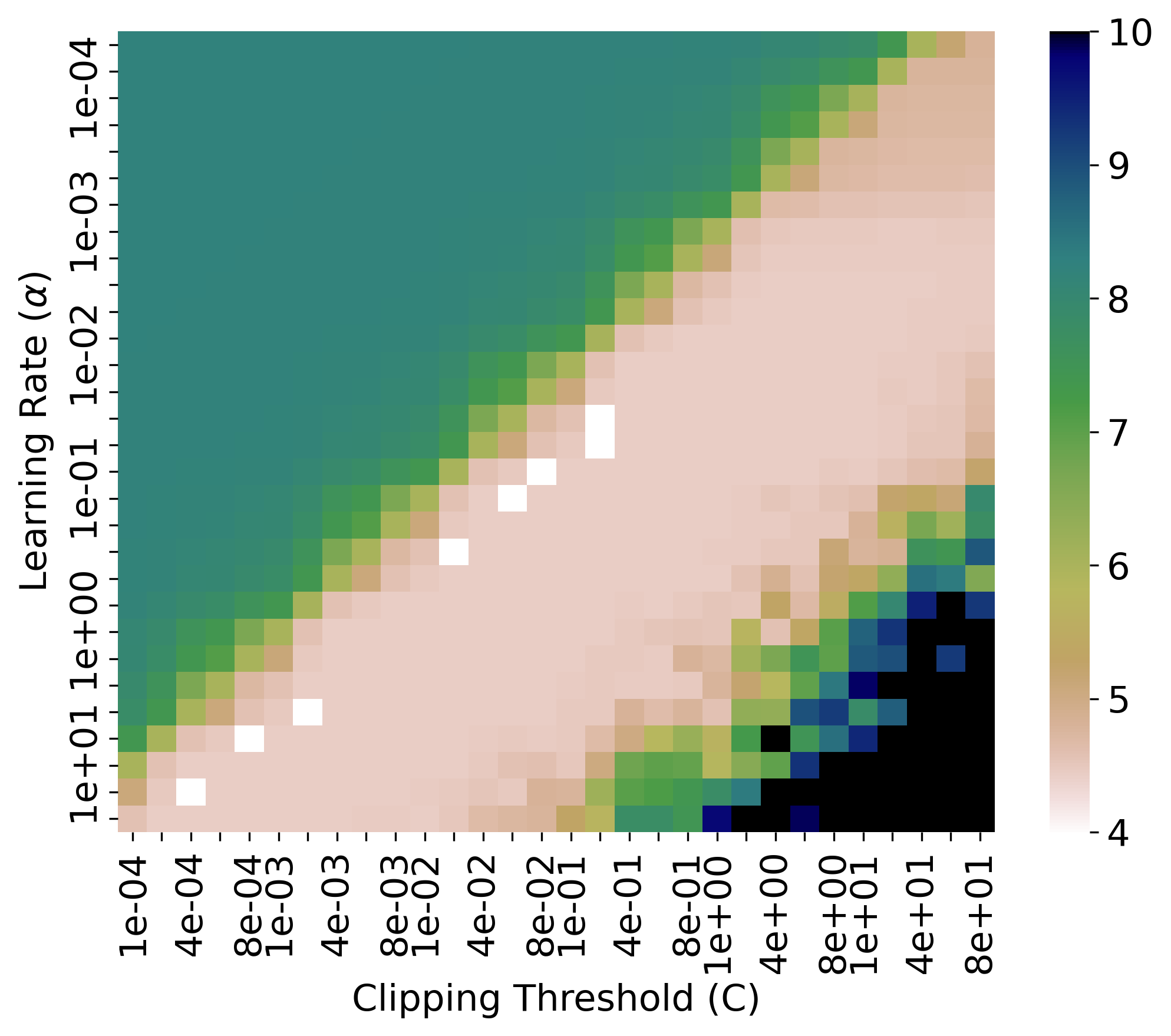}
    \caption{
       Log of training loss for simulation experiment at $\sigma=4$ on a synthetic dataset with DPSGD. The white pixels correspond to training losses below the 1st percentile. Note that most best loss values lie on a diagonal expressing the inverse connection between $\alpha$ and $C$.
    } \label{fig:sim_exp:dpsgd}
    \end{minipage}
    \hspace{3pt}
    \begin{minipage}{0.48\textwidth}
    \centering
    \includegraphics[width=\linewidth]{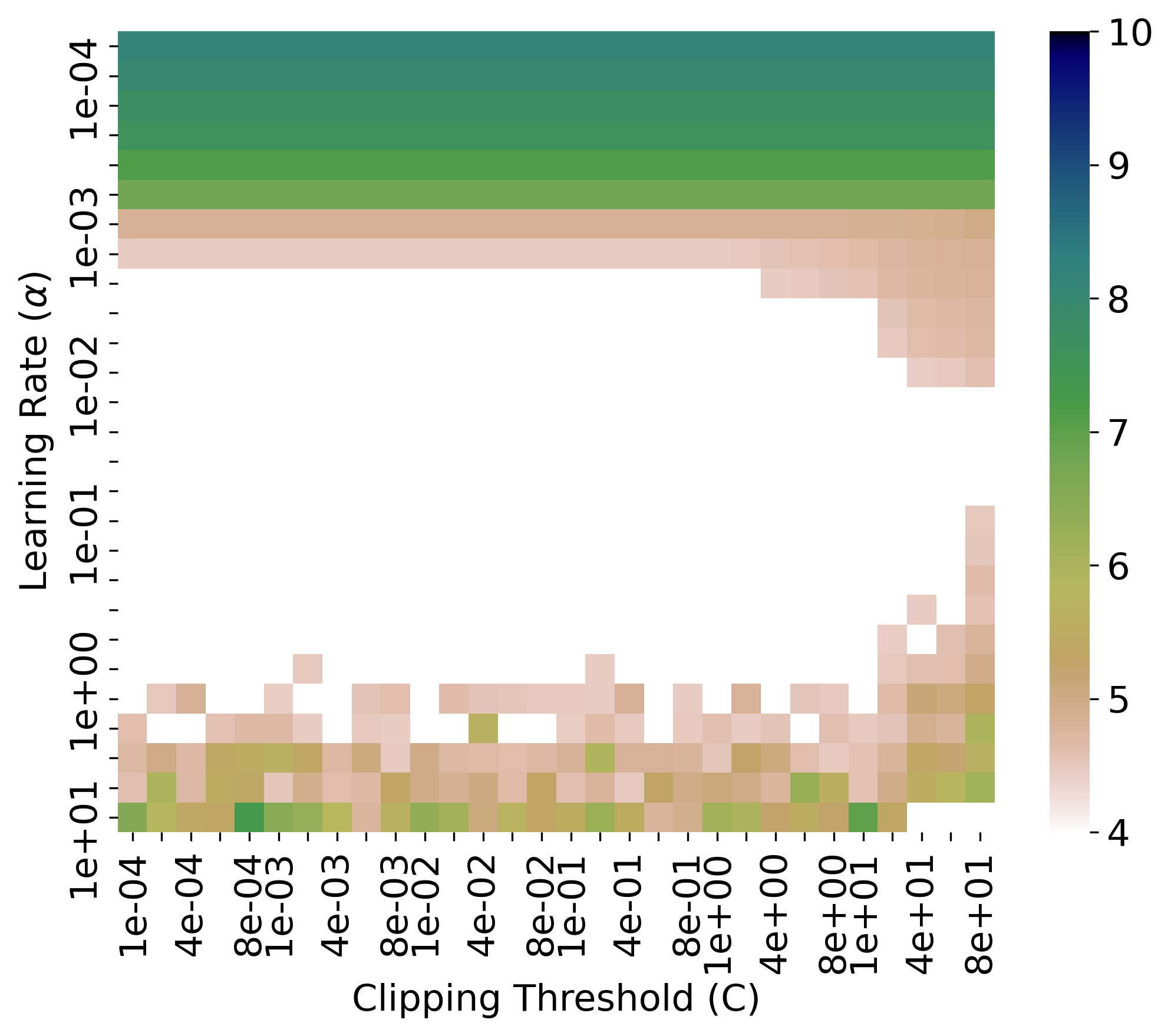}
    \caption{
       Log of training loss for simulation experiment at $\sigma=4$ on a synthetic dataset with DPAdam. The white pixels correspond to training loss lower than 1 percentile of DPSGD counterpart. Note that best loss values are dependent on $\alpha$ and spread over a wider range of $C$.
    } \label{fig:sim_exp:dpadam}
    \end{minipage}
\end{figure}

Though Theorem~\ref{thm:dpsgd_conv} gives a closed-form expression for the optimum learning rate for the convergence bound to hold, it is a function of the parameter $R$, which is unknown a priori to the analyst. Given constant $T$ and $\sigma$, the optimal learning rate $\alpha_{opt}$ is inversely proportional to the clipping norm $C$. 
This is crucial information in practice because these parameters vary among datasets and are unbounded.
This unboundedness property thus requires us to search over very large ranges of $C$ and $\alpha$ when we have no prior knowledge of the dataset.
It is natural to ask whether one can fix the clipping norm $C$ and search only over a wide range for the learning rate $\alpha$ (or vice versa). 
We explore this relationship experimentally via simulations on a synthetic dataset as well as on the ENRON dataset, showing that fixing one of these two hyperparameters may often but not always result in an optimal model.

We train a linear regression model on a $10$-dimensional synthetic dataset of input-label pairs $(x,y)$ sampled from a distribution $\mathcal{D}$ as follows: $x_1,\dots,x_d \sim \mathcal{U}(0,1), y = x\cdot w^* ,  w^* = 10 \cdot \boldsymbol{1}^d$. We use the initialization $w_0 = \boldsymbol{0}^d$ and train for 5000 iterations.
We set the initial weights of the model $w_0 = \boldsymbol{0}^d$, $d = 10$, $k = 10$ and train this linear regression model for $100$ iterations. 
In the non-private setting, this model converges quickly with any reasonable learning rate, but in the private setting, we notice that the training loss depends heavily on the choice of  $\alpha$ and $C$.  
Figure~\ref{fig:sim_exp:dpsgd} shows a heat map for the log training loss when trained on ($\alpha$,$C$) pairs taken from a large grid consisting of $[1,2,4,5,8]$ at scales of $[10^{-4}, 10^{-3}, 10^{-2}, 10^{-1}, 10^0, 10^1]$. 
The best candidates (lowest 1 percentile of loss values) we demarcate with white pixels.

We observe two fundamental phenomena from this figure. First, to achieve the best accuracy, $\alpha$ and $C$ need to be tuned on a large grid spanning several orders of magnitude for each of these parameters. Second, multiple ($\alpha$,$C$) pairs achieve the best accuracy and all lie on the same diagonal, validating our theory for an inverse relation between learning rate and clipping norm. As mentioned earlier, one might hypothesize that by setting the clipping norm $C$ constant and tuning $\alpha$ (corresponding to a vertical line in Figure~\ref{fig:sim_exp_enron:dpsgd}) or vice versa, one could eliminate tuning a hyperparameter. However, note that not all $C$ and $\alpha$ values correspond to the lowest loss. This phenomenon is evident by noticing that not all vertical or horizontal lines 
on this figure have white pixels. This happens, for example, at the extremes (e.g., at the top-right corner), but also for several intermediate and standard choices (e.g., $C = 0.1$ or $0.2$).
Again, the analyst has no way of knowing this a priori.

\begin{figure}
    \begin{minipage}{0.48\textwidth}
    \includegraphics[width=\linewidth]{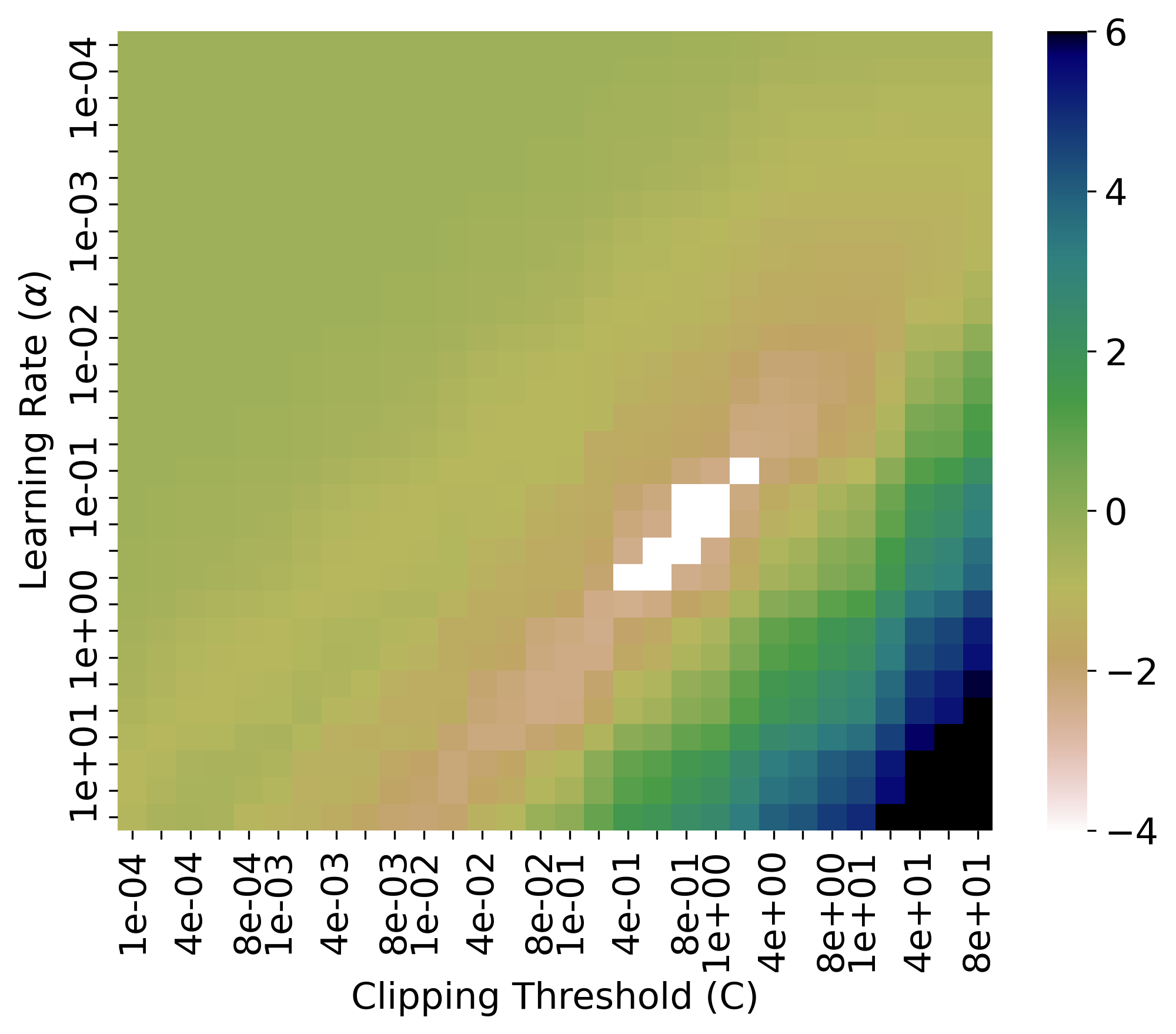}
    \caption{
       Log of training loss for simulation experiment at $\sigma=4$ on a synthetic dataset with DPSGD. The white pixels correspond to training losses below the 1st percentile. Note that most best loss values lie on a diagonal expressing the inverse connection between $\alpha$ and $C$.
    } \label{fig:sim_exp_enron:dpsgd}
    \end{minipage}
    \hspace{3pt}
    \begin{minipage}{0.48\textwidth}
    \includegraphics[width=\linewidth]{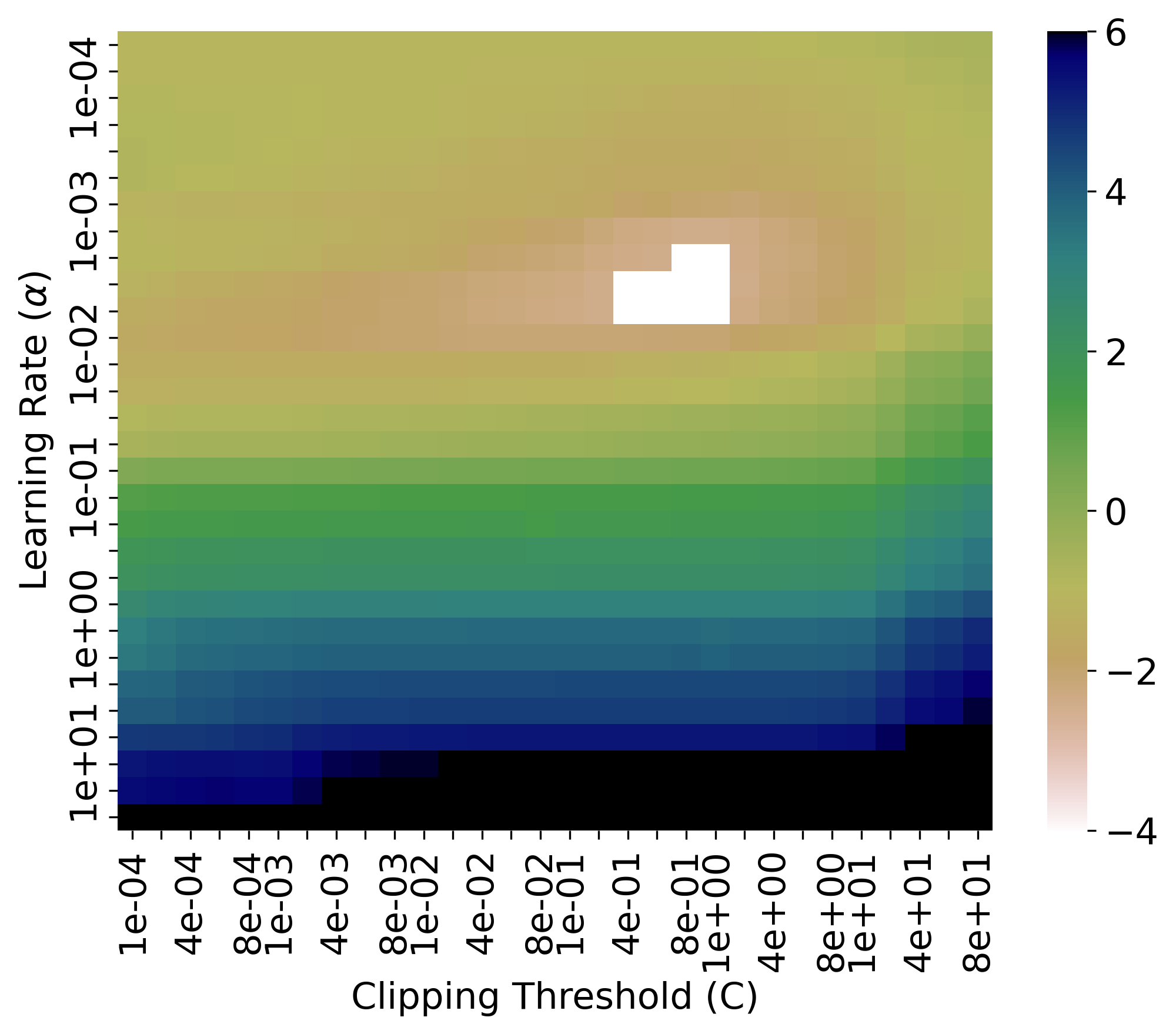}
    \caption{
       Log of training loss for simulation experiment at $\sigma=4$ on a synthetic dataset with DPAdam. The white pixels correspond to training loss lower than 1 percentile of DPSGD counterpart. Note that best loss values are dependent on $\alpha$ and spread over a wider range of $C$.
    } \label{fig:sim_exp_enron:dpadam}
    \end{minipage}
\end{figure}

In Figure~\ref{fig:sim_exp:dpadam}, we detail the results for the same simulation experiment with DPAdam (with default Adam hyperparameters) as the underlying optimizer.
Here we notice that the inverse relation between $\alpha$ and $C$ no longer hold as we intuited earlier.
In order to be able to compare these two figures, we mark all candidates with lower losses than that of the lowest 1 percentile of  Figure~\ref{fig:sim_exp:dpsgd} with white in Figure~\ref{fig:sim_exp:dpadam}.
Figure~\ref{fig:sim_exp:dpadam} suggests that there is a small range of ideal choices for the initial learning rate $\alpha$, and within these good choices of the $\alpha$, DPAdam is significantly robust to the choice of clip value, as we see white pixels for a large range of clip values that encompass clip values used in practice.
We repeat the same experiment over the ENRON dataset instead of our synthetic dataset, and observe similar trends as seen in Figure~\ref{fig:sim_exp_enron:dpsgd} and Figure~\ref{fig:sim_exp_enron:dpadam}. We conclude that to privately tune non-adaptive optimizers, we require a larger grid of hyperparameter options than their adaptive counterparts.

\paragraph{Tuning DP adaptive optimizers}\label{sec:tuningaopt}
Adaptive optimizers automatically adapt over the learning rate $\alpha$, requiring us to tune only over the clipping norm $C$.
But recall our key question: can we train models that perform competitively with the fine-tuned counterparts from DPSGD?

Adam~\cite{kingma2014adam}, the canonical adaptive optimizer introduces two new hyperparameters, which are the first and second moment exponential decay parameters ($\beta_1$ and $\beta_2$). In the non-private setting, these parameters are relatively insensitive, and default values of $\alpha=0.001, \beta_1=0.9$, and $\beta_2=0.999$ are recommended based on empirical findings, requiring no additional tuning for this hyperparameter triple.
Hence before we compare DPAdam and DPSGD, we first find and establish such recommended values for this hyperparameter triple in the DP setting next, and then show that DPAdam with a small hyperparameter space performs competitively with DPSGD in Section~\ref{sec:exp2}. 

\begin{figure*}[t!]
	\centering
	\subfigure[$\sigma = 2$]{\label{fig:exp0:sigma2}\includegraphics[width=0.32\linewidth]{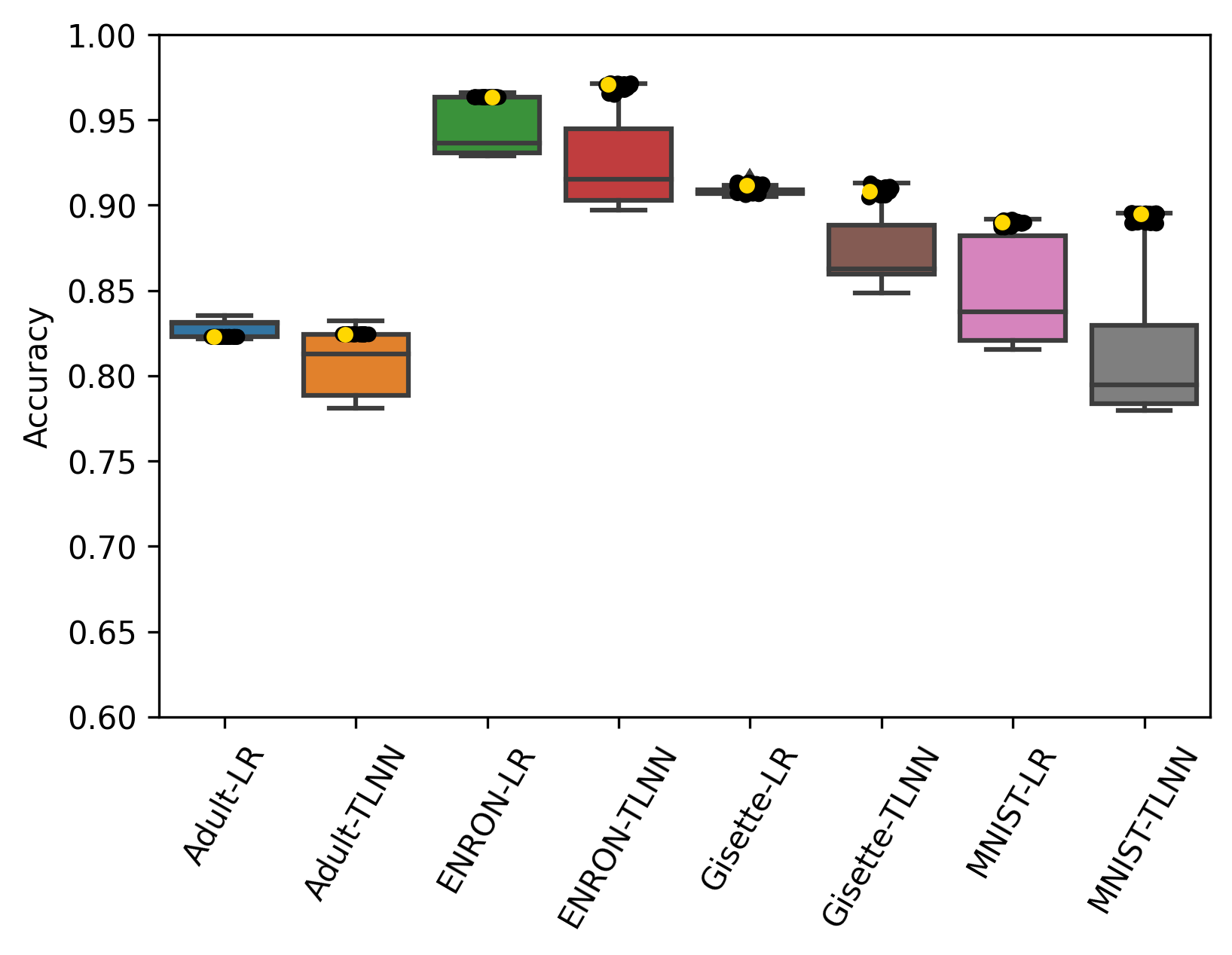}}
	\subfigure[$\sigma = 4$]{\label{fig:exp0:sigma4}\includegraphics[width=0.32\linewidth]{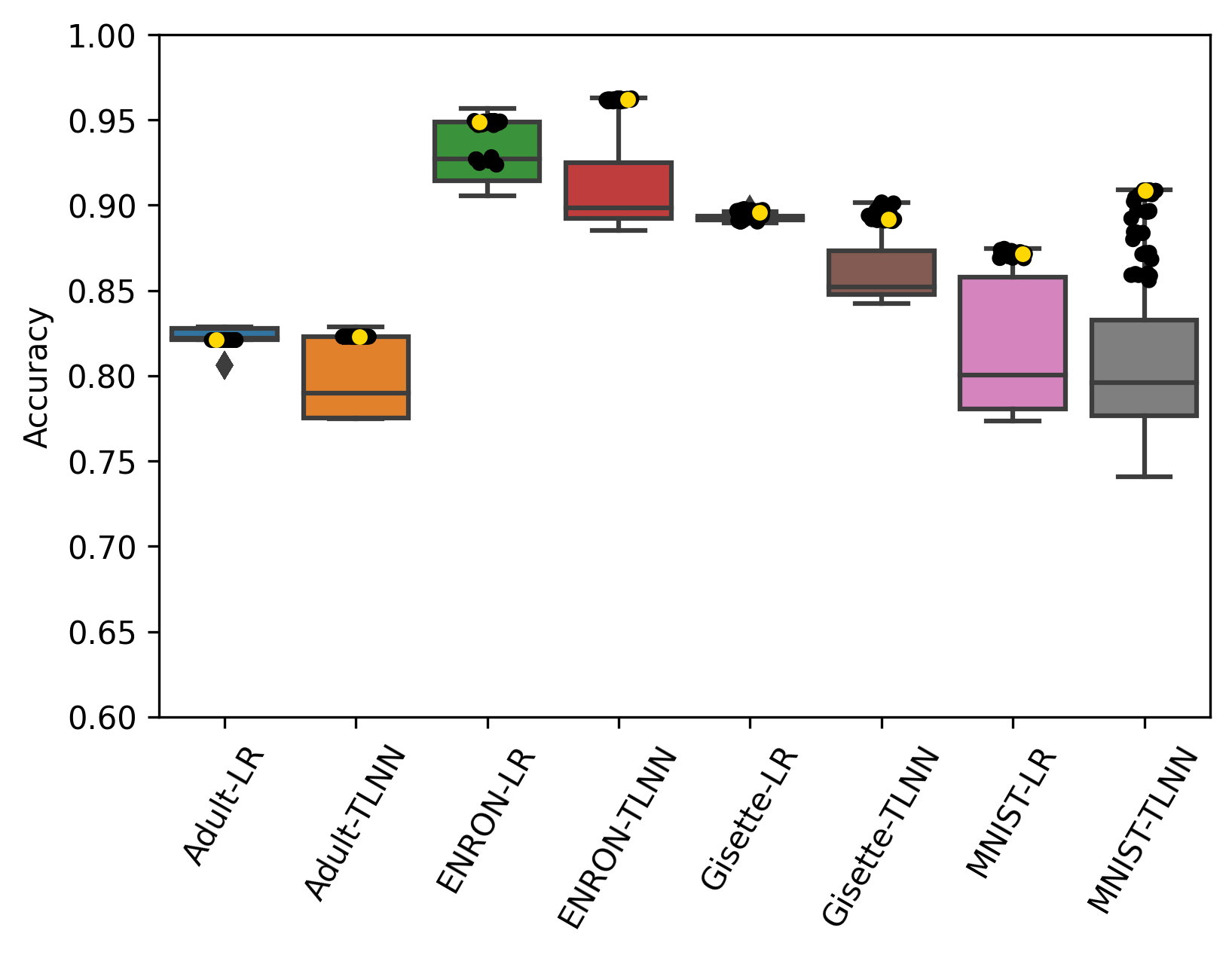}}
	\subfigure[$\sigma = 8$]{\label{fig:exp0:sigma8}\includegraphics[width=0.32\linewidth]{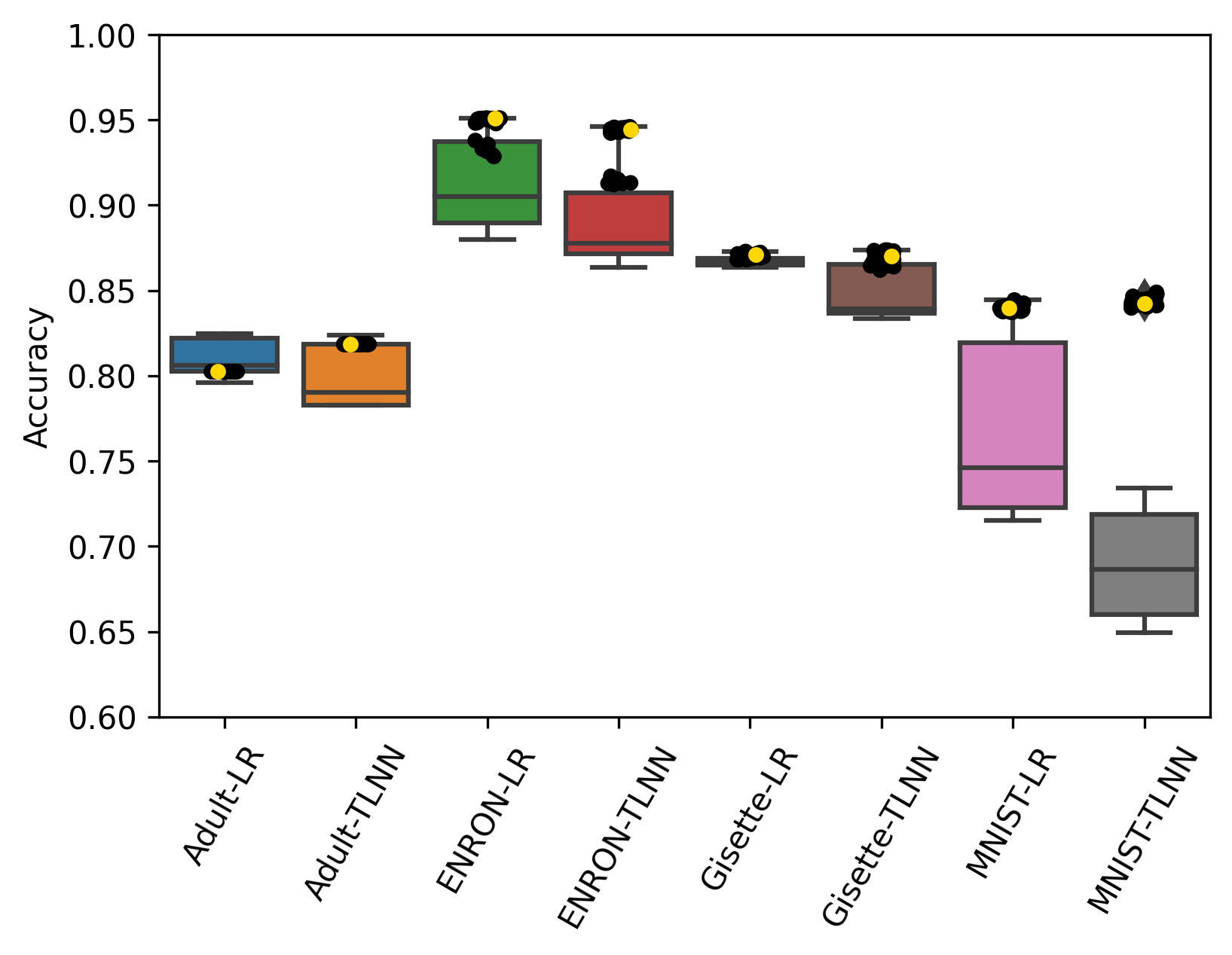}} \\
	\caption{Ranking hyperparameter candidates across datasets. The black points correspond to the candidates with $\alpha=0.001$ (with all permutations of $\beta_1, \beta_2$ from our searchgrid); the gold corresponds to the candidate with $\alpha=0.001, \beta_1=0.9, \beta_2=0.999$ }
	\label{fig:exp0}
	  \vspace{-1.5em}
\end{figure*}


\begin{table}[ht]
\begin{center}
	\caption{Datasets used in experiment}
	\begin{tabular}{|l|c|c|c|c|}
		\hline
		Dataset &  Type & \#Samples & \#Dims & \#Classes \\
		\hline
		MNIST & Image & 70000 & 784 & 10\\
		Gisette & Image & 6000 & 5000 & 2\\
		Adult & Structured & 45222 & 202 & 2 \\
		ENRON & Structured & 5172 & 5512 & 2\\
		\hline
	\end{tabular}\\

\label{tab:dataset}
\end{center}
\end{table}

To establish default choices of $\alpha$, $\beta_1$, and $\beta_2$ for DPAdam, we evaluate this private optimizer over four diverse datasets Table~\ref{tab:dataset}, and two learning models including logistic regression and a neural network with one 100 neurons hidden layer (TLNN). These selected datasets include both low-dimensional data (where the number of samples greatly outnumbers the dimensionality) and high-dimensional data (where the number of samples and dimensionality are at same scale).
Since we still have a large hyperparameter space to tune over, for the rest of this work, we fix a constant lot size ($L = 250$), and consider tuning over three different noise levels, $\sigma \in [2,4,8]$, so that we can study the effects of tuning the other hyperparameters more thoroughly. All experiments are repeated three times and averaged before reporting. Additionally, in this particular experiment since we focus on $\alpha, \beta_1,$ and $\beta_2$, we also fix the clipping threshold $C=0.5$, and $T=2500$ iterations of training. 
For each dataset and model, we run DPAdam three times with hyperparameters $(\alpha,\beta_1,\beta_2)$ from the grids, $\alpha \in [ 0.001, 0.05, 0.01, 0.2, 0.5]$,  $\beta_1, \beta_2 \in [ 0.8, 0.85, 0.9, 0.95, 0.99. 0.999]$. 



We show that the default hyperparameter choice ($\alpha,\beta_1,\beta_2$) of Adam in the non-private setting also works well for DPAdam. Figure~\ref{fig:exp0} shows the boxplots of testing accuracies of DPAdam over the different hyperparameter choices. When $\alpha$ is $0.001$ (same as in the non-private setting), all the datasets and models have final testing accuracies (marked in black) close to the best possible (and in most cases it is in fact the best) accuracy. 
Furthermore, we also highlight the accuracy of the suggested default choice ($\alpha=0.001, \beta_1=0.9, \beta_2=0.999$) using gold dots.
Hence, for the ease of using DPAdam, we suggest the non-private default values for these parameters in the private setting as well and hence in all our subsequent experiments. 

\section{Advantages of tuning using DPAdam}\label{sec:exp2}

\begin{figure*}[t!]
	\centering
	\subfigure{\label{fig:exp1:sfiga}\includegraphics[width=0.24\linewidth]{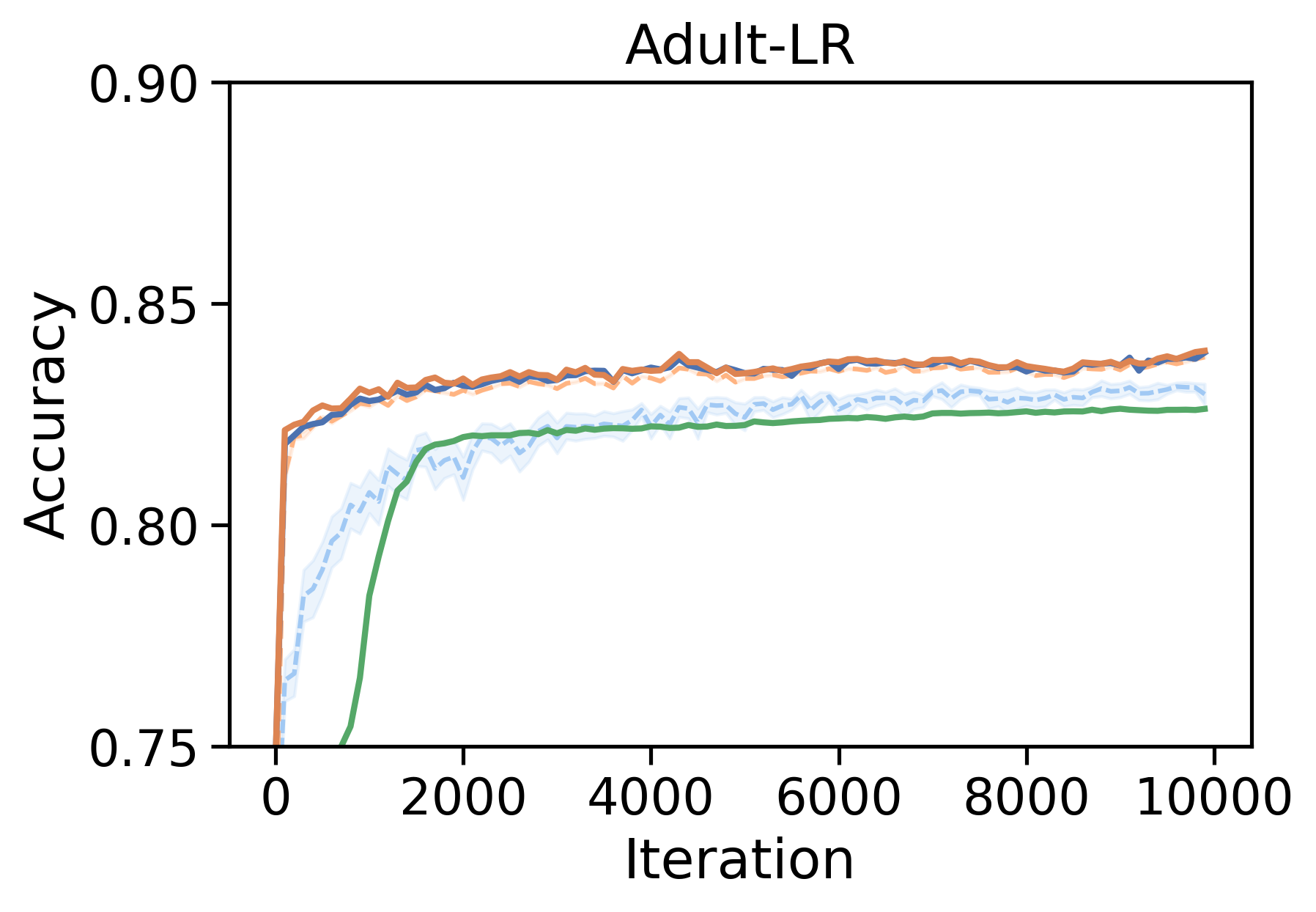}}
	\subfigure{\label{fig:exp1:sfigb}\includegraphics[width=0.24\linewidth]{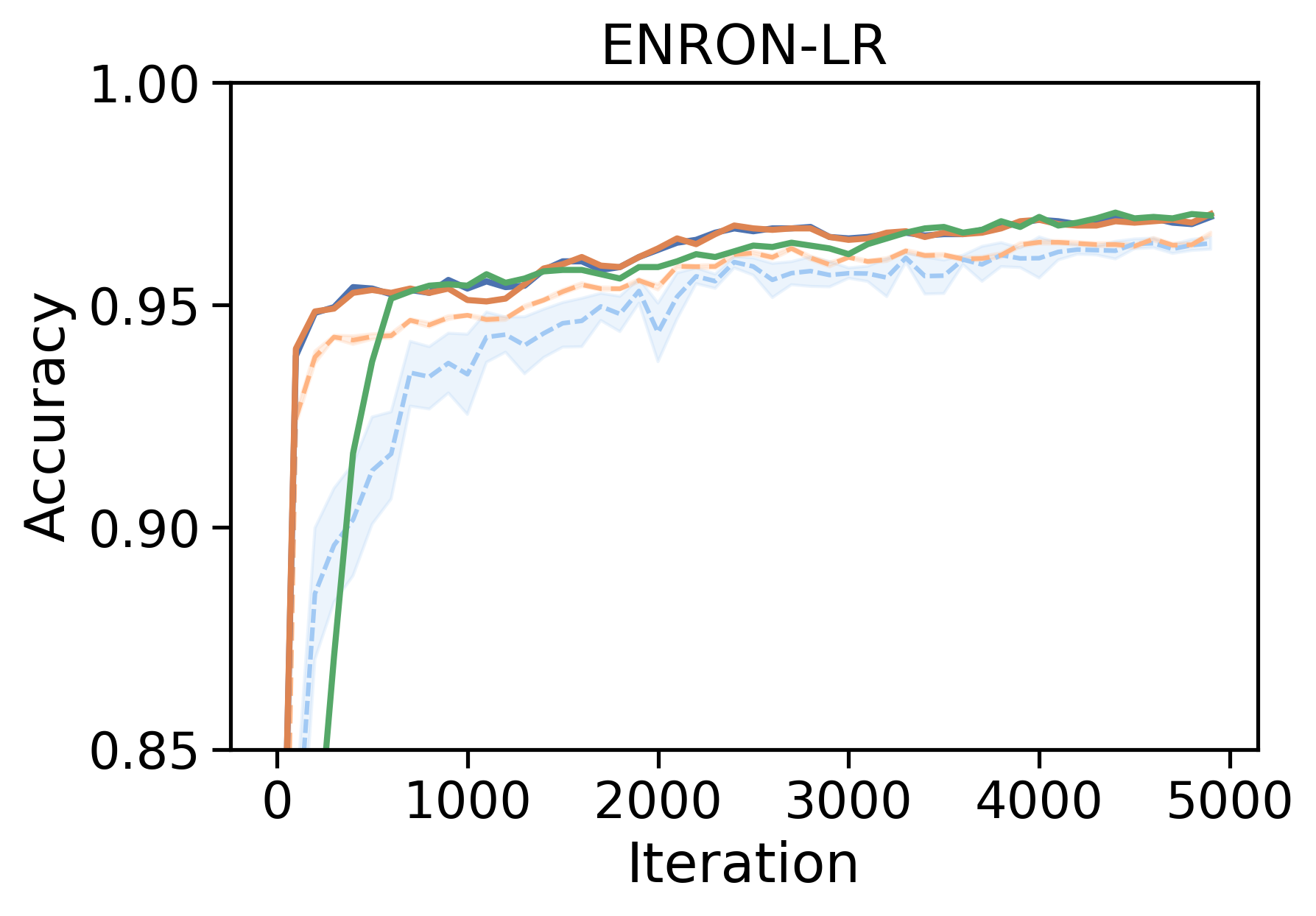}}
	\subfigure{\label{fig:exp1:sfigc}\includegraphics[width=0.24\linewidth]{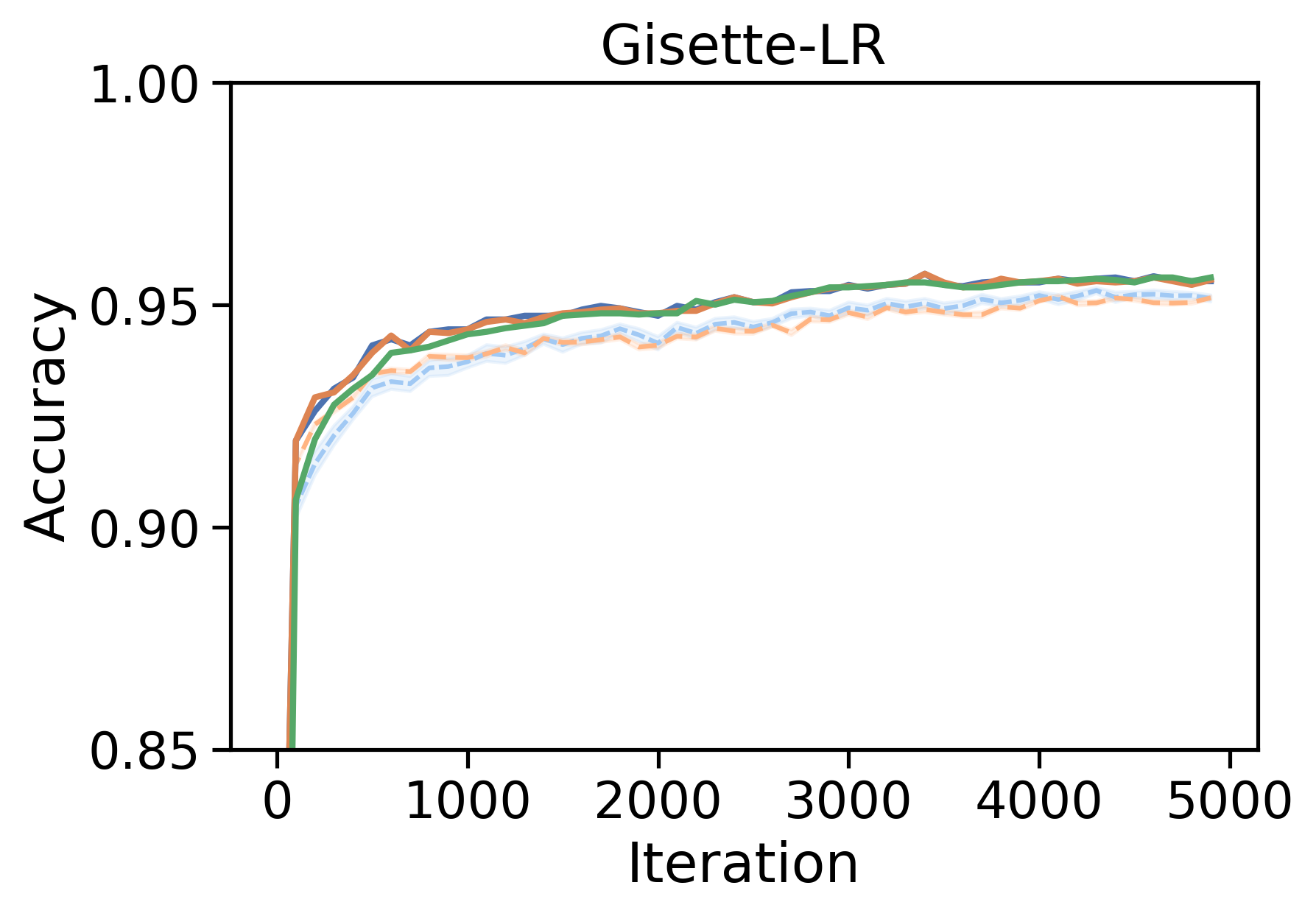}}
	\subfigure{\label{fig:exp1:sfigd}\includegraphics[width=0.24\linewidth]{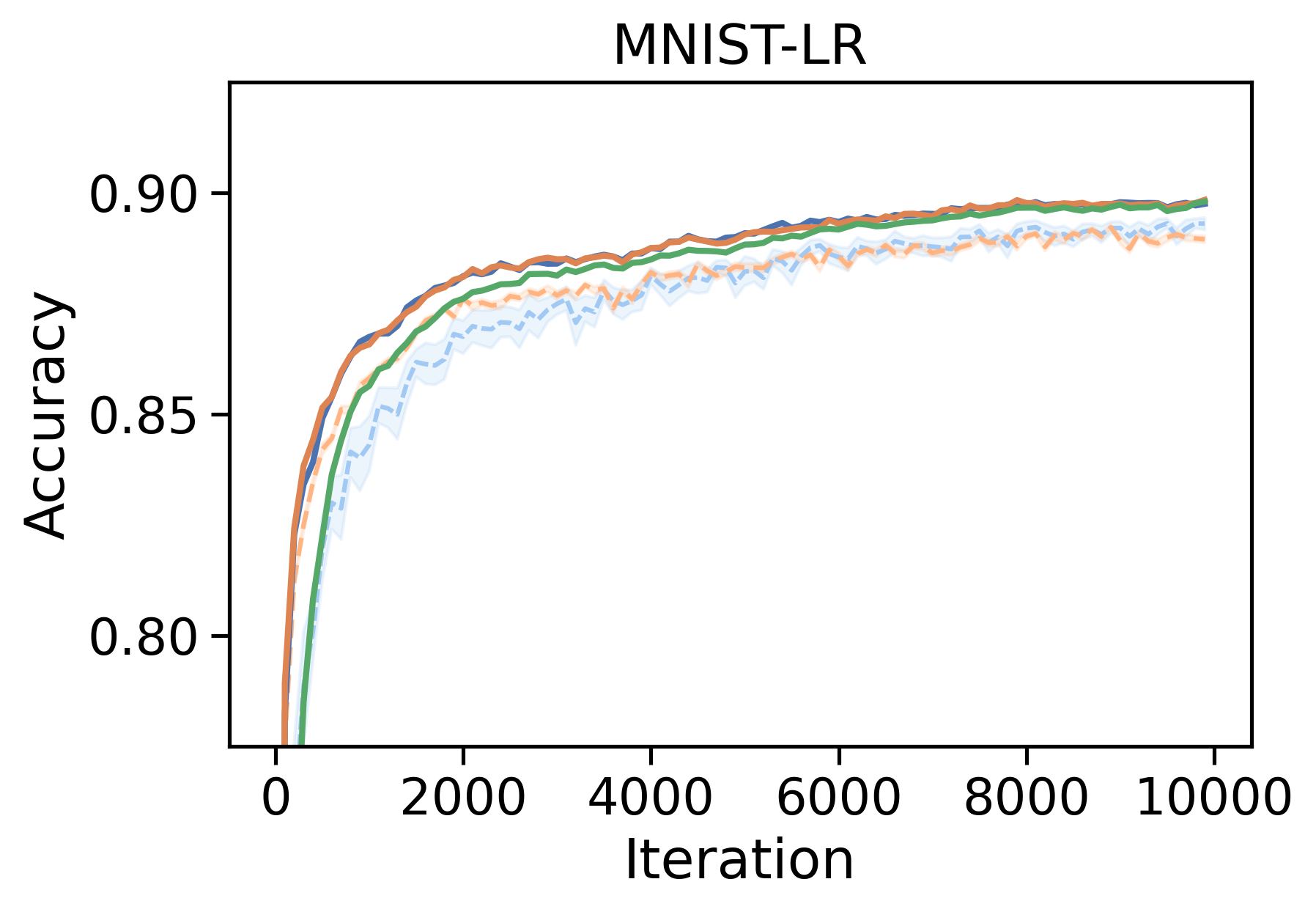}} \\
	
	\subfigure{\label{fig:exp1:sfige}\includegraphics[width=0.24\linewidth]{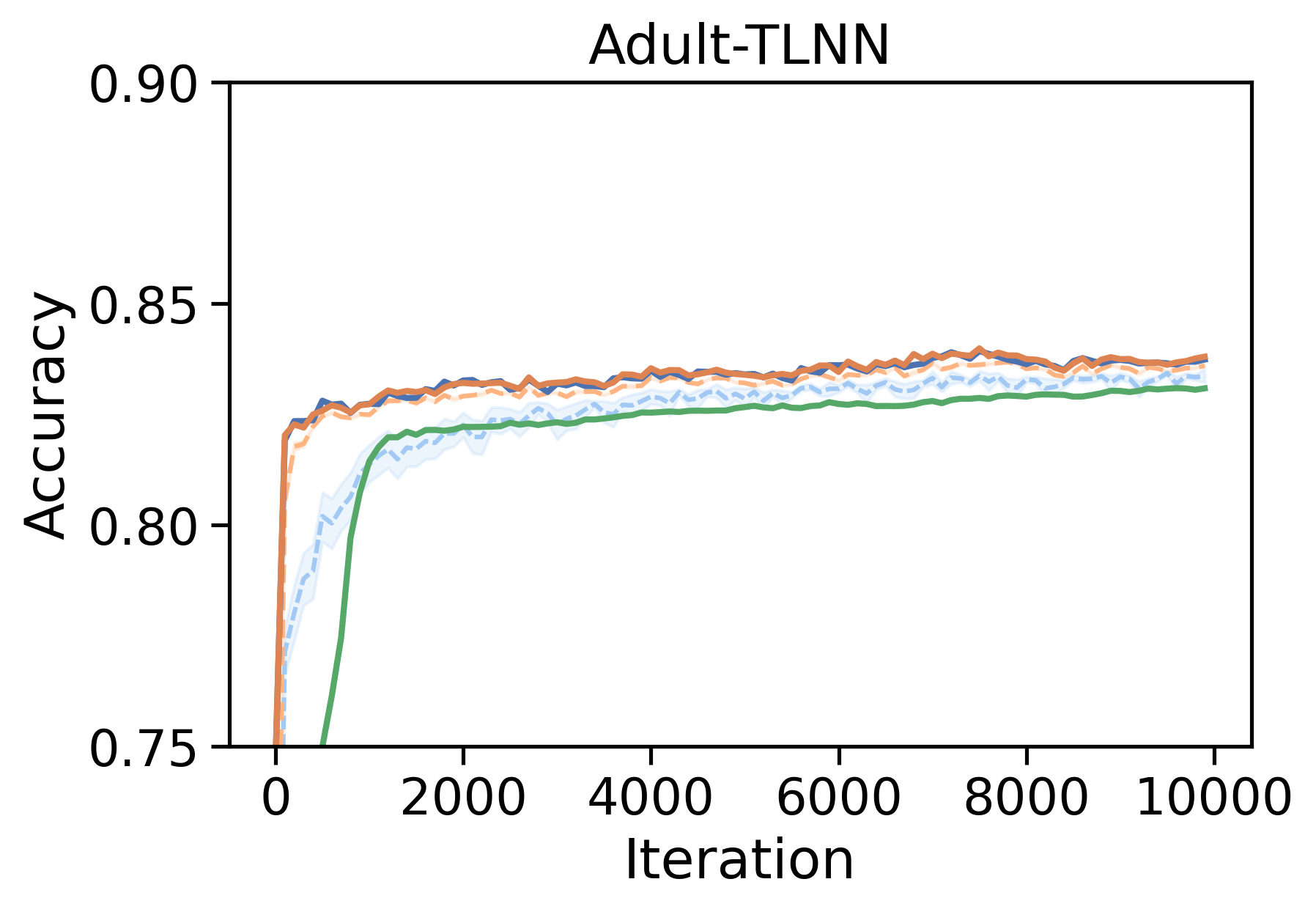}}
	\subfigure{\label{fig:exp1:sfigf}\includegraphics[width=0.24\linewidth]{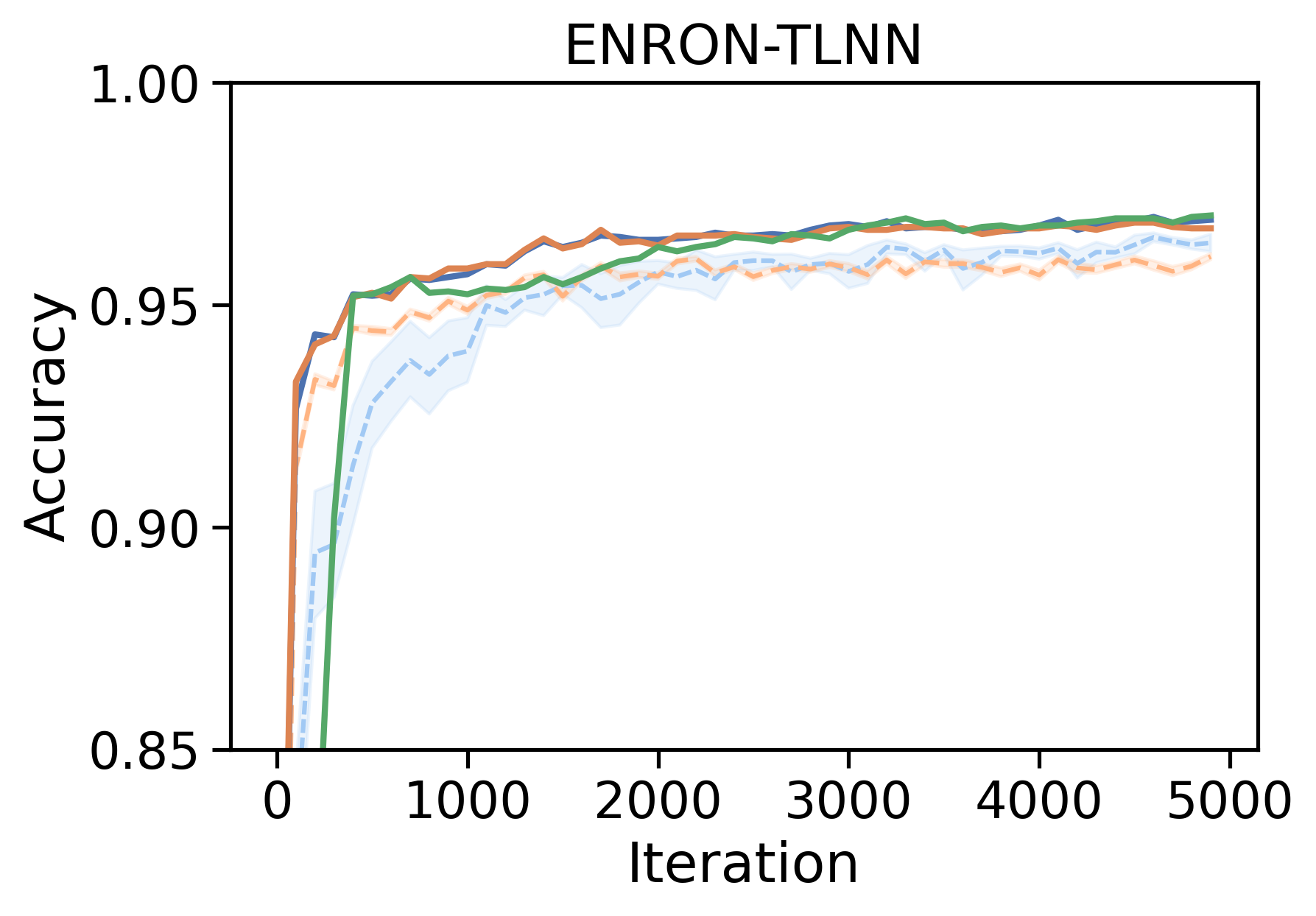}}
	\subfigure{\label{fig:exp1:sfigg}\includegraphics[width=0.24\linewidth]{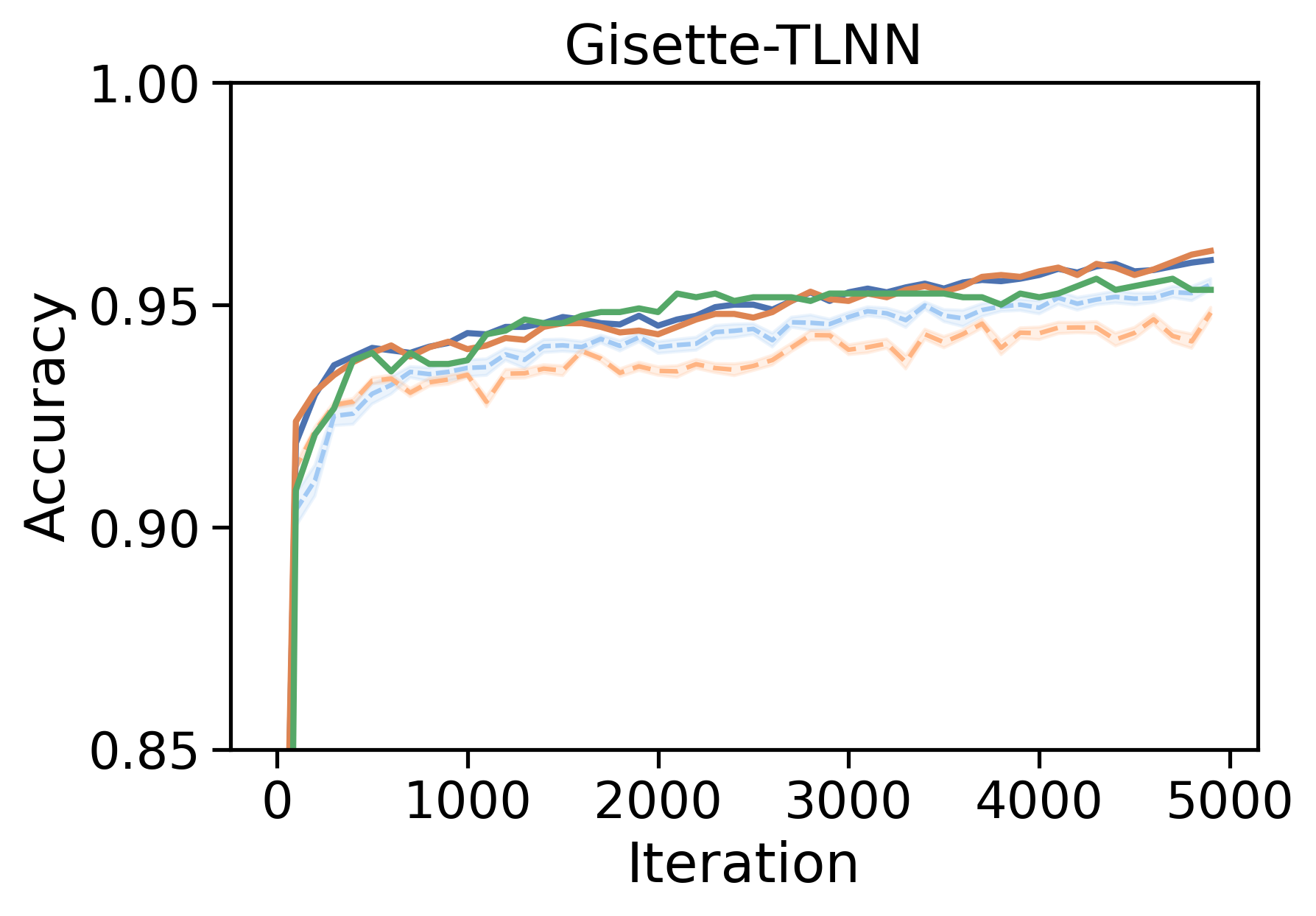}}
	\subfigure{\label{fig1:exp1:sfigh}\includegraphics[width=0.24\linewidth]{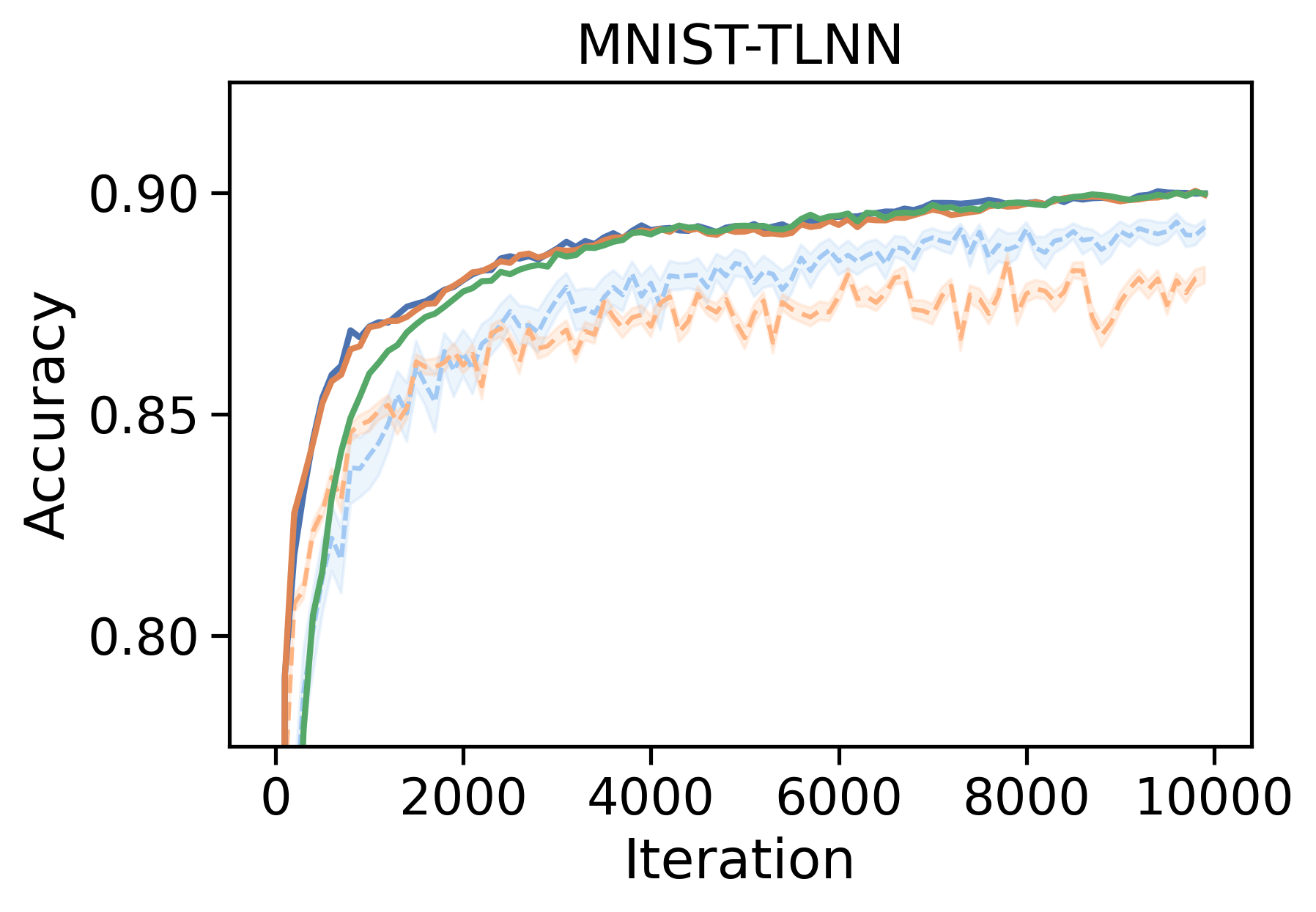}}\\
	
	\subfigure{\includegraphics[width=0.8\linewidth]{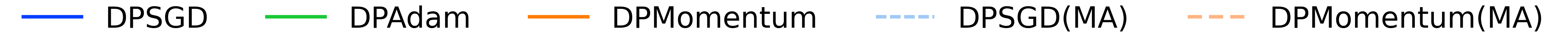}}
	\caption{Comparing the testing accuracy curves of DPAdam, DPSGD and DPMomentum models across their hyperparameter tuning grids with $\sigma=4$. The limits for y-axis are adjusted based on the dataset while maintaining a 15\% range for all.  
	} 
	\label{fig:exp1}
	\vspace{-1em}
\end{figure*}


In the non-private setting, adaptive optimizers like Adam enjoy a smaller hyperparameter tuning space than SGD.
We ask two questions in this section. 
First, can DPAdam (with little tuning) achieve accuracy comparable to a well-tuned DPSGD? 
Second, what is the privacy-accuracy tradeoff one incurs when using either of the two methods we detail in Section~\ref{sec:tuningcost} for hyperparameter selection.

\begin{table}[ht]
	\begin{center}
	\caption{Parameter grid for comparing DPSGD and DPAdam}
	\begin{tabular}{|c|c|l|}
		\hline
		Optimizer & Parameter &  Values\\
		\hline
	    \multirow{3}{*}{DPSGD} & \multirow{2}{*}{$\alpha$} & 0.001, 0.002, 0.005, 0.01, \\
		&& 0.02, 0.05, 0.1, 0.2, 0.5, 1\\
		&$C$ & 0.1, 0.2, 0.5, 1\\
		\hline
		\multirow{4}{*}{DPMomentum} & \multirow{2}{*}{$\alpha$} & 0.001, 0.002, 0.005, 0.01, \\
		&& 0.02, 0.05, 0.1, 0.2, 0.5, 1\\
		&$C$ & 0.1, 0.2, 0.5, 1\\
		&$m$ & 0.5, 0.6, 0.7, 0.8, 0.9, 0.95, 0.99\\
		\hline
		{DPAdam} & $C$ & 0.1, 0.2, 0.5, 1\\
		\hline 
	\end{tabular}\\
\label{tab:E1_params}
\end{center}
\end{table}

To answer both questions, we compare DPAdam and DPSGD over the same set of datasets and models from the previous section.
We report the accuracy of DPSGD with a range of learning rates and clipping values shown in Table~\ref{tab:E1_params}, and the testing accuracy of DPAdam with default parameter choice from Section~\ref{sec:tuningaopt} ($\alpha=0.001, \beta_1=0.9$, $\beta_2=0.999$) and a range of clipping values $C$ in Table~\ref{tab:E1_params}.
In total, DPSGD has 40 candidates to tune over, and DPAdam has 4.
This is because we have shown in Section~\ref{sec:LRClip-tuning} that DPSGD needs a wide grid to obtain the best accuracy when data distributions are unknown.
Additionally, we also consider the DPMomentum optimizer. 
Similar to how we searched for default tuning choices for DPAdam in Section~\ref{sec:tuningaopt}, we investigate if there exists a qualitatively good choice for the momentum hyperparameter, and unfortunately our results show that there is no such choice.

In order to show the comparison from both sides of the privacy-accuracy tradeoff, we compare the three optimizers through i) the privacy cost when extracting the best accuracy from these optimizers, and ii) the accuracy one would obtain from them under the tight privacy constraints. 

\paragraph{Prioritizing Accuracy} For brevity, we show experiments for $\sigma = 4$ in Figure~\ref{fig:exp1}, results for other values of $\sigma$ are displayed in Appendix~\ref{app:add_exps}.
For each dataset and model, we train three times for each hyperparameter candidate and report the max every 100 iterations, corresponding to the dark lines for each optimizer. We note that their maxima are extremely similar.
However, Table~\ref{tab:E1_eps} shows the final privacy costs incurred by each of these max accuracy lines, and reflects our claims from Section~\ref{sec:ltselect} that using fewer hyperparameter candidates and composing privacy via MA gives a much tighter privacy guarantee.

\begin{table}[ht]
    \centering
    	\begin{tabular}{|l|c|c|c|}
		\hline
		Dataset & DPSGD & DPMomentum & DPAdam \\
		\hline
		Adult & 5.01 & 5.23 & 1.91\\
		\hline
		ENRON & 30.86 & 32.31 & 12.80\\
		\hline
		Gisette & 26.40 & 27.64 & 10.76\\
		\hline
		MNIST & 3.01 & 3.14 & 1.14\\
		\hline
	\end{tabular}
    \caption{Final $\epsilon$ (at $\delta=10^{-6}$) for optimizers for the LR Models (Figure~\ref{fig:exp1}). 
	DPSGD and DPMomentum use LT for privacy accounting; DPAdam uses MA.}
    \label{tab:E1_eps}
\end{table}

\paragraph{Prioritizing Privacy} Additionally in Figure~\ref{fig:exp1}, DPSGD and DPMomentum have pastel dotted lines corresponding to their mean accuracy attained using the MA composition that provides the tightest privacy guarantees for DPAdam. 
These pastel lines are the mean accuracies (with 95\% CI) from 100 repetitions of this experiment. 
Since DPAdam has only 4 hyperparameter candidates, for this experiment, we sample 4 of the candidates at random for DPSGD and DPMomentum so that they all incur the same privacy cost.
Since the candidate pool is significantly larger for DPSGD and DPMomentum, we additionally scrutinize the parameter grid for them and prune the learning rates that perform poorly.
Our pruning process (detailed in the supplement) is quite generous, and favours minimizing the hyperparameter space of DPSGD and DPMomentum as much possible.\footnote{Note, pruning itself is of course unfair; the intent was to design a DP optimizer that can be used on any data distributions that we have no prior knowledge of. To do so with DPSGD one would have to consider a significantly wide range of $(\alpha,C)$ pairs to cover `good' candidates as we illustrated in Section~\ref{sec:LRClip-tuning}}
Despite the pruning advantage we see that these optimizers perform subpar than DPAdam when constrained with privacy.

\begin{figure*}[ht!]
	\centering
	\subfigure{\label{fig:exp2:sfiga}\includegraphics[width=0.24\linewidth]{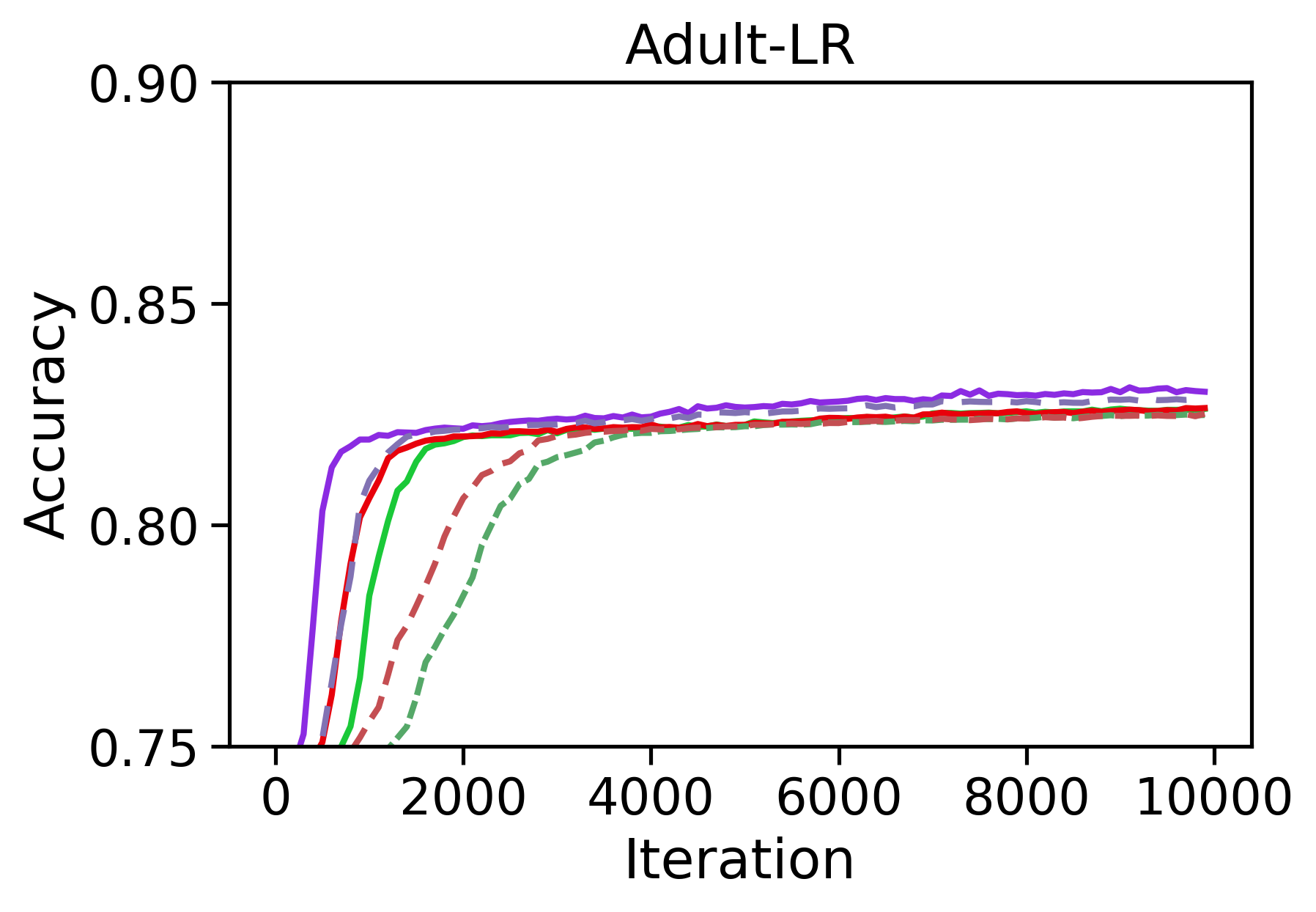}}
	\subfigure{\label{fig:exp2:sfigb}\includegraphics[width=0.24\linewidth]{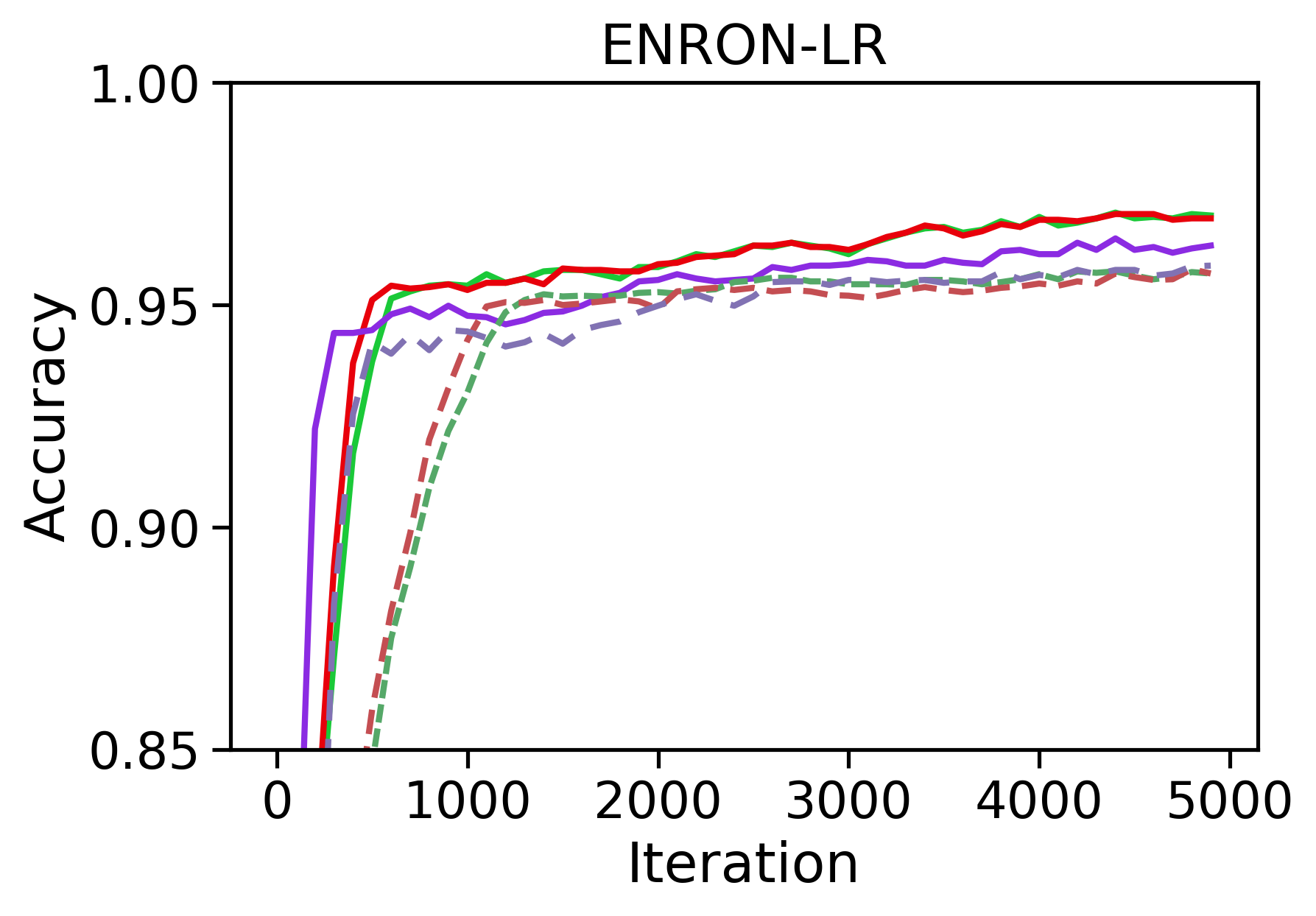}}
	\subfigure{\label{fig:exp2:sfigc}\includegraphics[width=0.24\linewidth]{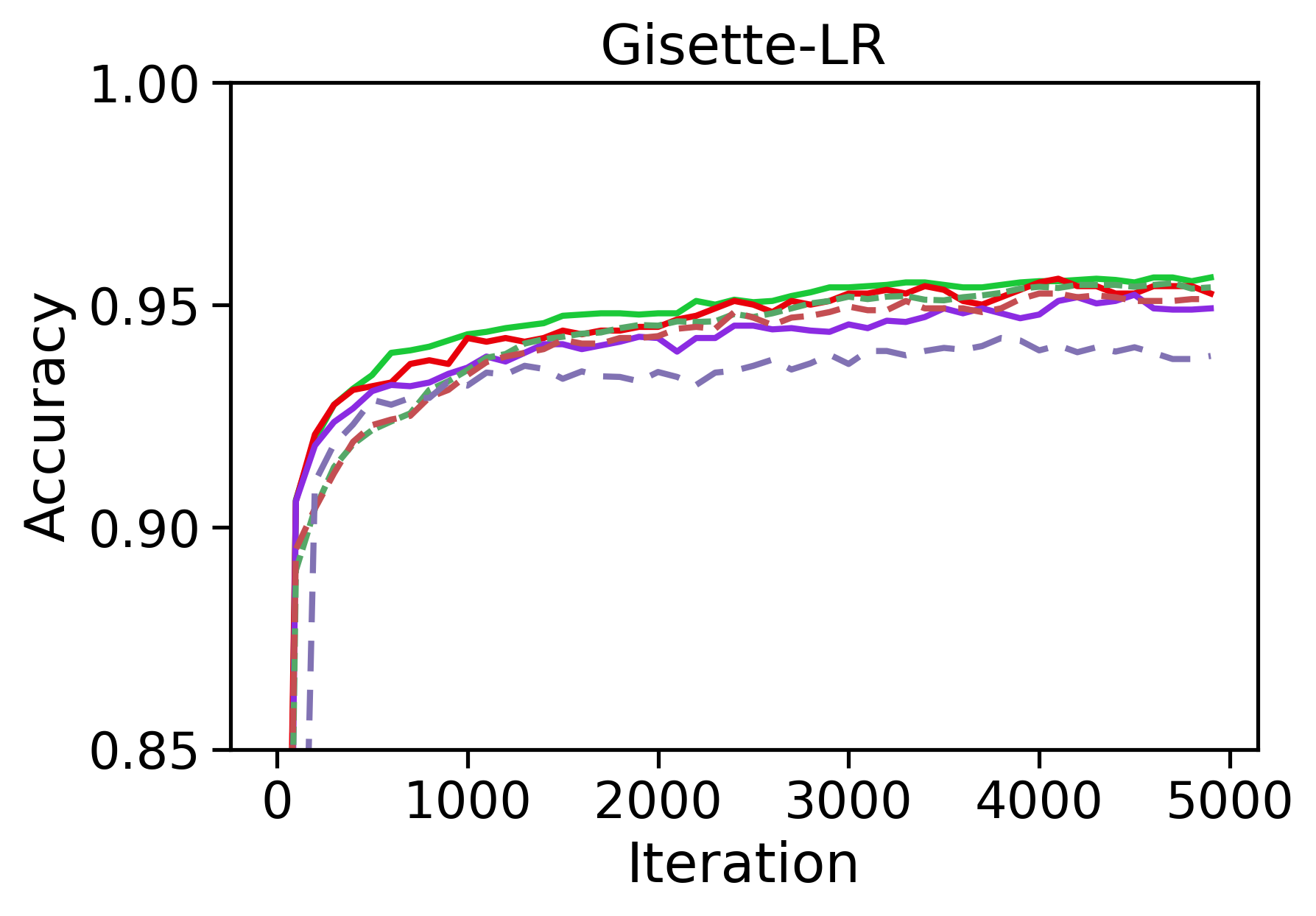}}
	\subfigure{\label{fig:exp2:sfigd}\includegraphics[width=0.24\linewidth]{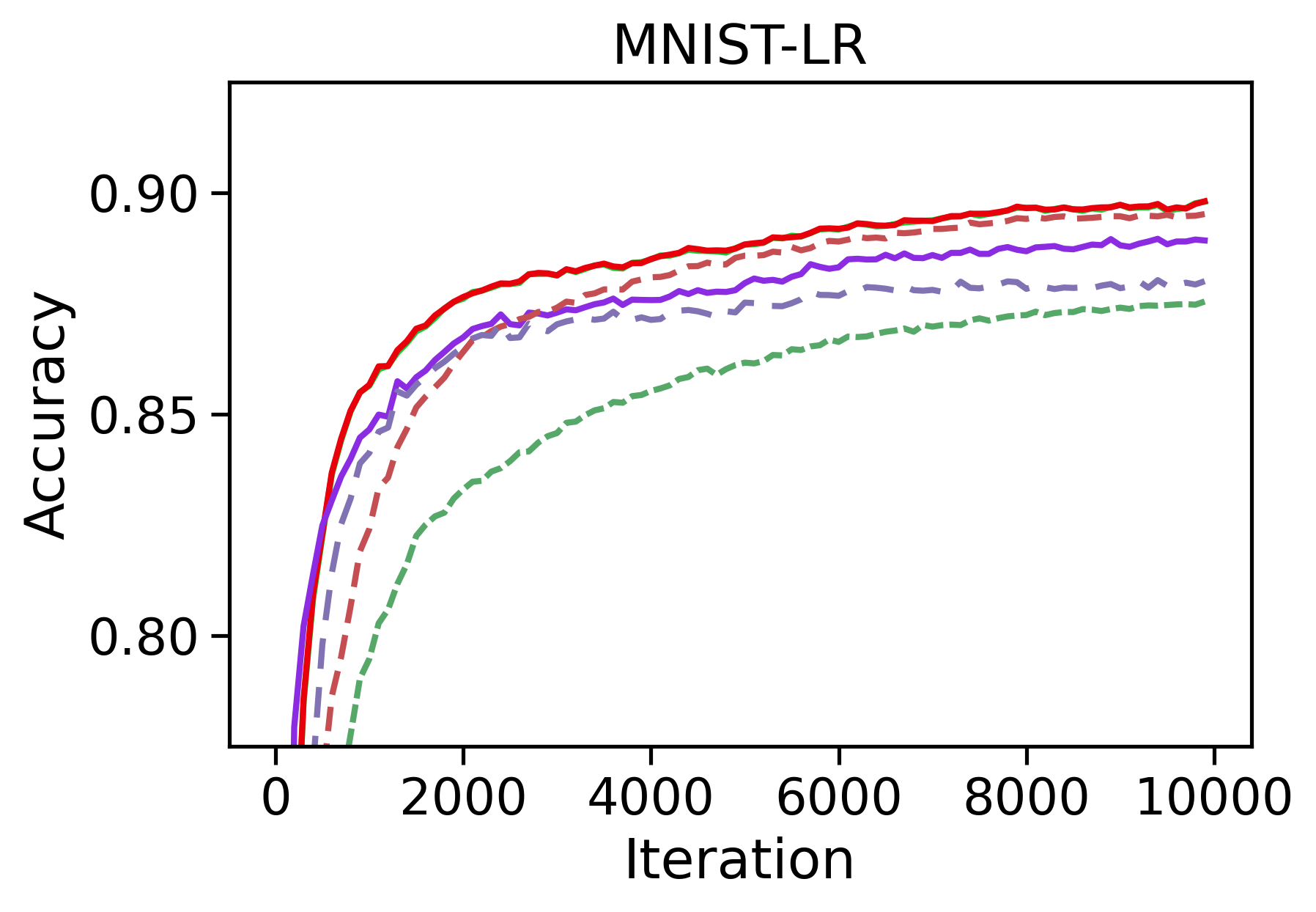}} \\
	
	\subfigure{\label{fig:exp2:sfige}\includegraphics[width=0.24\linewidth]{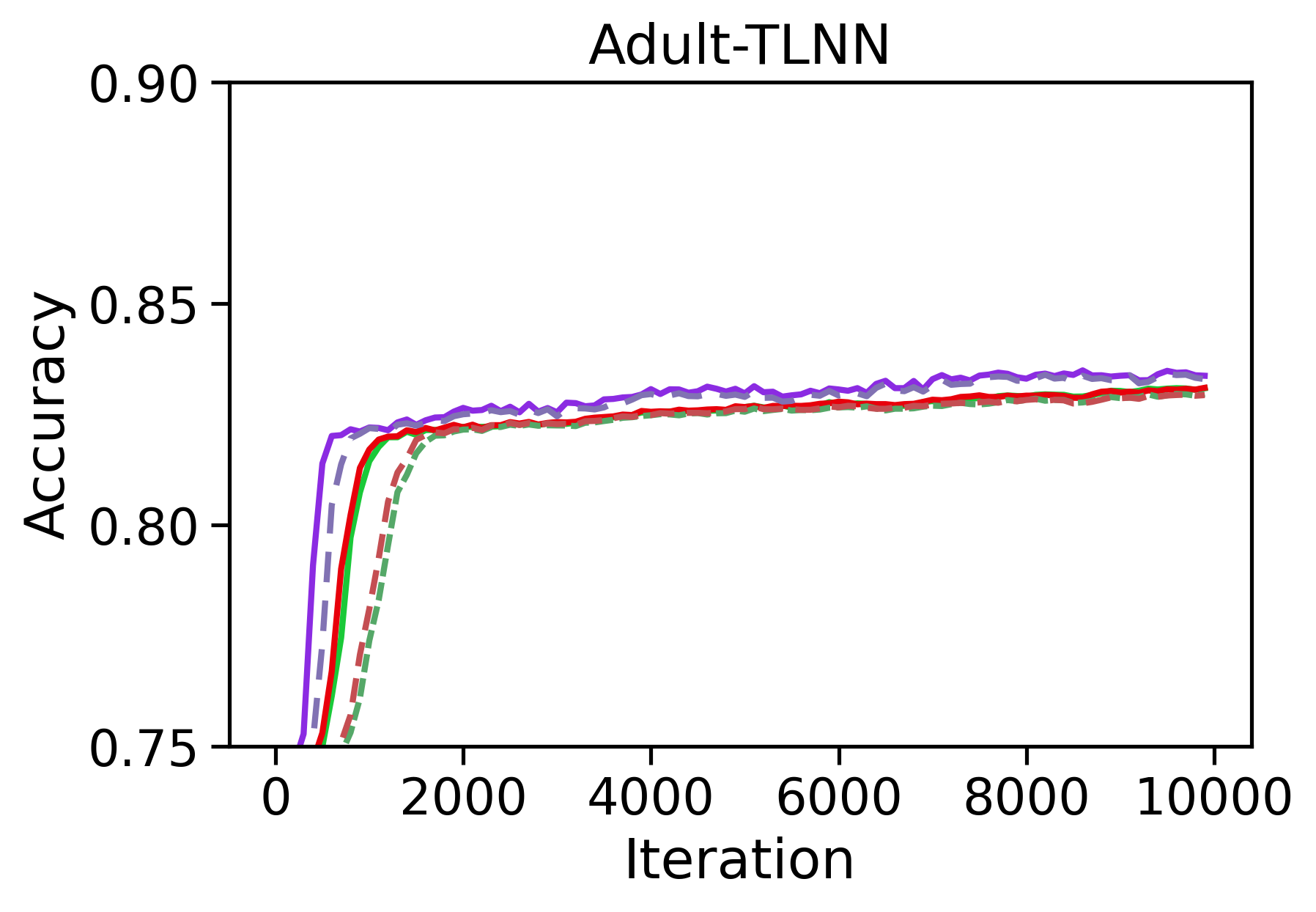}}
	\subfigure{\label{fig:exp2:sfigf}\includegraphics[width=0.24\linewidth]{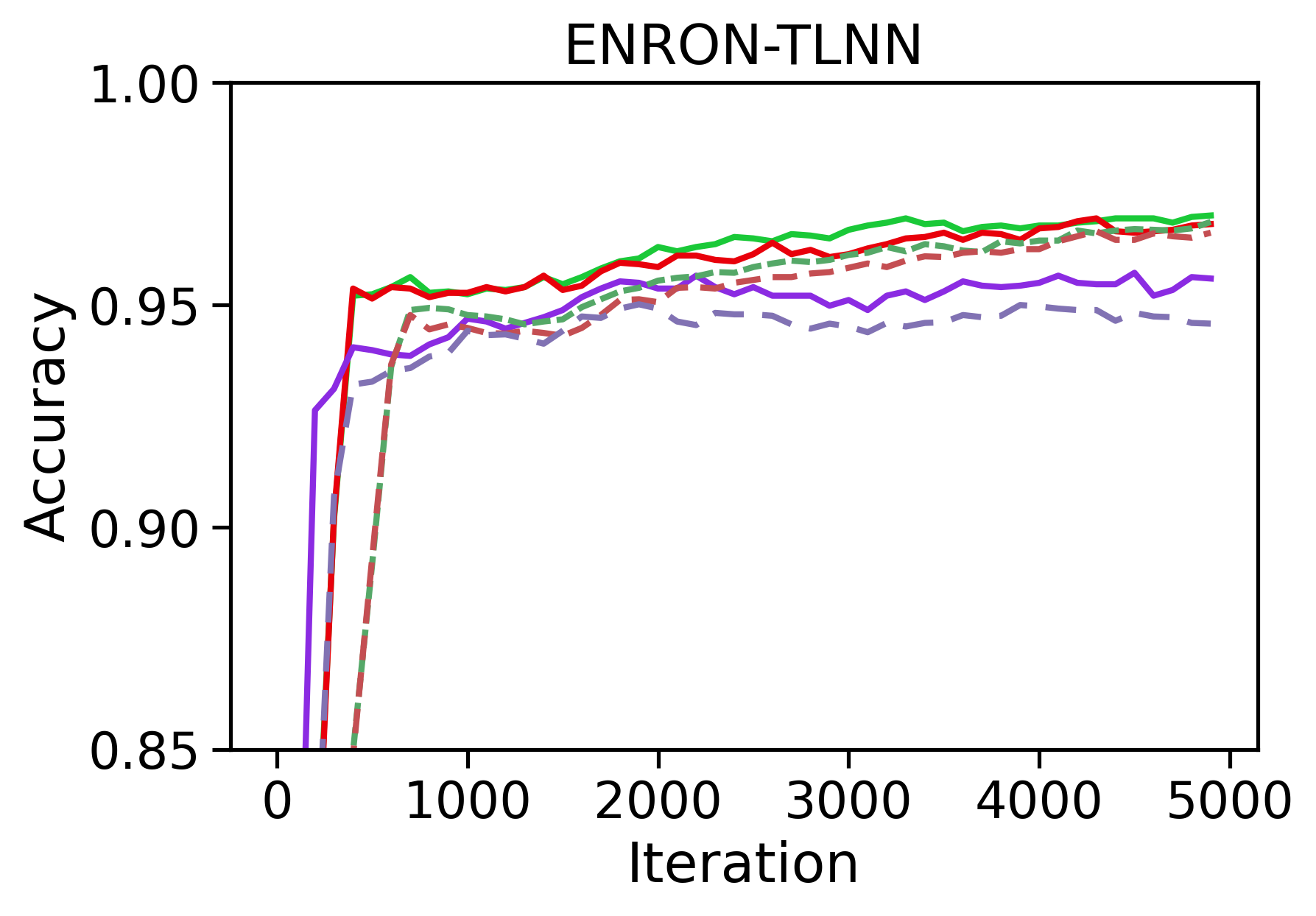}}
	\subfigure{\label{fig:exp2:sfigg}\includegraphics[width=0.24\linewidth]{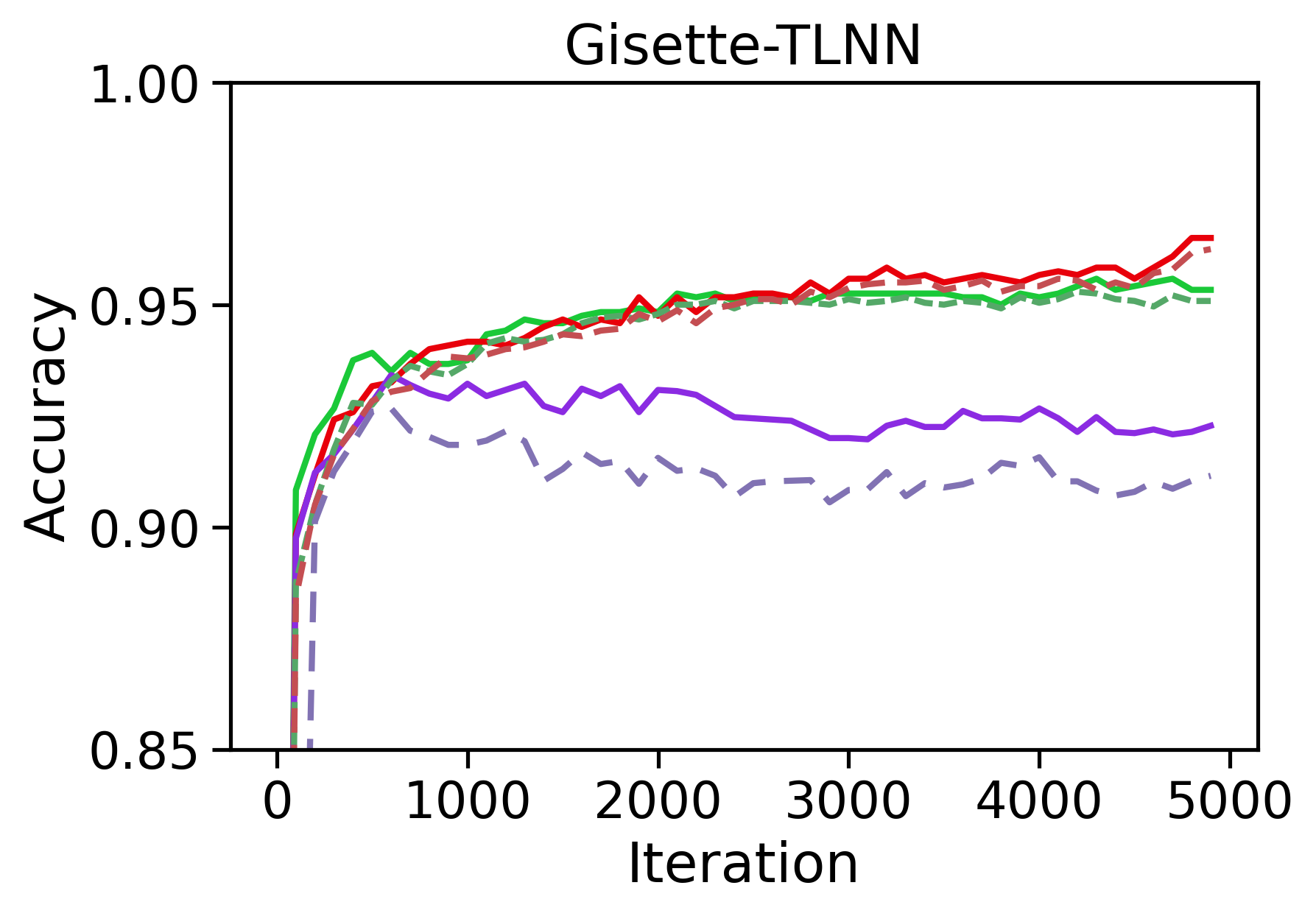}}
	\subfigure{\label{fig:exp2:sfigh}\includegraphics[width=0.24\linewidth]{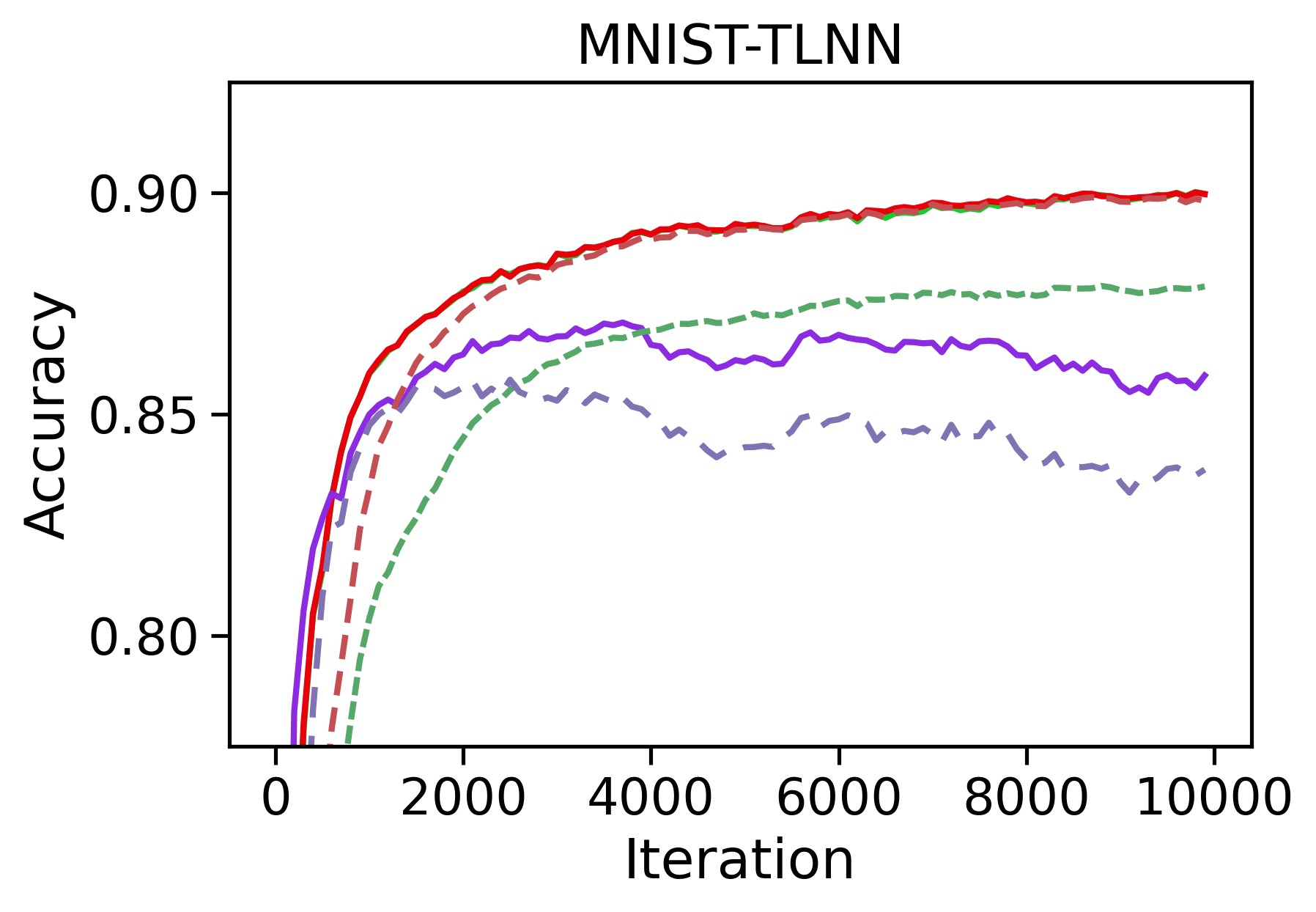}}\\
	
	\subfigure{\includegraphics[width=0.9\linewidth]{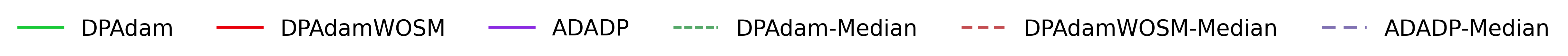}}
	\caption{Comparing the testing accuracy curves of DPAdam, ADADP and $\optname$ models across hyperparameter tuning grid from Table~\ref{tab:E1_params} with $\sigma=4$. The y-axes limits are adjusted based on the dataset while maintaining a 15\% range for all.}
	\label{fig:exp2}
	\vspace{-1em}
\end{figure*}

\section{\optname}\label{sec:exp3}

In addition to a decaying average of past gradient updates, DPAdam also maintains a decaying average of their second moments.  
In this section, we design $\optname$ (DPAdam Without Second Moments), a new DP optimizer that operates only using a decaying average of past gradients, as well as eliminates the need to tune the learning rate parameter.
We achieve this by analyzing the convergence behavior of the second-moment decaying average in DPAdam in regimes where the scale of noise added is much higher than the scale of the clipped gradients.
Setting the \emph{effective step size} (ESS) of DPAdam to the converged constant, and removing all computations related to the second-moment updates, results in $\optname$.
We empirically demonstrate that $\optname$ matches the utility of DPAdam, while requiring less computation than DPAdam.

Observe that removing the second-moment updates from DPAdam reduces it to DPMomentum with one additional feature: bias-correction to the first-moment decaying average, which DPAdam does to account for its initialization at the origin. 
While the resultant optimizer still requires tuning the learning rate (in addition to other hyperparameters like the clipping threshold), $\optname$ can be viewed as self-tuning the learning rate by fixing it to the converged effective step size in DPAdam.


\paragraph{Effective step size (ESS) in DPAdam}
DPAdam results have less variance than DPSGD due to its adaptive learning rate. To understand this phenomenon better, we inspect DPAdam's update step. DPAdam being an adaptive optimizer picks per-parameter ESS as $\frac{\alpha}{\sqrt{\hat{v}_t} + \xi}$, which is the base learning rate $\alpha$ scaled by the second moment of the individual parameter gradients.
We notice that when $g \rightarrow 0$, the ESS for DPAdam converges for the first moment gradient, which innately accounts for the clip bound one is training with.
This may happen at later iterations, when the model is close to its minima and the gradients get close to zero.

\begin{theorem}
    \label{thm:ess}
	The effective step size (ESS) for DPAdam with $g \rightarrow 0$ converges to
	$ESS^* = \frac{\alpha}{(\sigma C  /L) + \xi}$.  
\end{theorem}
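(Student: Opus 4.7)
The plan is to directly trace the DPAdam update rule in the limiting regime and show that the second-moment accumulator converges to a deterministic constant driven purely by the DP noise. First I would write out the DPAdam step explicitly: at iteration $t$ the noisy gradient is $\tilde g_t = \frac{1}{L}\bigl(\sum_{i\in B_t} \mathrm{clip}(\nabla f_i(\theta_t), C) + z_t\bigr)$ with $z_t \sim \mathcal{N}(0, \sigma^2 C^2 I)$, and the per-coordinate update applies $\alpha \hat m_t / (\sqrt{\hat v_t} + \xi)$ where $m_t, v_t$ are the usual EMAs of $\tilde g_t$ and $\tilde g_t^{\odot 2}$ with bias-correction factors $1-\beta_1^t$ and $1-\beta_2^t$. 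The quantity of interest is $\alpha/(\sqrt{\hat v_t}+\xi)$, so everything reduces to understanding the limiting value of $\hat v_t$.

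Next I would specialize to the regime $g \to 0$ stated in the theorem. When the true gradient is vanishing, the per-example clipped gradients all have $\ell_2$-norm $\ll C$ and may be treated as negligible compared to $z_t$. Then $\tilde g_t \approx z_t / L$, and each coordinate of $\tilde g_t$ is distributed as $\mathcal{N}(0,(\sigma C/L)^2)$, independent across $t$. In particular $\E[\tilde g_t^{\odot 2}] = (\sigma C/L)^2 \mathbf{1}$ per coordinate.

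I would then invoke the standard convergence of an exponential moving average driven by i.i.d.\ inputs with finite variance: because $v_t = \beta_2 v_{t-1} + (1-\beta_2)\tilde g_t^{\odot 2}$, taking expectations and iterating yields $\E[v_t] = (1-\beta_2^t)(\sigma C/L)^2$. Dividing by the bias-correction factor $1-\beta_2^t$ gives $\E[\hat v_t] = (\sigma C/L)^2$ exactly, and a short variance computation (each $\tilde g_t^2$ has variance $2(\sigma C/L)^4$, and the EMA's variance scales like $\frac{1-\beta_2}{1+\beta_2}$) shows $\hat v_t$ concentrates around this mean. Substituting $\sqrt{\hat v_t} \to \sigma C/L$ into the per-coordinate step size gives $ESS^* = \alpha/(\sigma C/L + \xi)$, as claimed.

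The main obstacle, and the part I would be most careful with, is making the ``$g \to 0$'' limit precise: in the plain DP-SGD analysis one controls the clipped gradient by its norm, but here we need the clipped gradient to be small enough that $\tilde g_t^{\odot 2}$ is dominated by the noise term $z_t^{\odot 2}/L^2$ uniformly in $t$. I would handle this by writing $\tilde g_t = u_t + z_t/L$ with $\|u_t\|\le C$ almost surely and $\|u_t\|\to 0$ by assumption, expanding $\tilde g_t^{\odot 2} = u_t^{\odot 2} + 2 u_t \odot z_t/L + z_t^{\odot 2}/L^2$, and noting that the first two terms vanish (the first deterministically, the second in expectation by the zero mean of $z_t$), leaving only the third, whose EMA behavior is the one analyzed above. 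Everything else is a routine geometric-series calculation on the EMA.
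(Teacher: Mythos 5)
Your proposal takes essentially the same route as the paper's proof: identify that as $g \to 0$ the noisy lot-averaged gradient reduces to $\mathcal{N}(0,\sigma^2C^2)/L$, so the per-coordinate second moment converges to $\sigma^2C^2/L^2$, and substitute $\sqrt{\hat v_t} \to \sigma C/L$ into the step size. Your version is in fact more careful than the paper's one-line argument --- you make explicit the EMA recursion, the cancellation with the bias-correction factor $1-\beta_2^t$, the vanishing cross terms in $\tilde g_t^{\odot 2}$, and the concentration of $\hat v_t$ around its mean --- but the underlying idea and conclusion are identical.
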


\begin{proof}\let\qed\relax
	Recall that the average noisy gradient over a lot is 
	 $\tilde{g} = g + \mathcal{N}(0, \sigma^2C^2)/ L $. We now look at the effect of this noisy gradient on the effective step size (ESS) of DPAdam. As $ g \rightarrow 0$, the second moment of DPAdam converges to $\frac{\sigma^2C^2}{L^2}$. This gives us the converging value for ESS:
\[	 		ESS^* = \frac{\alpha}{\sqrt{\hat{v}_t} + \xi} = \frac{\alpha}{\sqrt{\frac{\sigma^2C^2}{L^2}} + \xi}= \frac{\alpha}{(\sigma C  /L) + \xi}
\]
\end{proof}

Theorem~\ref{thm:ess} gives a closed form expression that ESS converges to. We can use this value in place of $\frac{\alpha}{\sqrt{\hat{v}_t} + \xi}$  in the update step from the inception of the learning process.
Since the second-moment updates (e.g., $\hat{v}_t$) are not used anymore, removing them results in our new optimizer $\optname$. 
We provide a pseudo-code for $\optname$ in the appendix.


\paragraph{Comparing Adaptive Optimizers}
We evaluate $\optname$ by running it alongside DPAdam and ADADP with the same hyperparameter grid in the appendix. For brevity, we show experiments on $\sigma = 4$ and others appear in the supplement.
In Figure~\ref{fig:exp2}, we show the maximum and median accuracy curves for all the optimizers. 
We display the median accuracy curves (shown in dotted), as an indicator of the quality of the entire pool of hyperparameter candidates for a given optimizer; which in this case is strictly over the choices of clip.
The max lines for ADADP lies beneath DPAdam and $\optname$ for all dataset except Adult. Also, the max accuracy line for $\optname$ runs alongside DPAdam which means that it can perform as good as DPAdam throughout training. The median line for $\optname$ also performs alongside DPAdam and in some cases is able to beat it (e.g, the median for $\optname$ for MNIST-LR and MNIST-TLNN lies above the median line of DPAdam). This occurrence is seen because $\optname$ uses the converged ESS from the first iteration of training. 

\section{Conclusion}\label{sec:conclusion}
We thoroughly investigated honest hyperparameter selection for DP optimizers. We compared two existing private methods (LT and MA) to search for hyperparameter candidates, and showed that the former incurs a significant privacy cost but can compose over many candidates, while the latter is effective when the number of candidates is small. Next, we explored connections between the clipping norm and the step size hyperparameter to show an inverse relationship between them. Additionally, we compared non-adaptive and adaptive optimizers, demonstrating that the latter typically achieves more consistent performance over a variety of hyperparameter settings. 
This can be vital for applications where public data is scarce, resulting in difficulties when tuning hyperparameters. Finally, we brought to light that DPAdam converges to a static learning rate when the noise dominates the gradients.
This insight allowed us to derive a novel optimizer \optname, a variant of DPAdam which avoids the second-moment computation and enjoys better accuracy especially at earlier iterations.
Future work remains to investigate further implications of these results to provide tuning-free end-to-end private ML optimizers.
\section*{Acknowledgements}
Experiments for this work were run on Compute Canada hardware, supported by a Resources for Research Groups grant.
\bibliographystyle{alpha}
\bibliography{bibfile}
\appendix

\section{LT Algorithm}~\label{sec:ltalgo}

\begin{algorithm}[h]
	\begin{algorithmic}[1]
		\REQUIRE $\gamma\leq 1$, $\delta_2 > 0$, and sampling access to $Q(D)$
        \STATE Initialize the list $S = \emptyset$
		\STATE Initialize $\Upsilon = \frac{1}{\gamma}\log{\frac{1}{\delta_2}}$  
	    \FOR{$j\in [1,\Upsilon]$}
    	    \STATE Draw $(x,q) \sim Q(D)$
    	    \STATE $S \leftarrow S \cup (x,q)$
    	    \STATE Flip a $\gamma$-biased coin, output highest scored candidate from $S$ and halt;
    	\ENDFOR
    	\STATE Output highest scored candidate from $S$
		\caption{Hard stopping private selection algorithm for $(\epsilon, \delta)$-DP input algorithms} 
		\label{algo:LT}
	\end{algorithmic}
\end{algorithm}

\section{LT vs MA with varying candidate size}\label{sec:ltvsma_varyingcand}
Continuing from Section~\ref{sec:ltselect}, in this section we show an additional experiment in which we compare the LT (Liu and Talwar) and MA (Moments Accountant) algorithms with varying number of hyperparameter candidates.
\begin{figure}
    \centering
    \subfigure{\label{fig:ltvsma_varyingcand5k}}\includegraphics[width=0.48\linewidth]{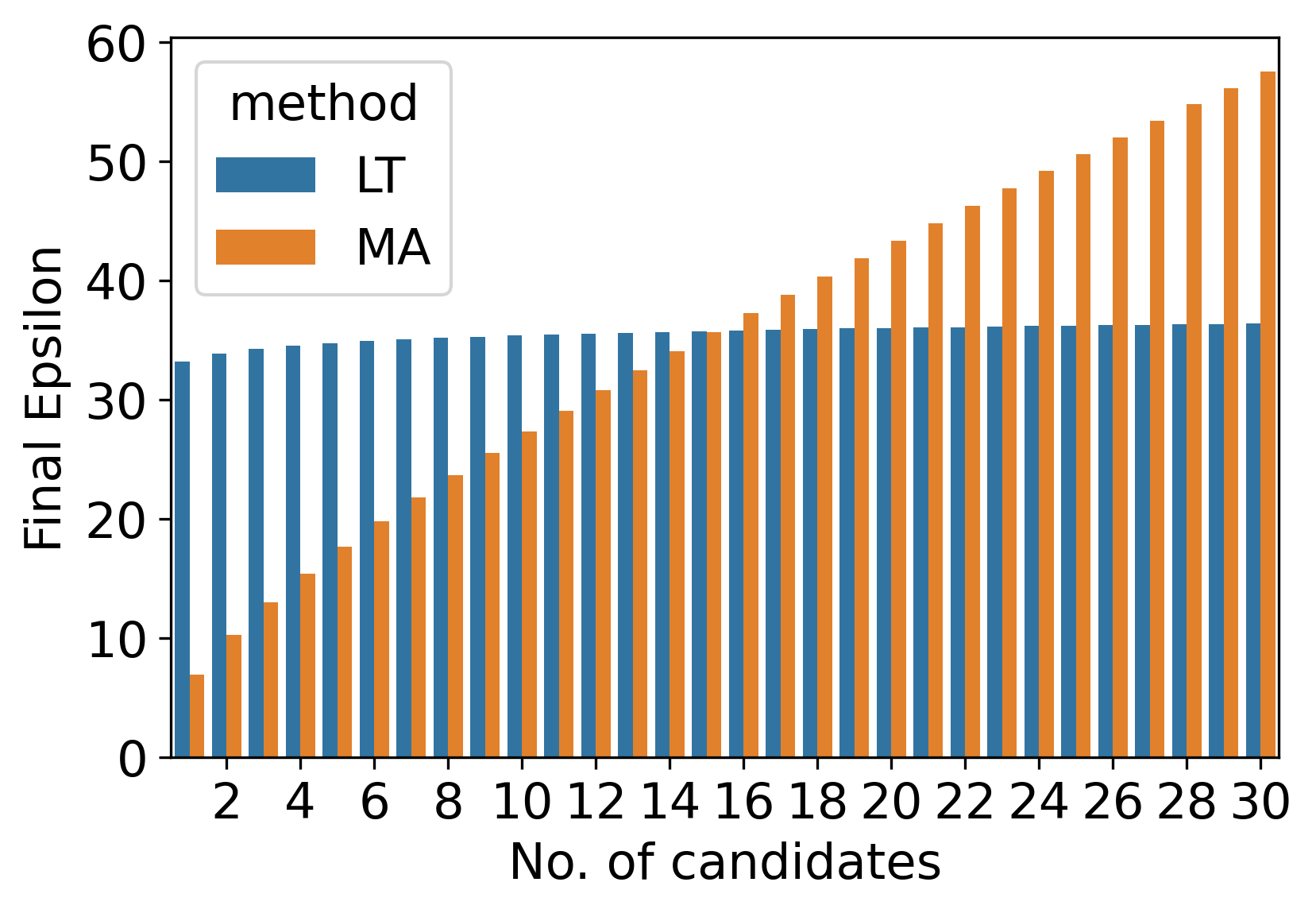}
    \subfigure{\label{fig:ltvsma_varyingcand60k}}\includegraphics[width=0.48\linewidth]{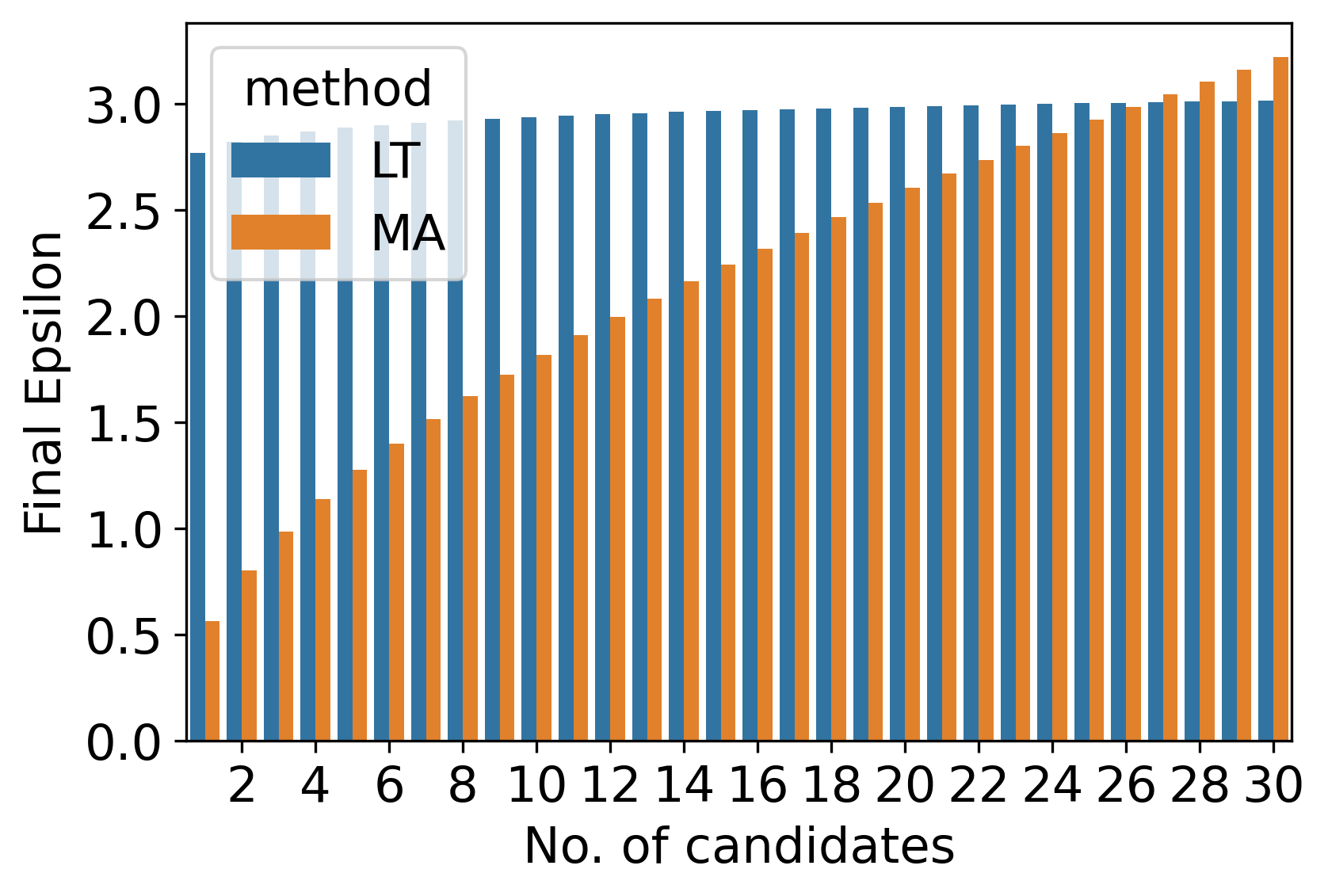}
    \caption{Comparing LT vs MA with varying number of candidates at setting $\sigma=4$, $L=250$, $T=10000$. MA can compose upto 14 and 26 candidates for the same cost of LT for dataset sizes 5k (left) and 60k (right) respectively.}
    \label{fig:ltvsma_varyingcand}
\end{figure}
In Figure~\ref{fig:ltvsma_varyingcand}, we run the LT and MA algorithms for $T=10000$ with $\sigma=4$ and $L=250$ with varying candidate size and compare the final privacy costs. The $\gamma$ value for the LT algorithm is set to $1/k$, where the $k$ is the number of candidates. It can be seen that the privacy cost of LT (blue) remains almost constant for with increasing number of candidates. Figure~\ref{fig:ltvsma_varyingcand} also demonstrates the exact number of candidates when the cost of MA (orange) remains below the LT cost. This insight is valuable in practice to a practitioner to decide the which algorithm to choose for hyperparameter tuning with respect to the number of candidates.

\section{Pruning hyperparameter grid for SGD}

\begin{figure}[htb!]
	\centering
\includegraphics[width=0.5\linewidth]{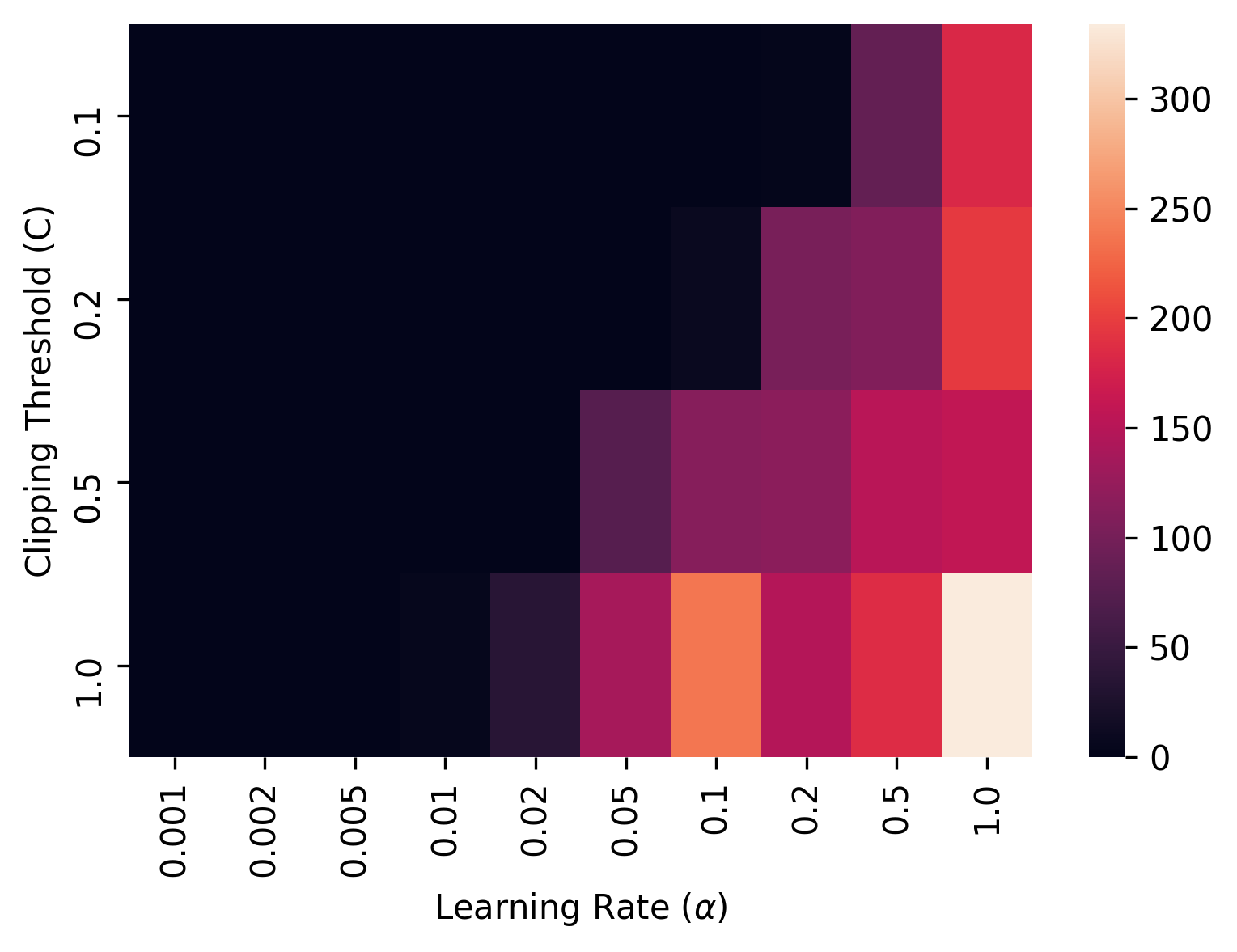}
	\caption{Pruning for DPSGD. Each ($\alpha, C$) point on the heatmap shows how many times it has performed best among all candidates}
\label{fig:heatmap}
\end{figure}

\label{app:pruning}
Figure~\ref{fig:heatmap} demonstrates a heat map plot of the candidate hyperparameter pairs for DPSGD. Each point on this heatmap is assigned a score (totalling 2400) that reflects how many times that $(\alpha,C)$ pair has performed the best among all the candidates, and we score across all iterations (at a granularity of every 100 iterations) of training.  

We justify this as a fair metric of `goodness', for candidates as one could in practice stop training at any iteration. Furthermore this metric is quite critical of quality, in that it only awards a hyperparameter set a point, if it appeared as the best candidate at one of the intervals. Hence we deem this to be a generous pruning of the search space, which will imbue the best possible advantage to DPSGD with regards to a pruned hyperparameter search space.

\section{Implementation Details}\label{app:imp-details}
The code for our paper is written in Python3.6 using the PyTorch library. The implementation of all private optimizers are done using the Pyvacy library\footnote{\url{https://github.com/ChrisWaites/pyvacy}}. We run our code on ComputeCanada servers. Each allocation of the server includes 2 CPU cores, 8 GB RAM and 1 GPU from the list -- P100, V100, K80. We report results from all our experiments after averaging over 3 runs. The code is attached with our supplementary material submission. 


All datasets used in our experiments are publicly available. We split all datasets into 80\% training and 20\% validation sets. For our experiments, we assume that all our datasets start in its preprocessed state, i.e. the numerical features are scaled to the range [0,1], as is standard practice in machine learning. 
However when considering an end-to-end private algorithm, this preprocessing itself may need to be performed in a privacy-preserving fashion. 
In this work, we do not account for privacy in this step.
Note that for our work this only effects the ENRON and Adult datasets, where scaling the values does require computing the maximum possible values of features in a differentially-private fashion, whereas the max values for image datasets (Gisette and MNIST) are known a priori due to max pixel value and does not involve any privacy cost.

\section{Omitted Pseudocode for \optname}
\begin{algorithm}[ht]
	\begin{algorithmic}[1]
		\REQUIRE Training set $A: \{x_1,...,x_n\}$, Loss function $\mathcal{L}(\theta)$, Parameters: Lot size $L$, Learning rate $\alpha$, Gradient norm bound $C$, Noise scale $\sigma$, Total number of iterations $T$, Exponential decay rate $\beta_1$
		
		\STATE Initialize model with $\theta_0$\ randomly\;
		\STATE Initialize first moment vector $m_0 = 0$ \;
		\STATE Set learning rate to ESS $\alpha = \frac{10^{-3}}{(\sigma C/L) + 10^{-8}}$;
		\FOR{$t\in [1,T]$}
		
    		\STATE Sample a random subset $L_t \subseteq A$, by independently including each element of $A$ with probability $L/n$\; 
    		\STATE Compute gradient  $ \forall x_i \in L_t$ \\$g_t(x_i) = \nabla_\theta\mathcal{L}(\theta_t,x_i)$\;
    		\STATE Clip each gradient in $\ell_2$ norm to $C$ $\bar{g}_t(x_i) = g_t(x_i)/\max(1, \frac{\|g_t(x_i)\|_2}{C}) $\;
    		\STATE Add noise $\tilde{g}_t = \frac{1}{|L|}(\sum_i \bar{g}_t(x_i) + \mathcal{N}(0,\sigma^2C^2I))$\;
    		\STATE Exponentially average the first moment\\ $m_t = \beta_1 \cdot m_{t-1} + (1 - \beta_1) \cdot \tilde{g}_t $
    		\STATE Perform bias correction \\
    		$\hat{m}_t = \frac{m_t}{1 - \beta_1^t}$\;
    		\STATE Update model $\theta_t = \theta_{t-1} - \alpha \cdot \hat{m}_t$ \;
		
		\ENDFOR
		\STATE Compute privacy cost using Moments Accountant.
		\caption{Optimization using $\optname$}
		\label{algo:dpopt}
	\end{algorithmic}
\end{algorithm}


\section{Proof of Theorem 2}
\label{proof:dpsgd_conv}
\begin{theorem}
Let $f$ be a convex and $\beta$-smooth function, and let $x^* = \underset{x \in \mathcal{S}}{\argmin}  f(x)$. Let $x_0$ be an arbitrary point in $\mathcal{S}$, and $x_{t+1} = \Pi_{\mathcal{S}}(x_t - \alpha(g_t + z_t))$, where $g_t = \min(1, \frac{C}{{\|\nabla f(x)\|}^2}) \nabla f(x)$ and $z_t \sim \mathcal{N}(0, \sigma^2C^2)$ is the noise due to privacy. After $T$ iterations, the optimal learning rate is 
$  \alpha_{opt} = \frac{R}{CT\sqrt{1 + \sigma^2}}$, where $\E[f( \frac{1}{T} \sum_i^T x_{t}) - f(x^*)] \leq \frac{RC\sqrt{1 + \sigma^2}}{\sqrt{T}}$ and $R = \E[\|x_0 - x^*\|]$.
\end{theorem}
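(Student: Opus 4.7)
The plan is to run the standard projected stochastic gradient descent (SGD) analysis for convex functions, treating the clipped-plus-noised gradient $g_t + z_t$ as an unbiased-ish stochastic gradient whose second moment is easy to control. The key quantities are $R = \E[\|x_0 - x^*\|]$, the per-step gradient bound $\|g_t\| \leq C$ (which is what the clipping buys us), and the noise second moment $\E[\|z_t\|^2] = \sigma^2 C^2$.

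First I would apply the nonexpansiveness of the projection $\Pi_{\mathcal{S}}$ to write
\[
\|x_{t+1} - x^*\|^2 \leq \|x_t - \alpha(g_t + z_t) - x^*\|^2,
\]
and expand the right-hand side as
\[
\|x_t - x^*\|^2 - 2\alpha \langle g_t, x_t - x^*\rangle - 2\alpha \langle z_t, x_t - x^*\rangle + \alpha^2 \|g_t + z_t\|^2.
\]
Next I would take expectations conditionally on the history up to step $t$. Independence of $z_t$ from $x_t$ and $\E[z_t] = 0$ kill the cross term $\E[\langle z_t, x_t - x^*\rangle]$, and the squared norm splits into $\|g_t\|^2 + \E[\|z_t\|^2] \leq C^2 + \sigma^2 C^2 = C^2(1+\sigma^2)$, using both the clipping bound and the noise variance. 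For the remaining inner product, convexity of $f$ combined with the clipping regime (in which $g_t$ is a nonnegative rescaling of $\nabla f(x_t)$) gives $\langle g_t, x_t - x^*\rangle \geq f(x_t) - f(x^*)$; this is the one place where one needs to be a little careful, as discussed below.

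Rearranging yields the standard one-step descent inequality
\[
2\alpha\, \E[f(x_t) - f(x^*)] \leq \E[\|x_t - x^*\|^2] - \E[\|x_{t+1} - x^*\|^2] + \alpha^2 C^2(1+\sigma^2).
\]
Summing over $t = 0, \ldots, T-1$ telescopes the quadratic terms, leaving
\[
\frac{1}{T}\sum_{t=0}^{T-1} \E[f(x_t) - f(x^*)] \leq \frac{\E[\|x_0 - x^*\|^2]}{2\alpha T} + \frac{\alpha C^2(1+\sigma^2)}{2}.
\]
Jensen's inequality applied to the convex $f$ converts the left-hand side into $\E[f(\bar{x}_T) - f(x^*)]$, where $\bar{x}_T = \tfrac{1}{T}\sum_t x_t$. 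Finally, optimizing the right-hand side over $\alpha$ by differentiating and setting to zero gives a minimizer proportional to $R/(C\sqrt{1+\sigma^2})$ and substituting back recovers the stated $O(RC\sqrt{(1+\sigma^2)/T})$ rate.

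The main obstacle is handling the clipped gradient in the cross term: strictly speaking $g_t$ is $\nabla f(x_t)$ scaled by a factor in $(0,1]$, so the convexity inequality $\langle g_t, x_t - x^*\rangle \geq f(x_t) - f(x^*)$ is immediate only when clipping is inactive, and in general the scaling factor degrades the per-step progress. The cleanest fix (and the one implicit in the statement, since $\beta$-smoothness is invoked) is to work in the regime where $C$ is chosen large enough that $\|\nabla f(x_t)\| \leq C$ along the trajectory, so clipping is inactive and $g_t = \nabla f(x_t)$ throughout; the rest of the argument is then the textbook noisy projected SGD computation.
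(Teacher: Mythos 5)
Your proposal follows essentially the same route as the paper's proof: expand $\|x_{t+1}-x^*\|^2$, use zero-mean noise to kill the cross term, bound $\E[\|g_t+z_t\|^2]\leq C^2(1+\sigma^2)$, invoke convexity for the inner product, telescope, apply Jensen, and optimize over $\alpha$. The one point where you are more careful than the paper is the clipping caveat: the paper's proof applies $\E[g_t\mid x_t]^T(x_t-x^*)\geq f(x_t)-f(x^*)$ without comment, implicitly assuming clipping is inactive (or accepting the degraded constant), exactly the issue you flag; your observation that this step only holds as stated when $\|\nabla f(x_t)\|\leq C$ is a legitimate gap in the paper's own argument, not in yours.
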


\begin{proof}

\begin{equation*}
    \begin{split}
    \E[\|x_{t+1} - x^*\|^2 \Bigm\vert x_t] &=  \E[\|x_{t} - \alpha(g_t +z_t) - x^*\|^2 \Bigm\vert x_t] \\
    &=  \E[\|x_{t} - x^*\|^2 - 2\alpha(g_t +z_t)(x_t - x^*) + \alpha^2\|(g_t +z_t)\|^2 \Bigm\vert x_t]\\
    &= \|x_{t} - x^*\|^2 - 2\alpha \E[(g_t +z_t)\Bigm\vert x_t]^T(x_t - x^*) + \alpha^2\E[\|(g_t +z_t)\|^2 \Bigm\vert x_t]\\
    &\leq \|x_{t} - x^*\|^2 - 2\alpha[f(x_t) - f(x^*)] + \alpha^2\E[\|(g_t +z_t)\|^2 \Bigm\vert x_t]\\
    \end{split}
\end{equation*}
The inequality is due to convexity of the loss function and $E[(g_t +z_t)] = g_t$ due to 0-mean noise.
Taking expectation on both sides and reordering,
\begin{equation*}
    \begin{split}
    2\alpha[f(x_t) - f(x^*)] &\leq \E[\|x_{t+1} - x^*\|^2] - \E[\|x_{t} - x^*\|^2]  + \alpha^2\E[\|(g_t +z_t)\|^2\\
    &\leq \E[\|x_{t+1} - x^*\|^2] - \E[\|x_{t} - x^*\|^2]  + \alpha^2(C^2 + C^2\sigma^2)\\
    \end{split}
\end{equation*}
Summing for T steps and dividing both sides by $2\alpha T$,
\begin{equation}
    \label{eq:thm:dpsgd_conv:final}
    \E[f( \frac{1}{T} \sum_i^T x_{t}) - f(x*)] \leq \frac{R^2}{2\alpha T} + \frac{\alpha C^2 (1 + \sigma^2)}{2}
\end{equation}
Taking derivative and finding best value of $\alpha$,
\begin{equation*}
    \alpha_{opt} = \frac{R}{C\sqrt{1+ \sigma^2} T}
\end{equation*}
Plugging $\alpha_{opt}$ to Eq.~\ref{eq:thm:dpsgd_conv:final},
\begin{equation*}
    \E[f( \frac{1}{T} \sum_i^T x_{t}) - f(x*)] \leq \frac{RC\sqrt{1 + \sigma^2}}{\sqrt{T}}
\end{equation*}
\end{proof}

\section{Additional experiment results for Section~\ref{sec:exp2} and Section~\ref{sec:exp3}}\label{app:add_exps}
In Figures~\ref{fig:exp1-sigma2} and ~\ref{fig:exp1-sigma8}, we display our results for the same experiments described in Section~\ref{sec:exp2}, with $\sigma=2$, and $\sigma=8$ respectively. Similarly Figure~\ref{fig:exp2-sigma2} and ~\ref{fig:exp2-sigma8} displays our results of the experiments detailed in Section~\ref{sec:exp3} with $\sigma=2$, and $\sigma=8$.

\begin{figure*}[htb!]
	\centering
	\subfigure{\label{fig:-sigma2:sfiga}\includegraphics[width=0.24\linewidth]{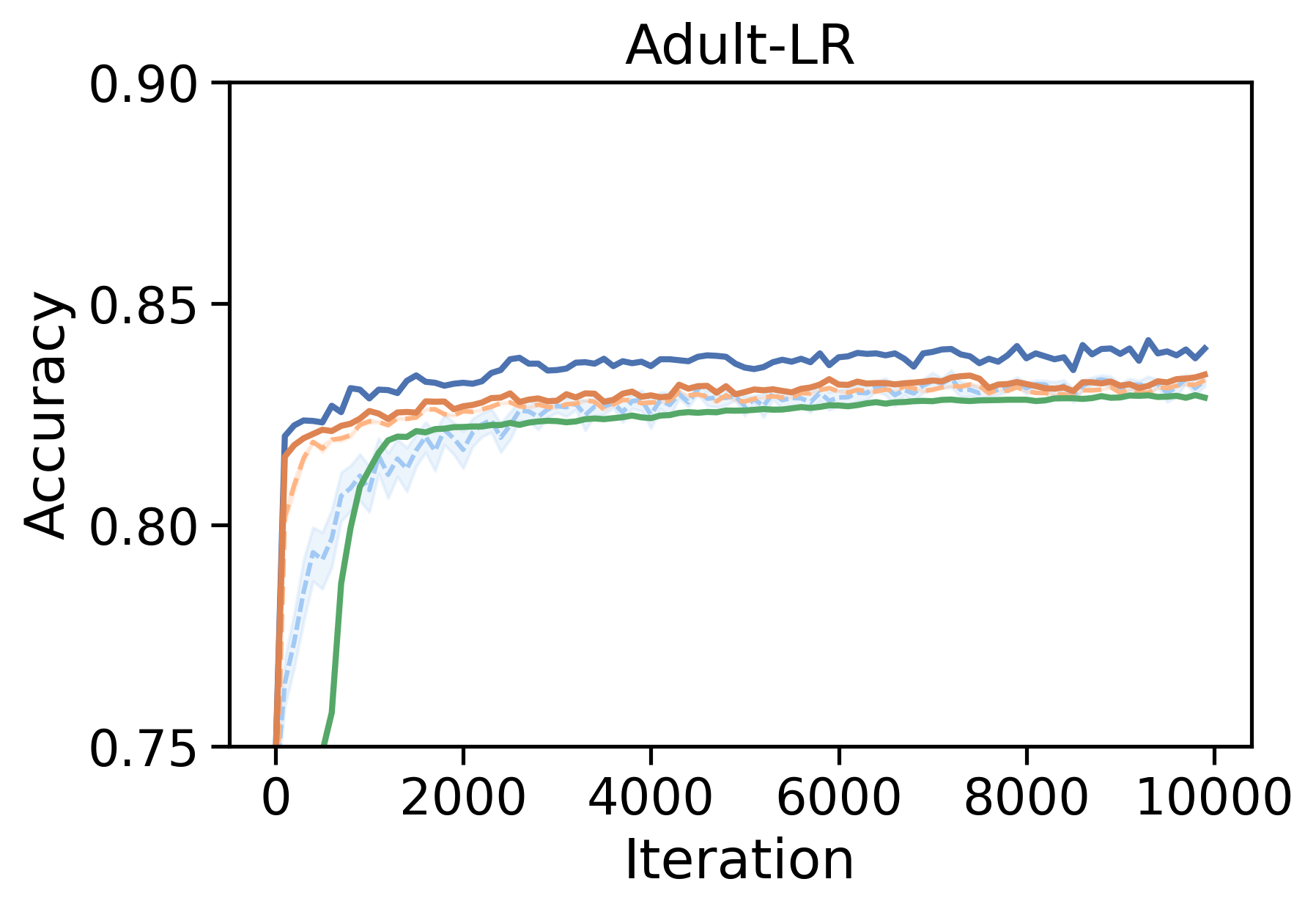}}
	\subfigure{\label{fig:-sigma2:sfigb}\includegraphics[width=0.24\linewidth]{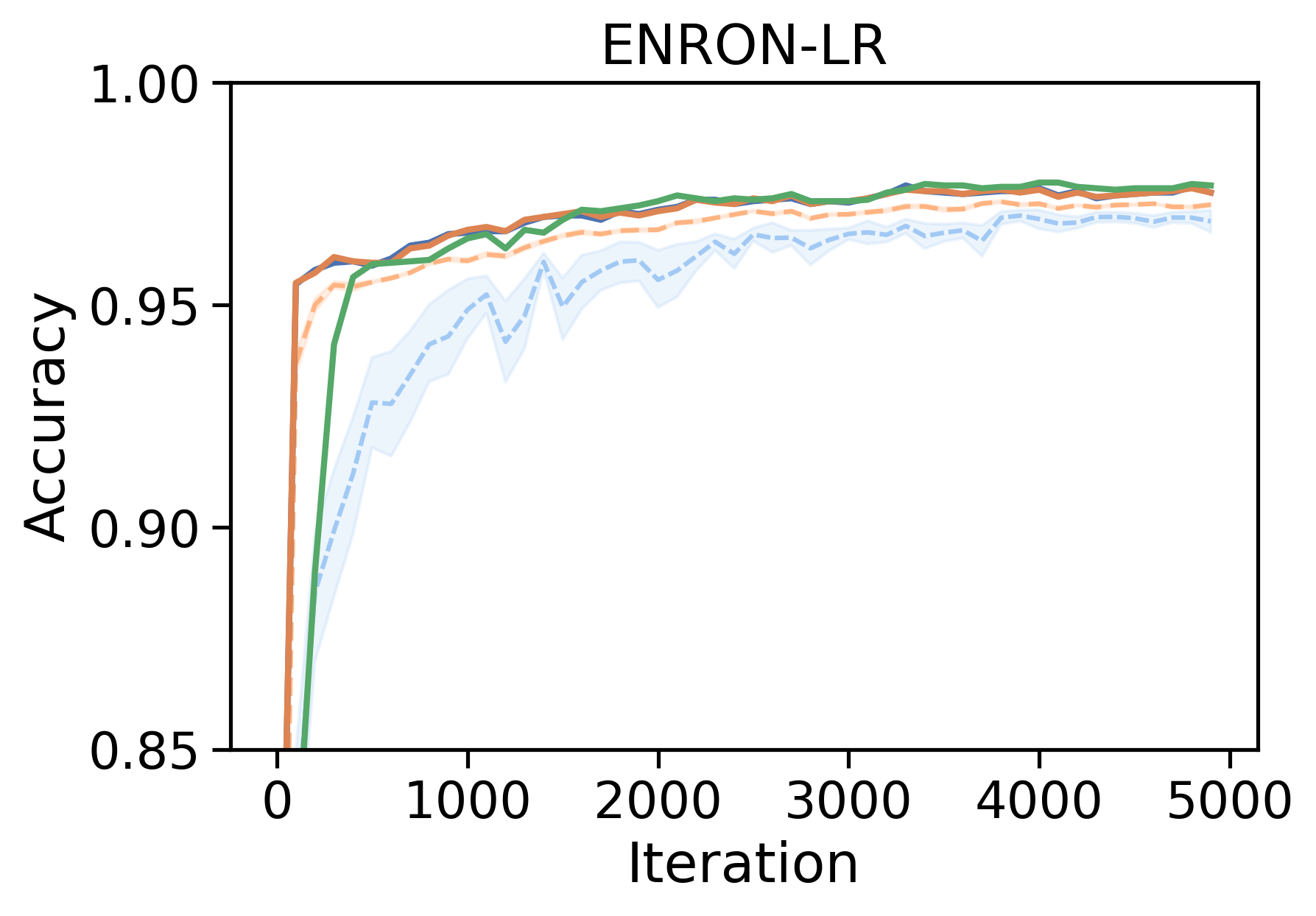}}
	\subfigure{\label{fig:-sigma2:sfigc}\includegraphics[width=0.24\linewidth]{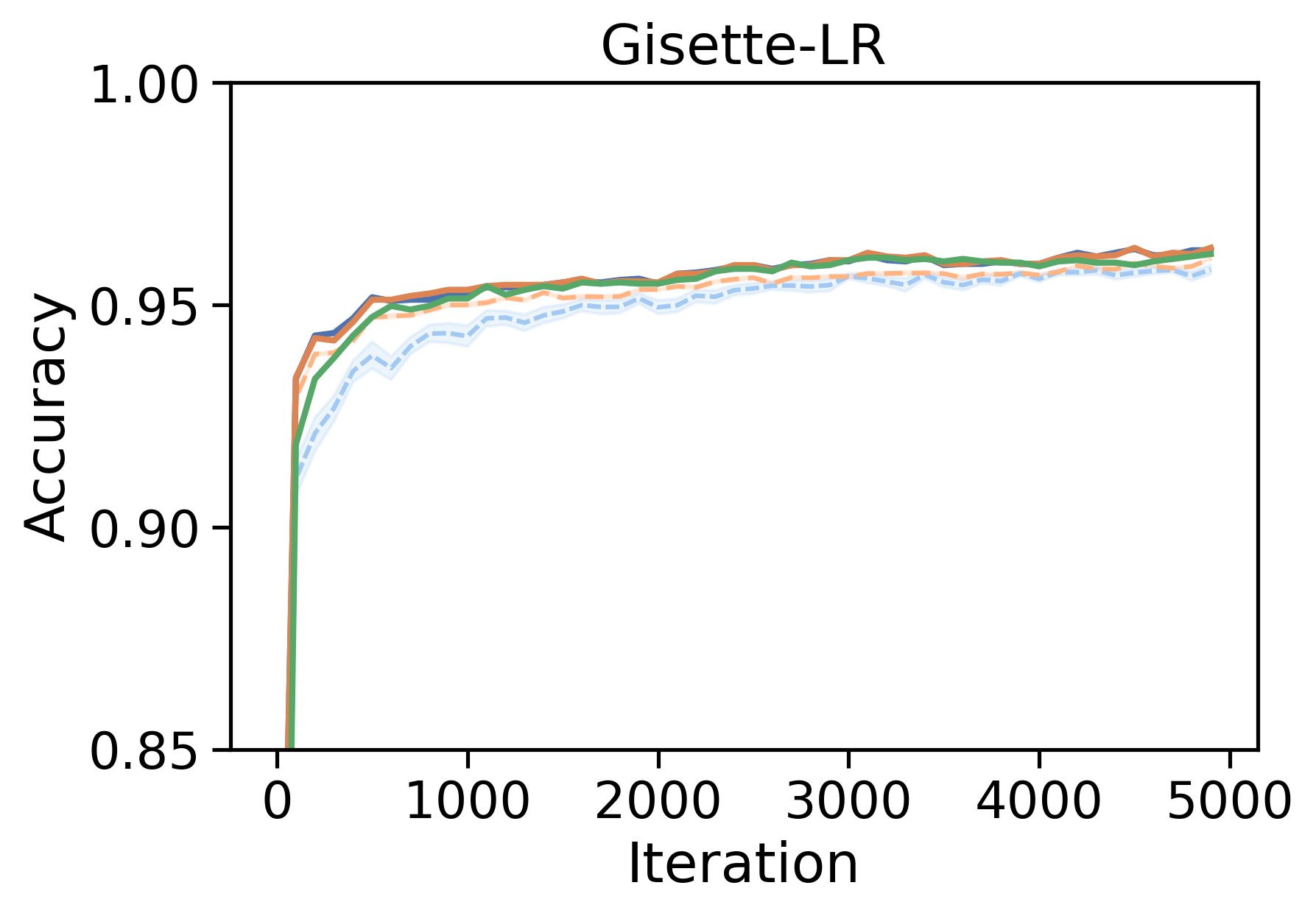}}
	\subfigure{\label{fig:-sigma2:sfigd}\includegraphics[width=0.24\linewidth]{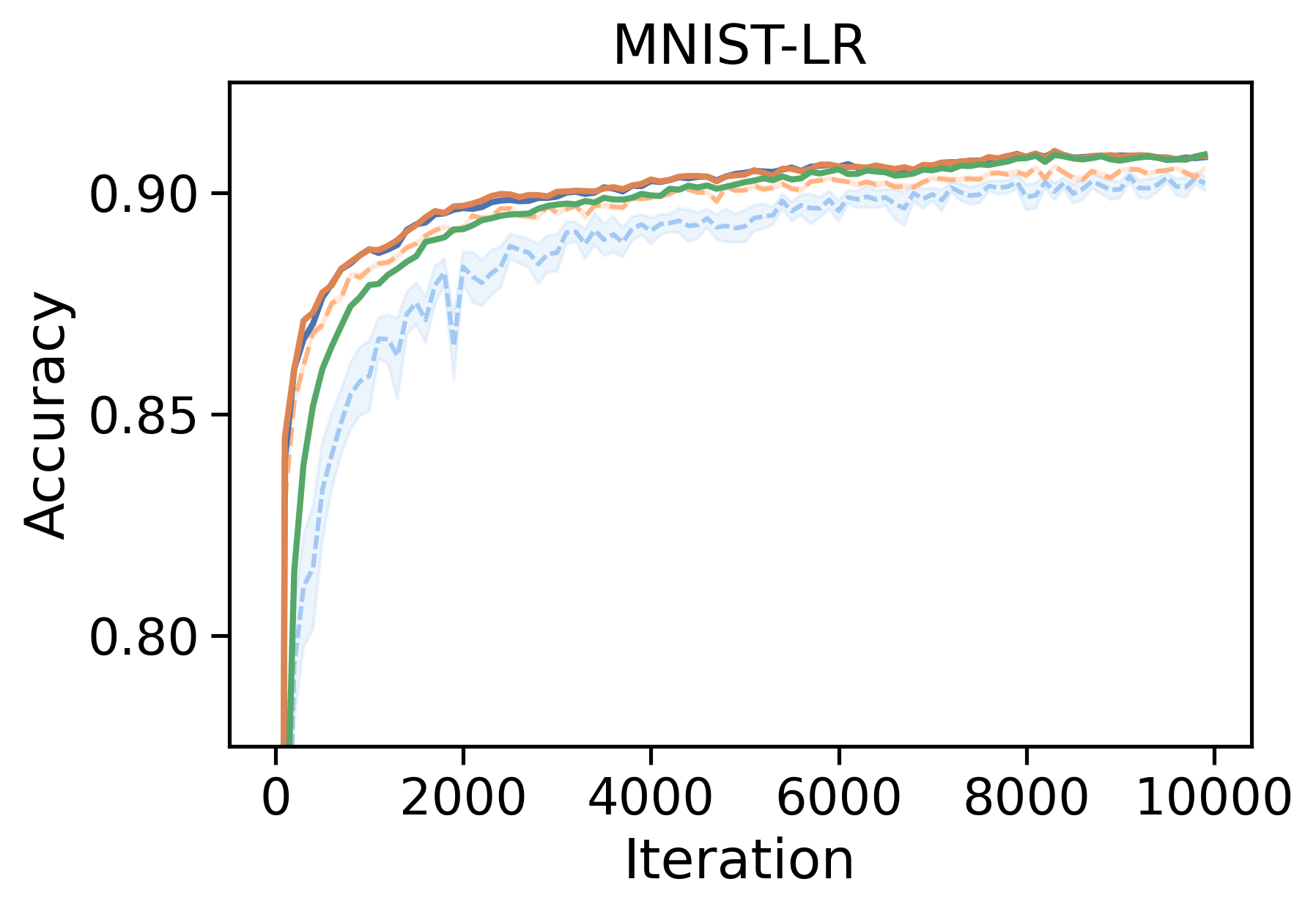}} \\
	
	\subfigure{\label{fig:-sigma2:sfige}\includegraphics[width=0.24\linewidth]{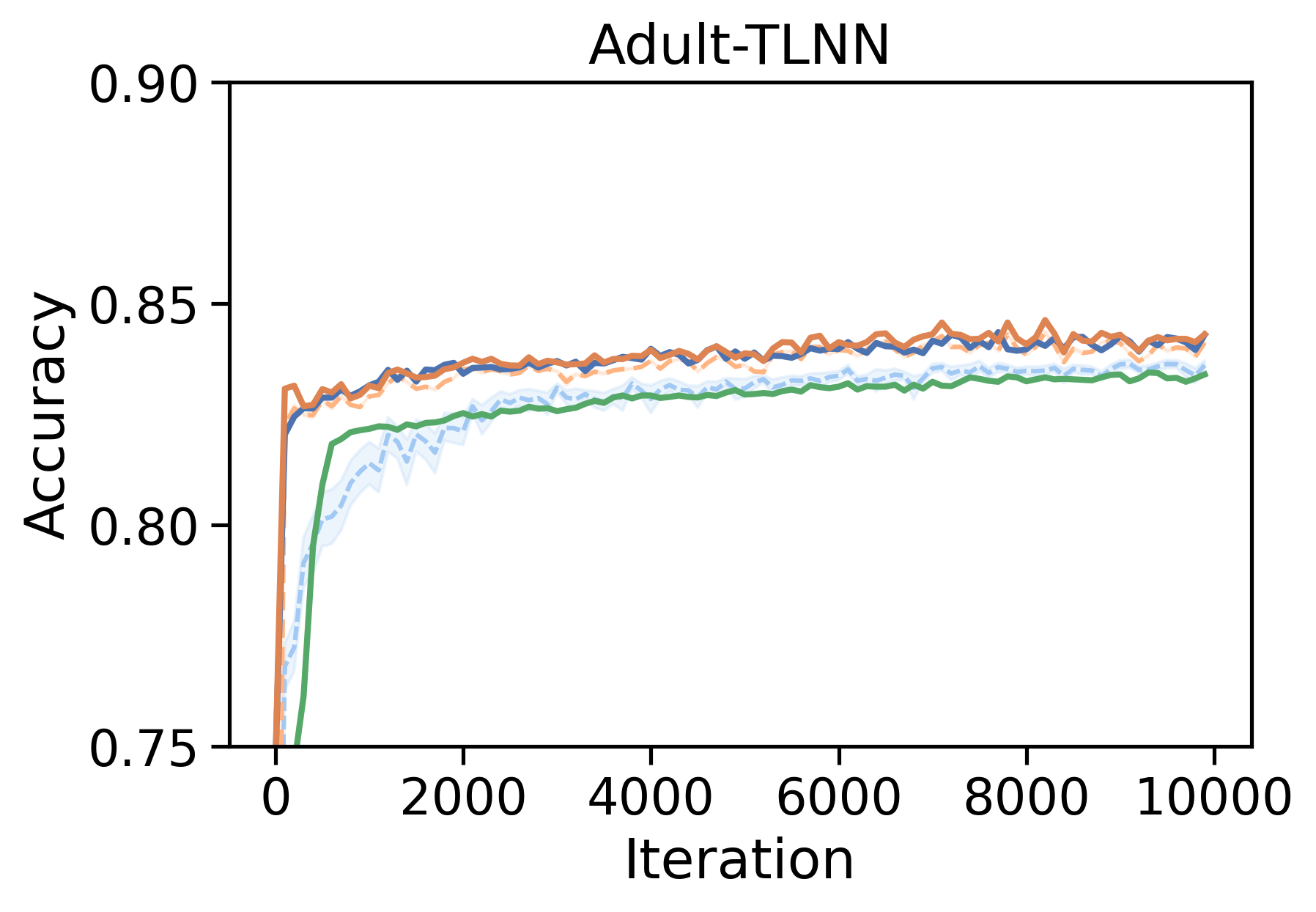}}
	\subfigure{\label{fig:-sigma2:sfigf}\includegraphics[width=0.24\linewidth]{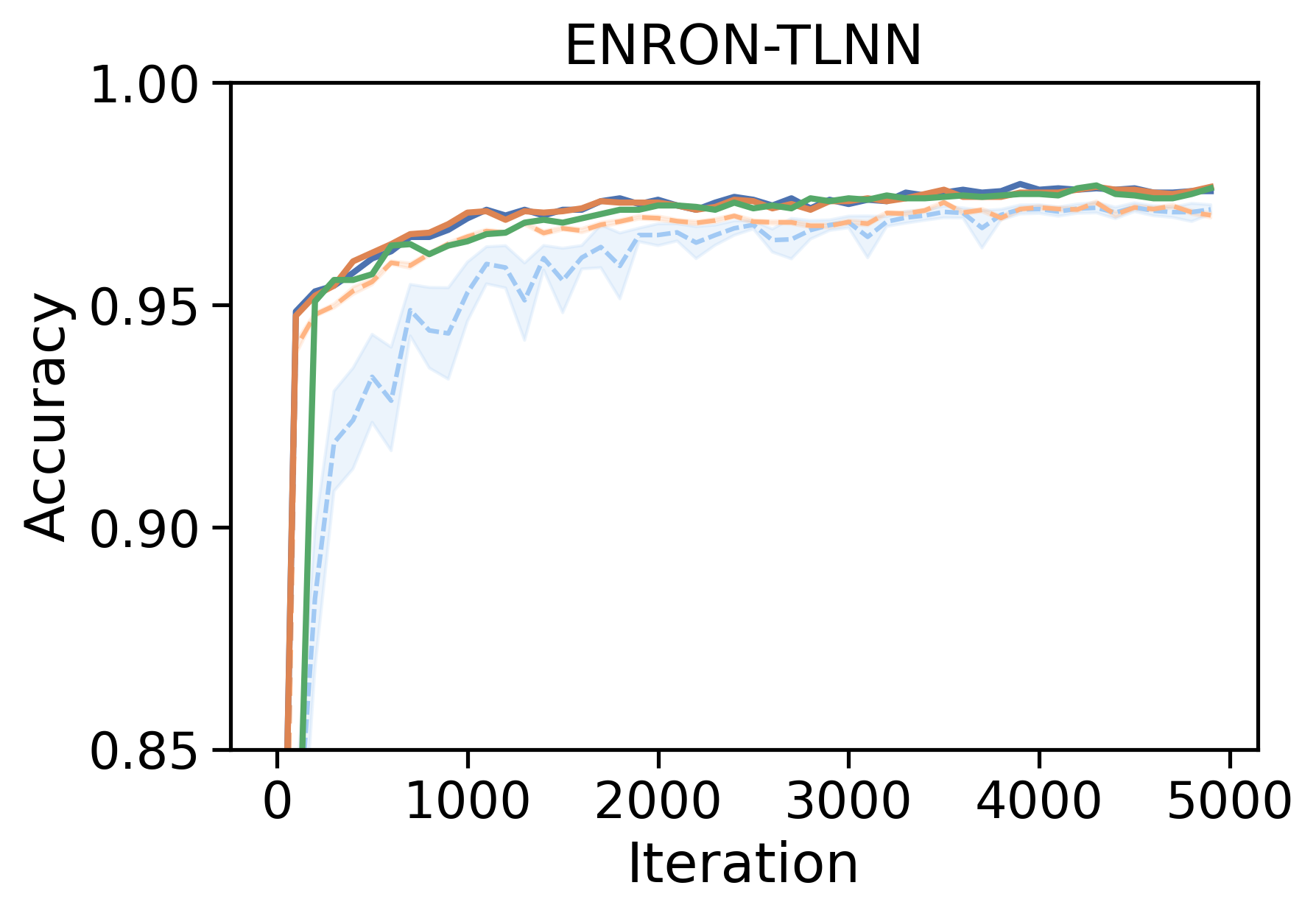}}
	\subfigure{\label{fig:-sigma2:sfigg}\includegraphics[width=0.24\linewidth]{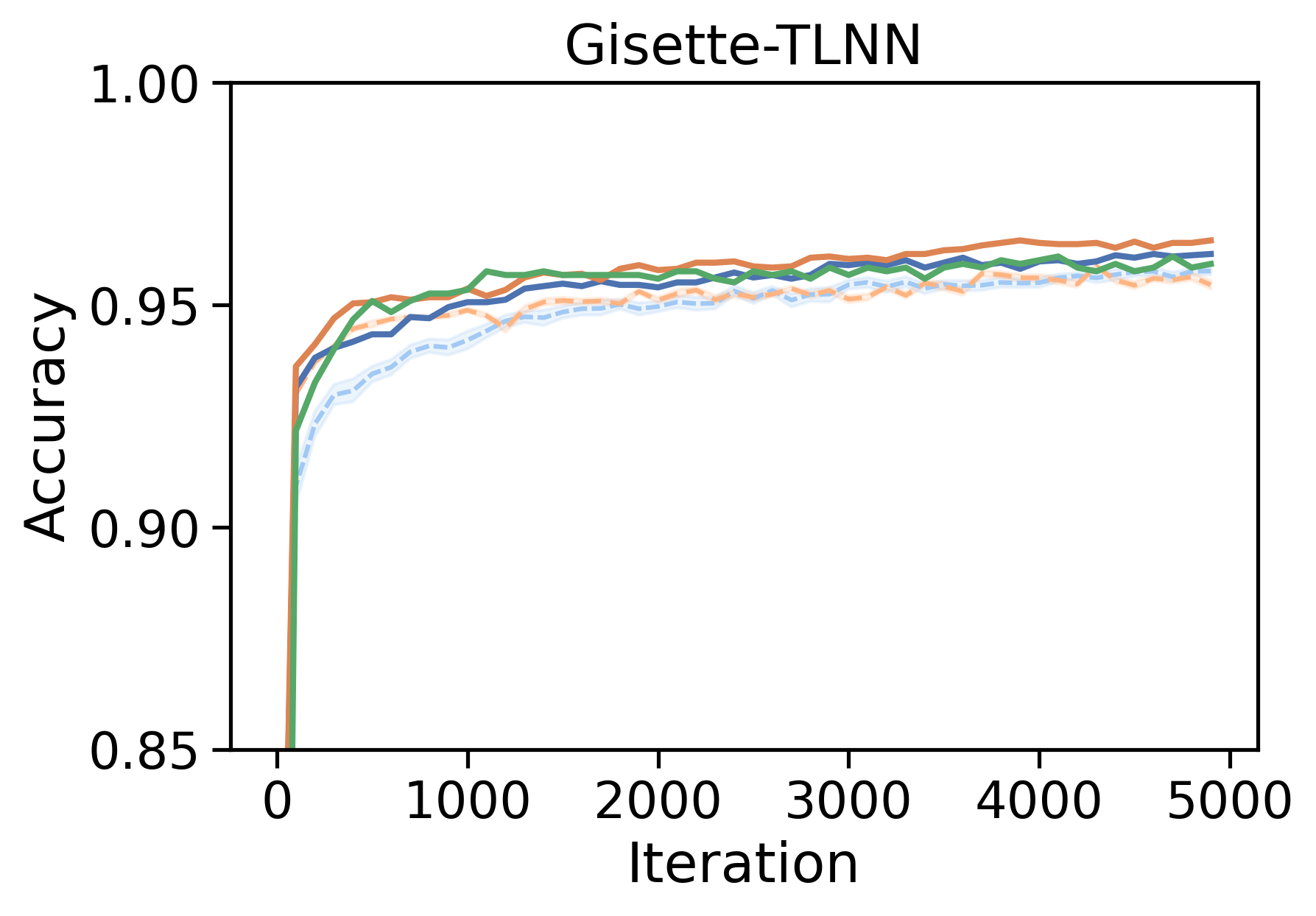}}
	\subfigure{\label{fig1:-sigma2:sfigh}\includegraphics[width=0.24\linewidth]{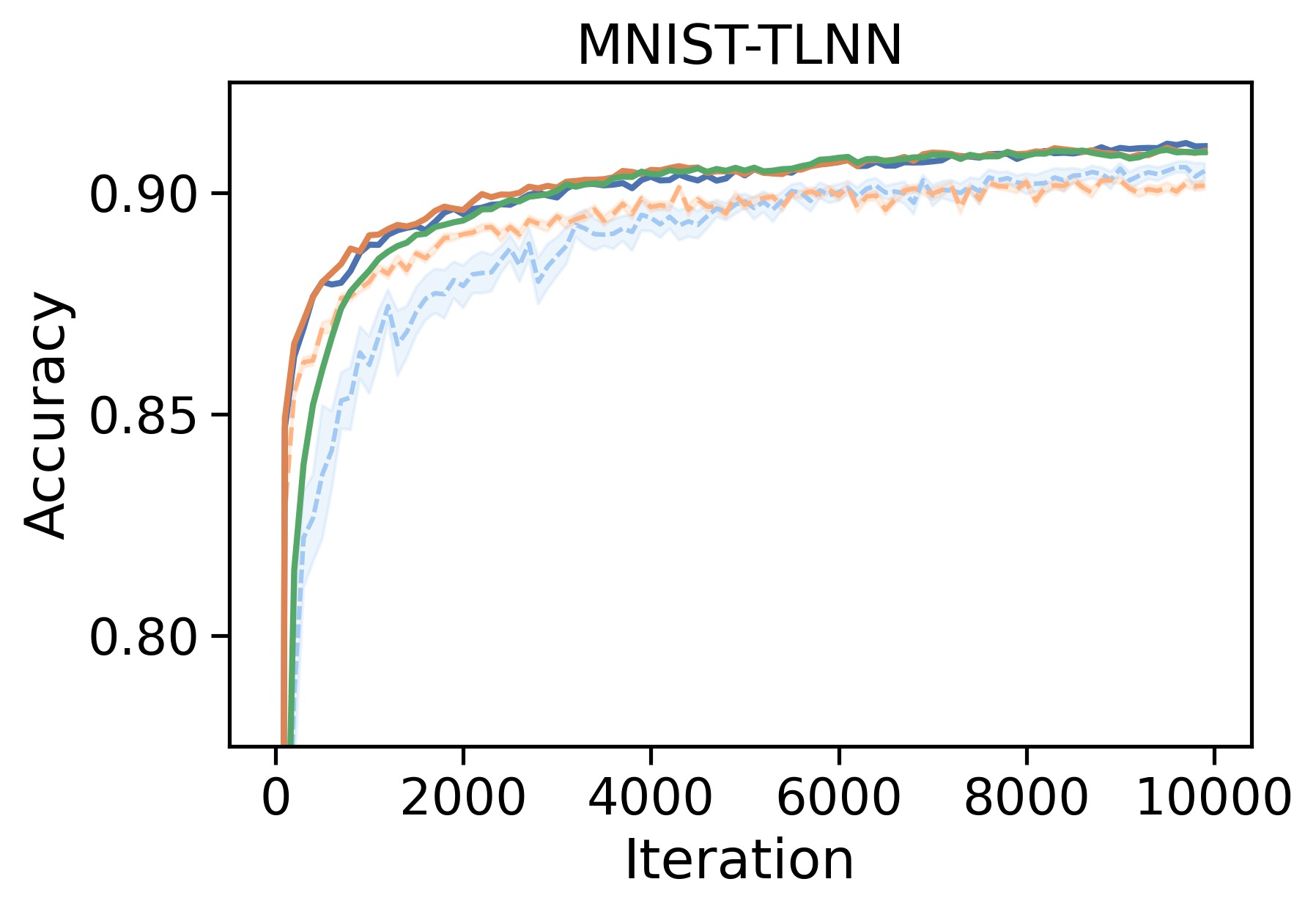}}\\
	
	\subfigure{\includegraphics[width=\linewidth]{images/Exp1/Exp1-Legend.png}}
	\caption{Comparing the testing accuracy curves of DPAdam and DPSGD models across hyperparameter tuning grid from Table~\ref{tab:E1_params} with $\sigma=2$. 
	} 
	\label{fig:exp1-sigma2}
\end{figure*}

\begin{figure*}[htb!]
	\centering
	\subfigure{\label{fig:-sigma8:sfiga}\includegraphics[width=0.24\linewidth]{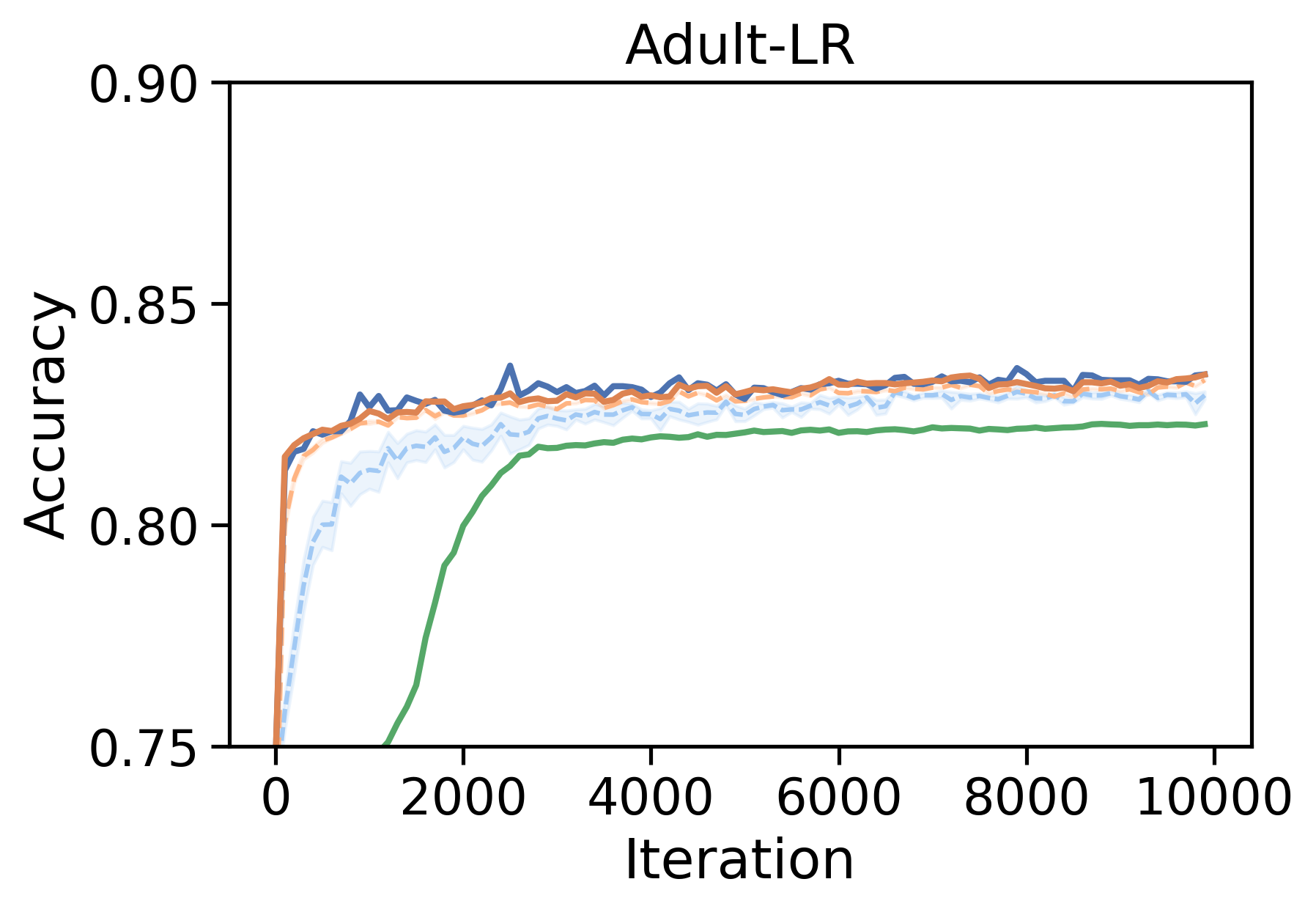}}
	\subfigure{\label{fig:-sigma8:sfigb}\includegraphics[width=0.24\linewidth]{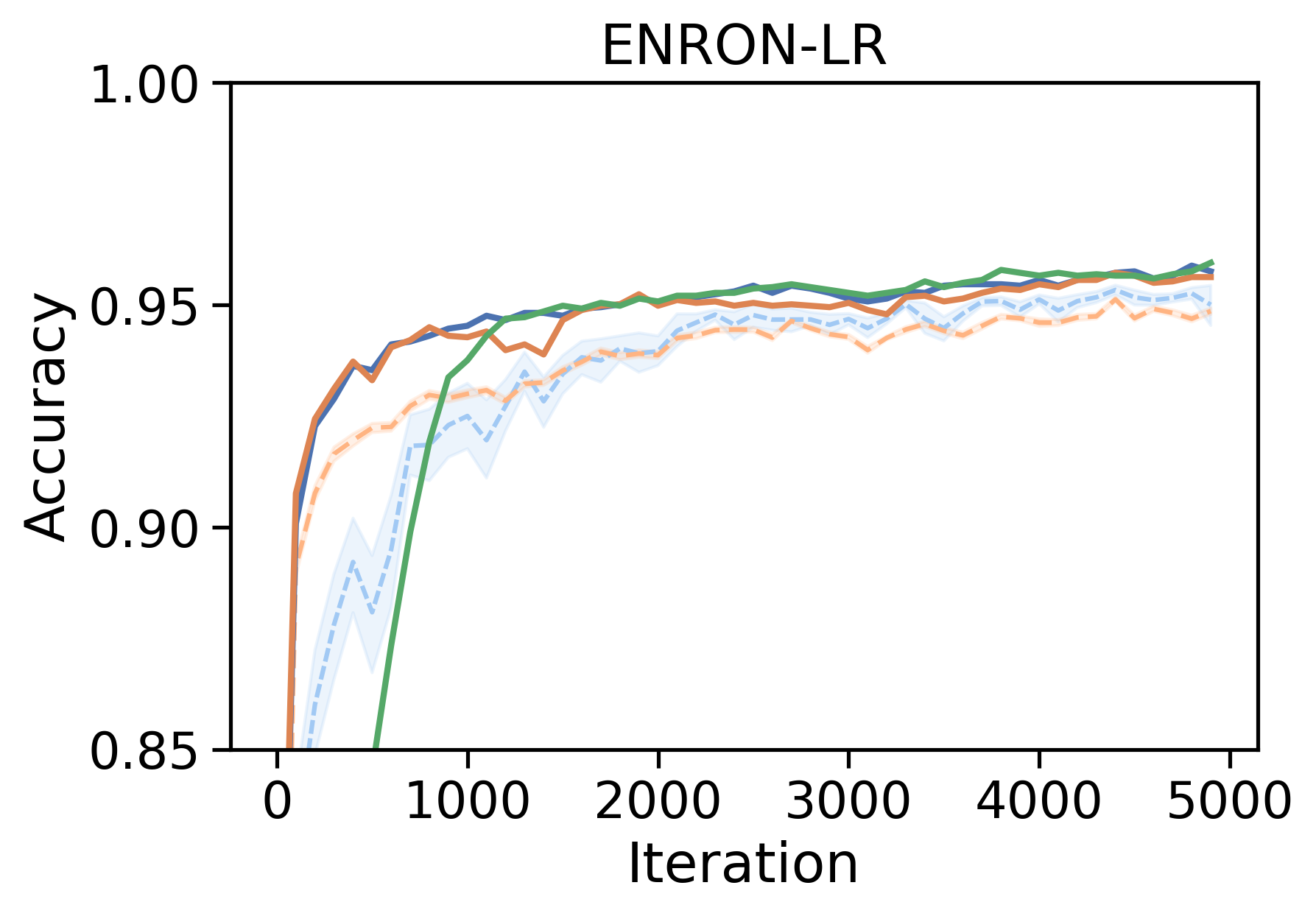}}
	\subfigure{\label{fig:-sigma8:sfigc}\includegraphics[width=0.24\linewidth]{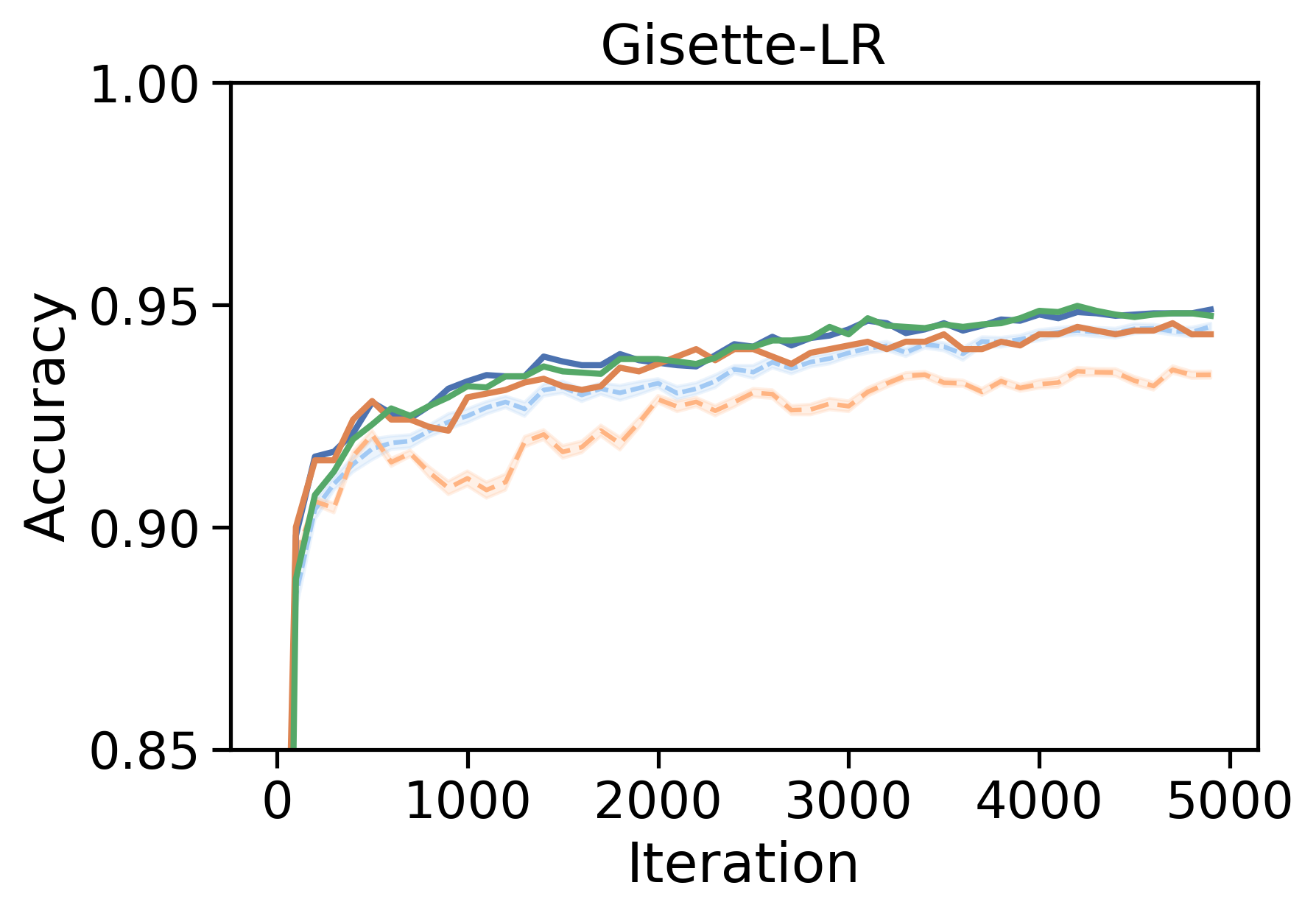}}
	\subfigure{\label{fig:-sigma8:sfigd}\includegraphics[width=0.24\linewidth]{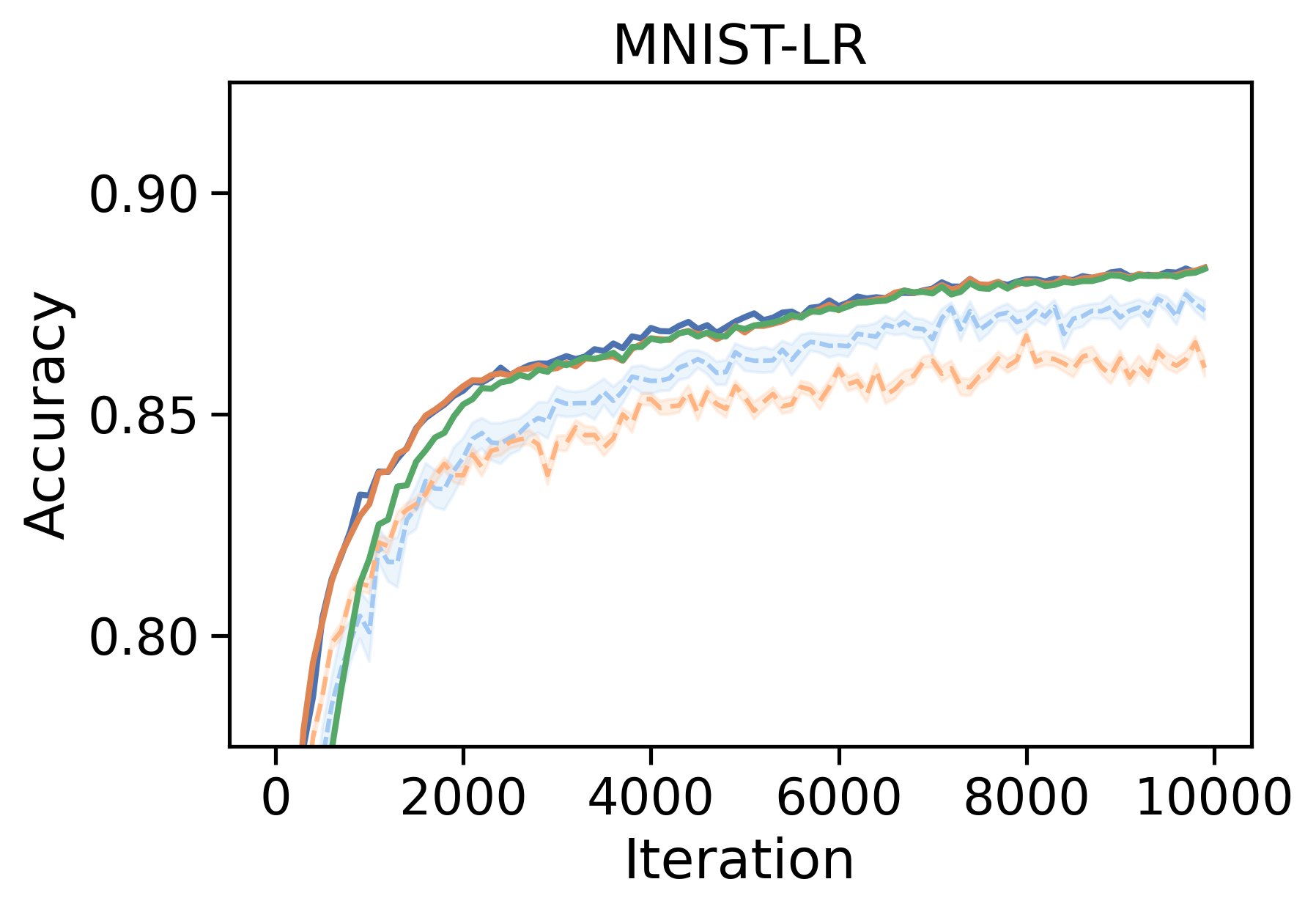}} \\
	
	\subfigure{\label{fig:-sigma8:sfige}\includegraphics[width=0.24\linewidth]{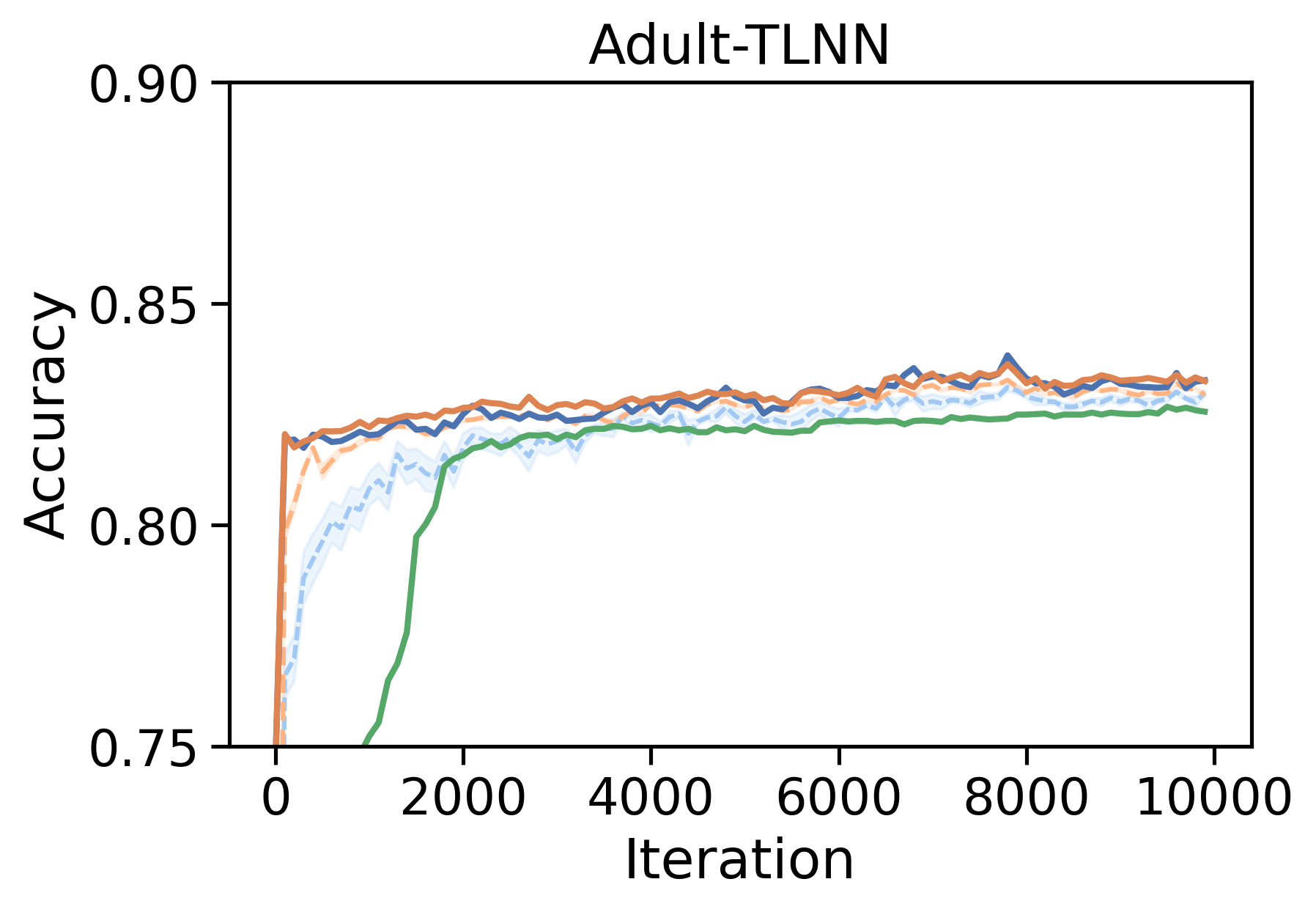}}
	\subfigure{\label{fig:-sigma8:sfigf}\includegraphics[width=0.24\linewidth]{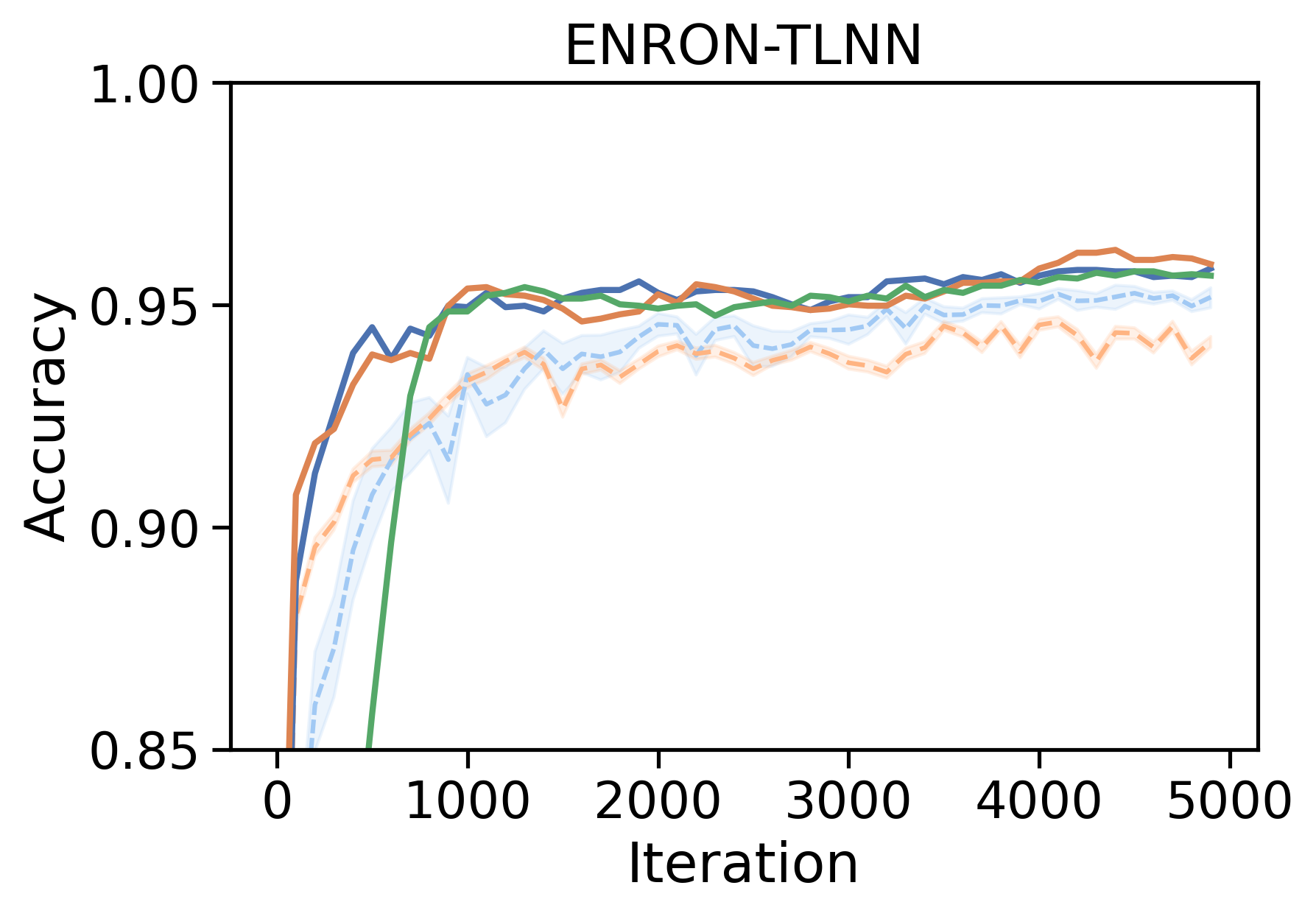}}
	\subfigure{\label{fig:-sigma8:sfigg}\includegraphics[width=0.24\linewidth]{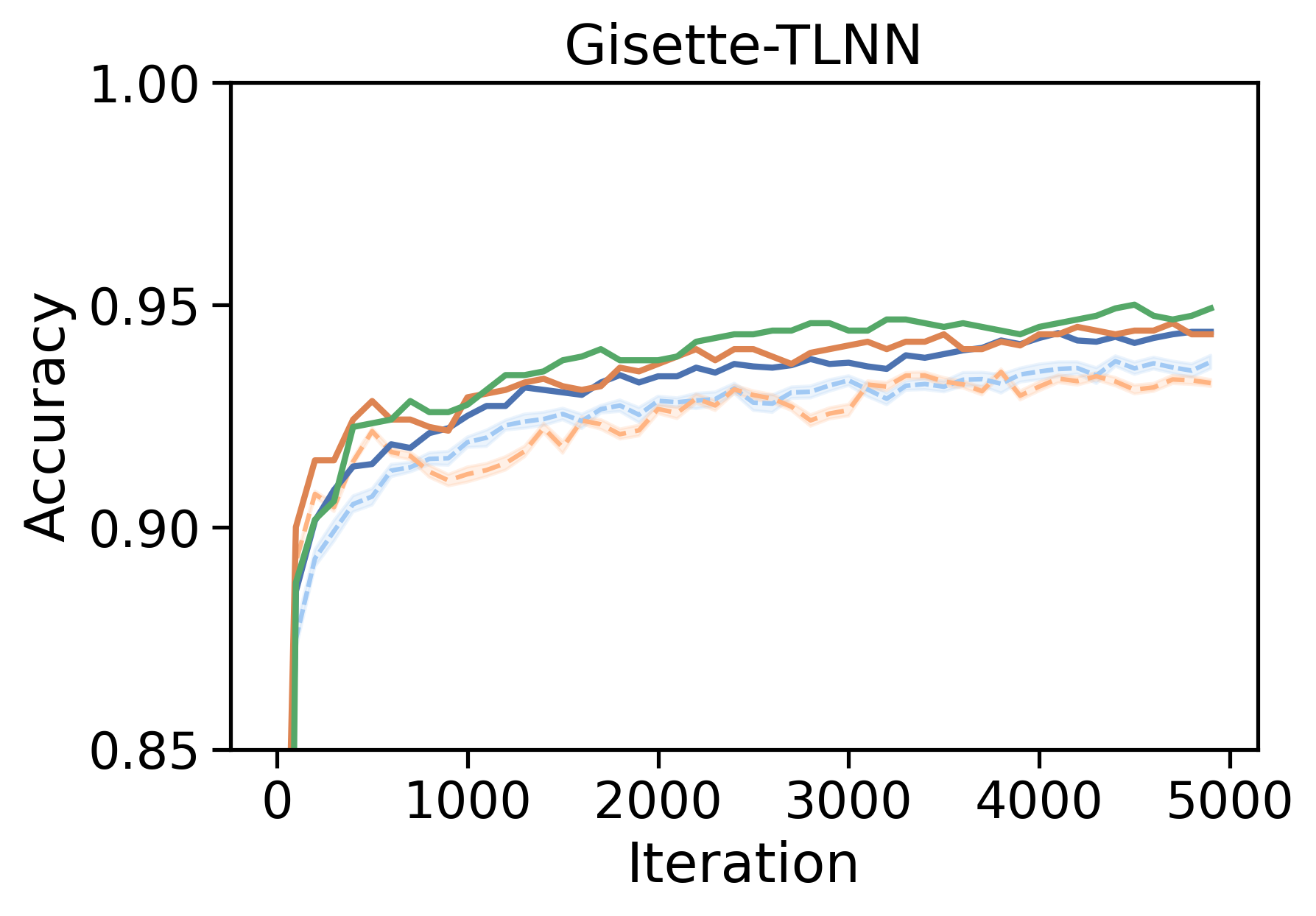}}
	\subfigure{\label{fig1:-sigma8:sfigh}\includegraphics[width=0.24\linewidth]{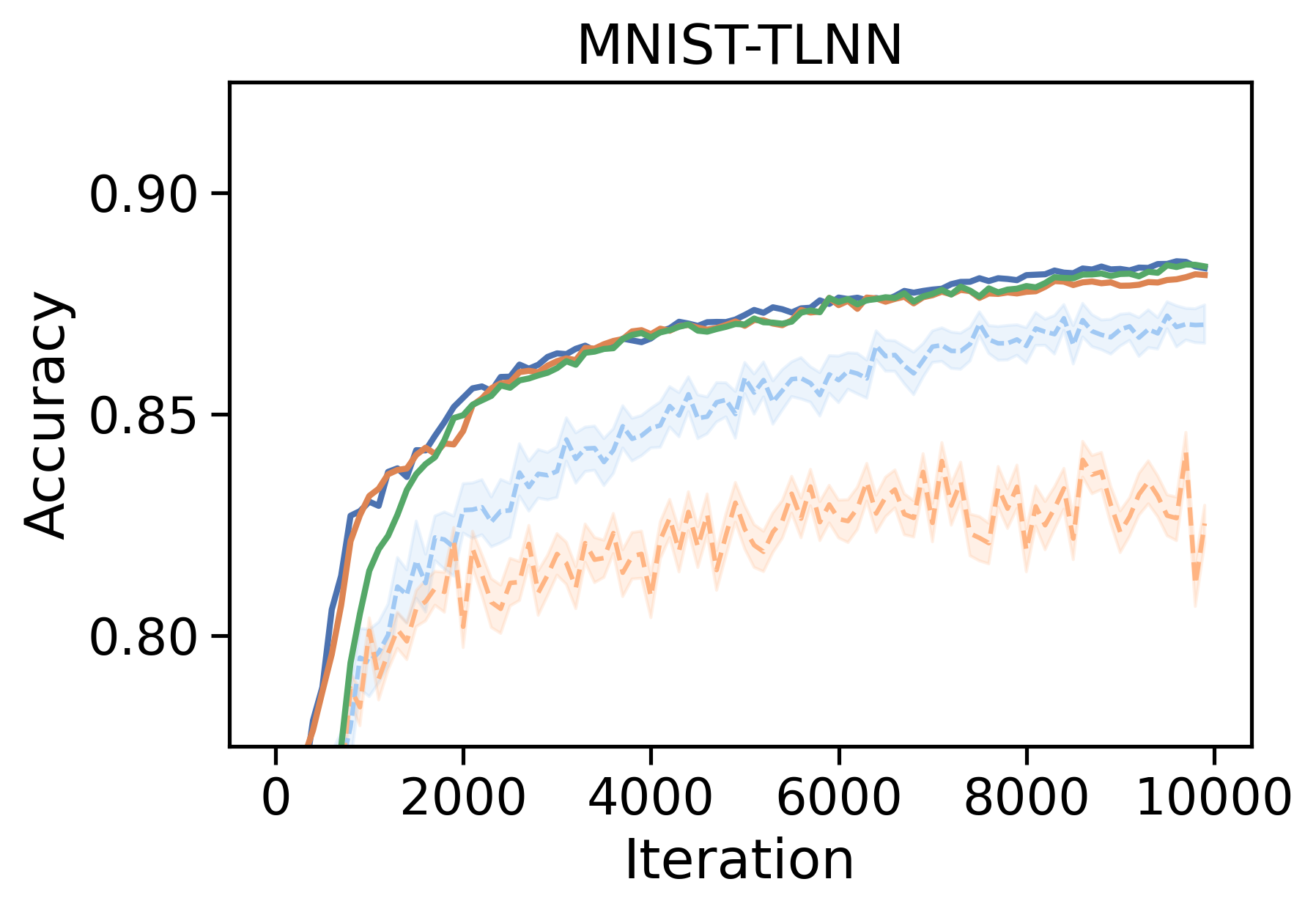}}\\
	
	\subfigure{\includegraphics[width=\linewidth]{images/Exp1/Exp1-Legend.png}}
	\caption{Comparing the testing accuracy curves of DPAdam and DPSGD models across hyperparameter tuning grid from Table~\ref{tab:E1_params} with $\sigma=8$. 
	} 
	\label{fig:exp1-sigma8}
\end{figure*}

\begin{figure*}[htb!]
	\centering
	\subfigure{\label{fig:exp2-sigma2:sfiga}\includegraphics[width=0.24\linewidth]{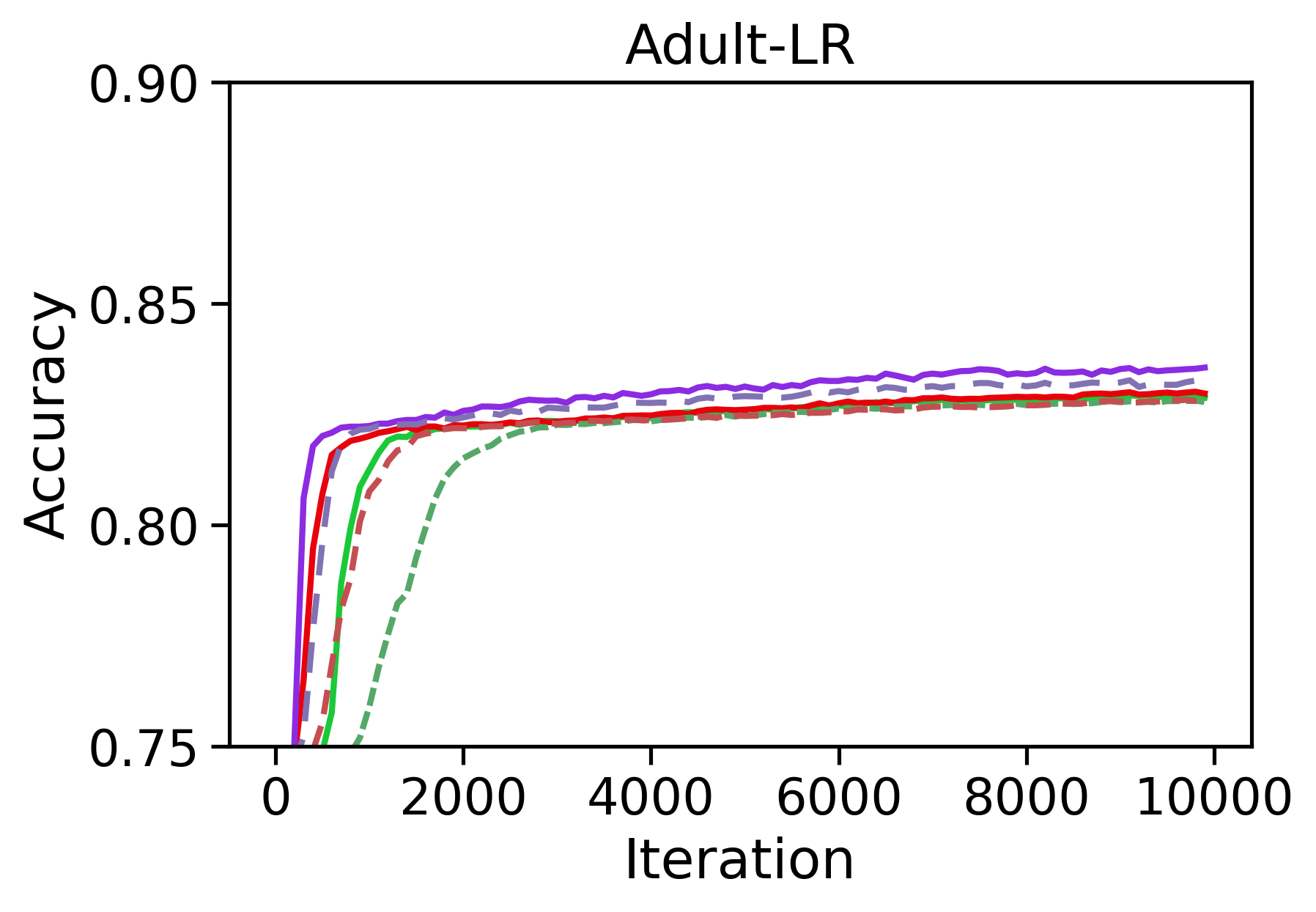}}
	\subfigure{\label{fig:exp2-sigma2:sfigb}\includegraphics[width=0.24\linewidth]{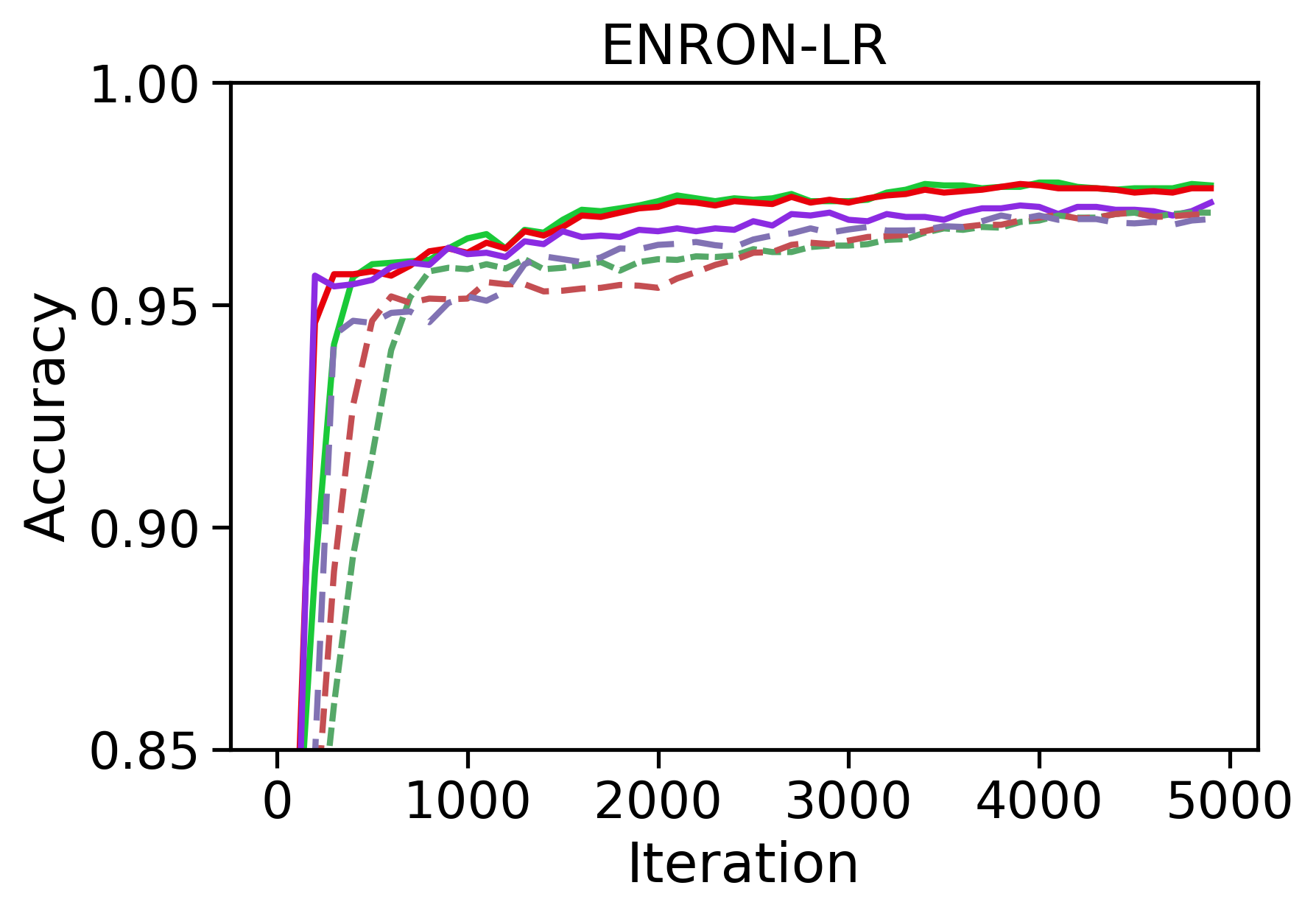}}
	\subfigure{\label{fig:exp2-sigma2:sfigc}\includegraphics[width=0.24\linewidth]{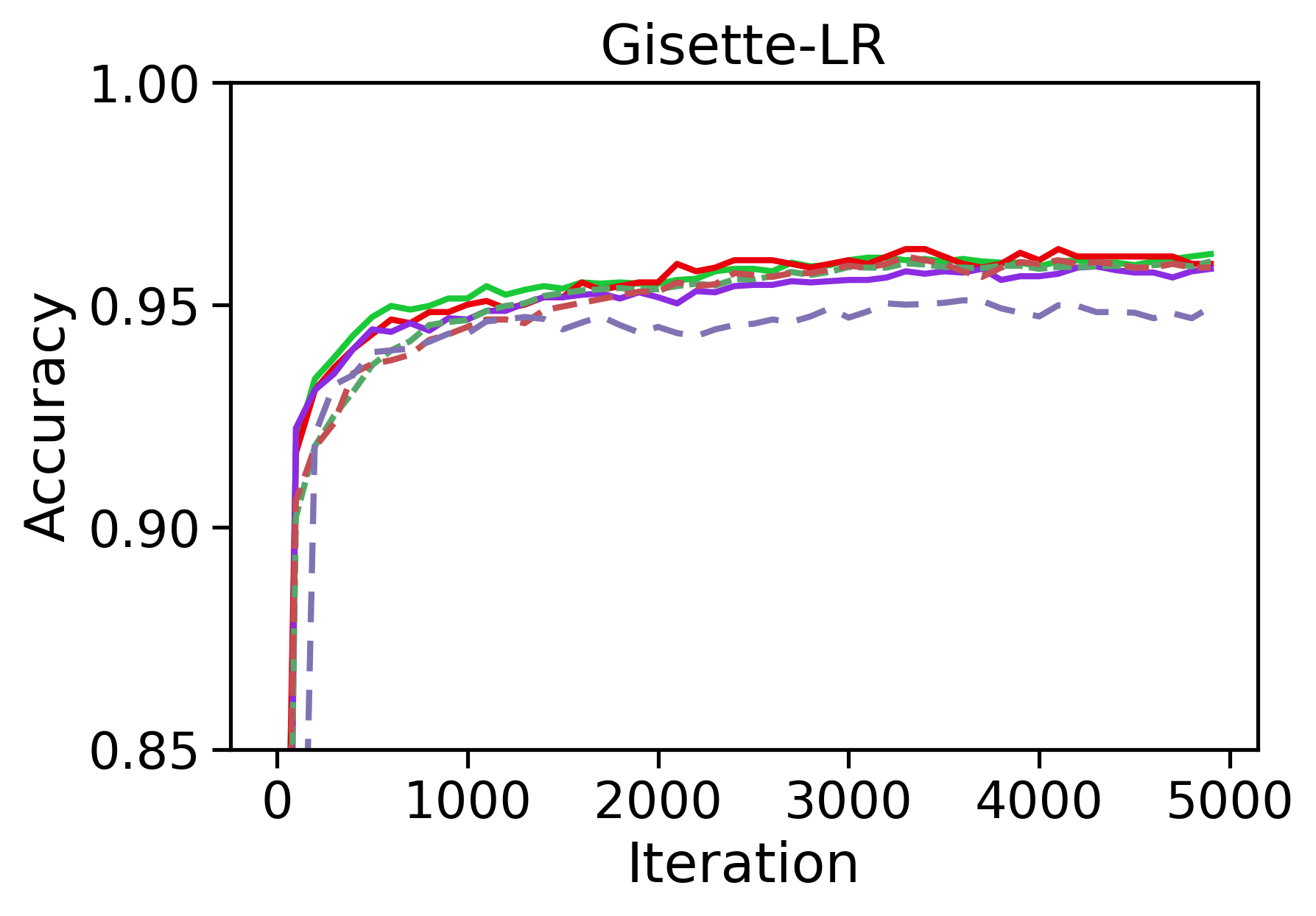}}
	\subfigure{\label{fig:exp2-sigma2:sfigd}\includegraphics[width=0.24\linewidth]{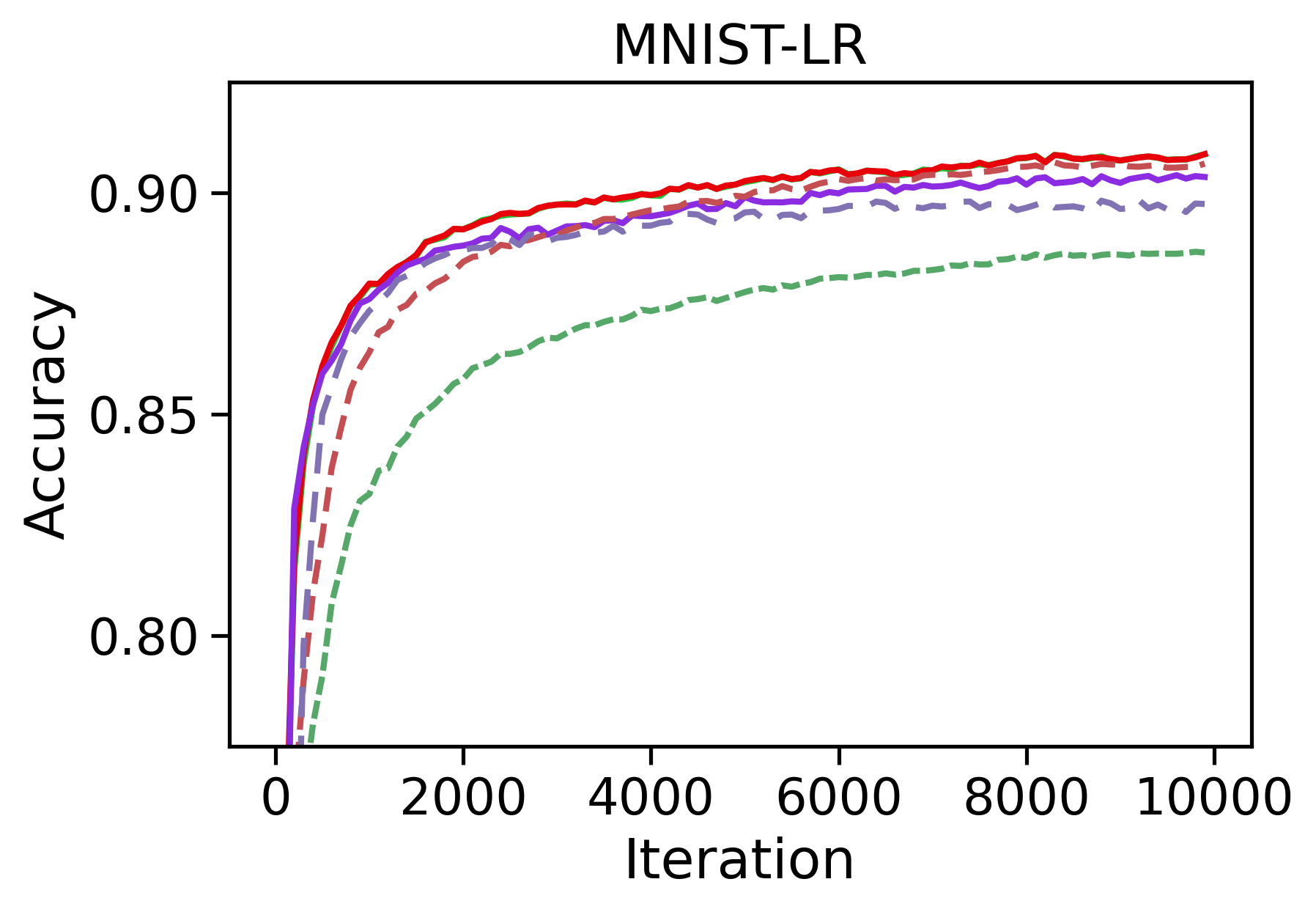}} \\
	
	\subfigure{\label{fig:exp2-sigma2:sfige}\includegraphics[width=0.24\linewidth]{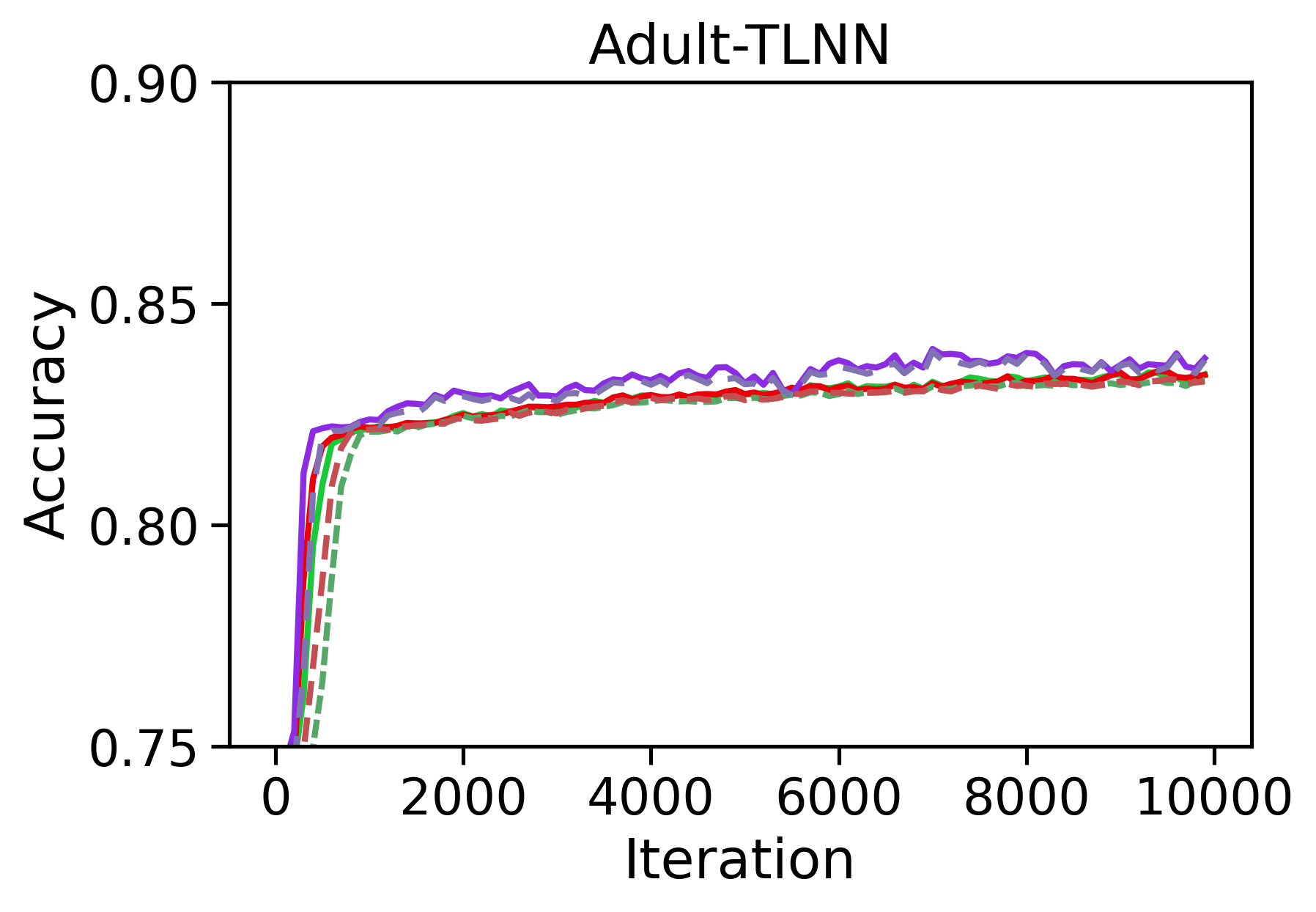}}
	\subfigure{\label{fig:exp2-sigma2:sfigf}\includegraphics[width=0.24\linewidth]{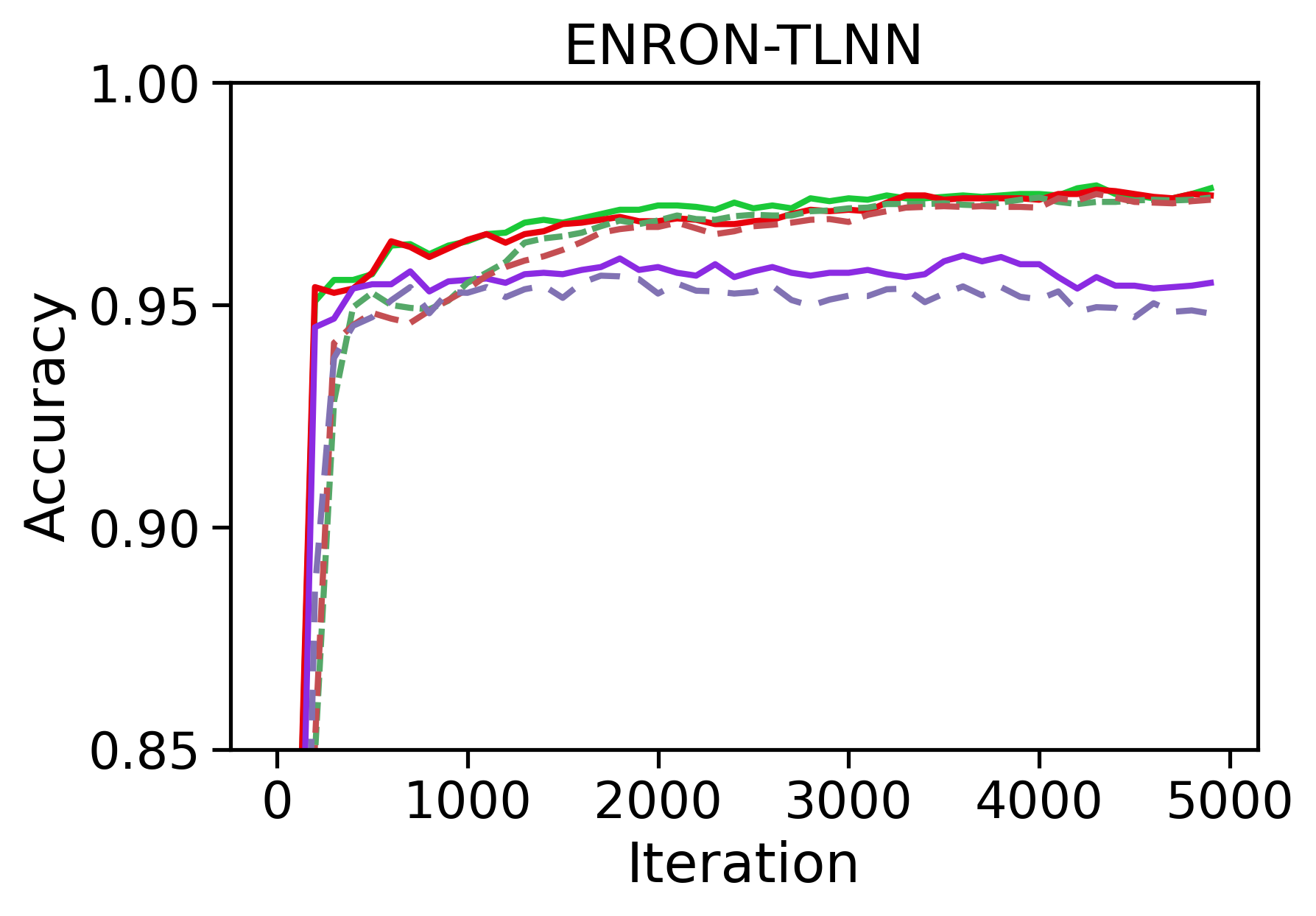}}
	\subfigure{\label{fig:exp2-sigma2:sfigg}\includegraphics[width=0.24\linewidth]{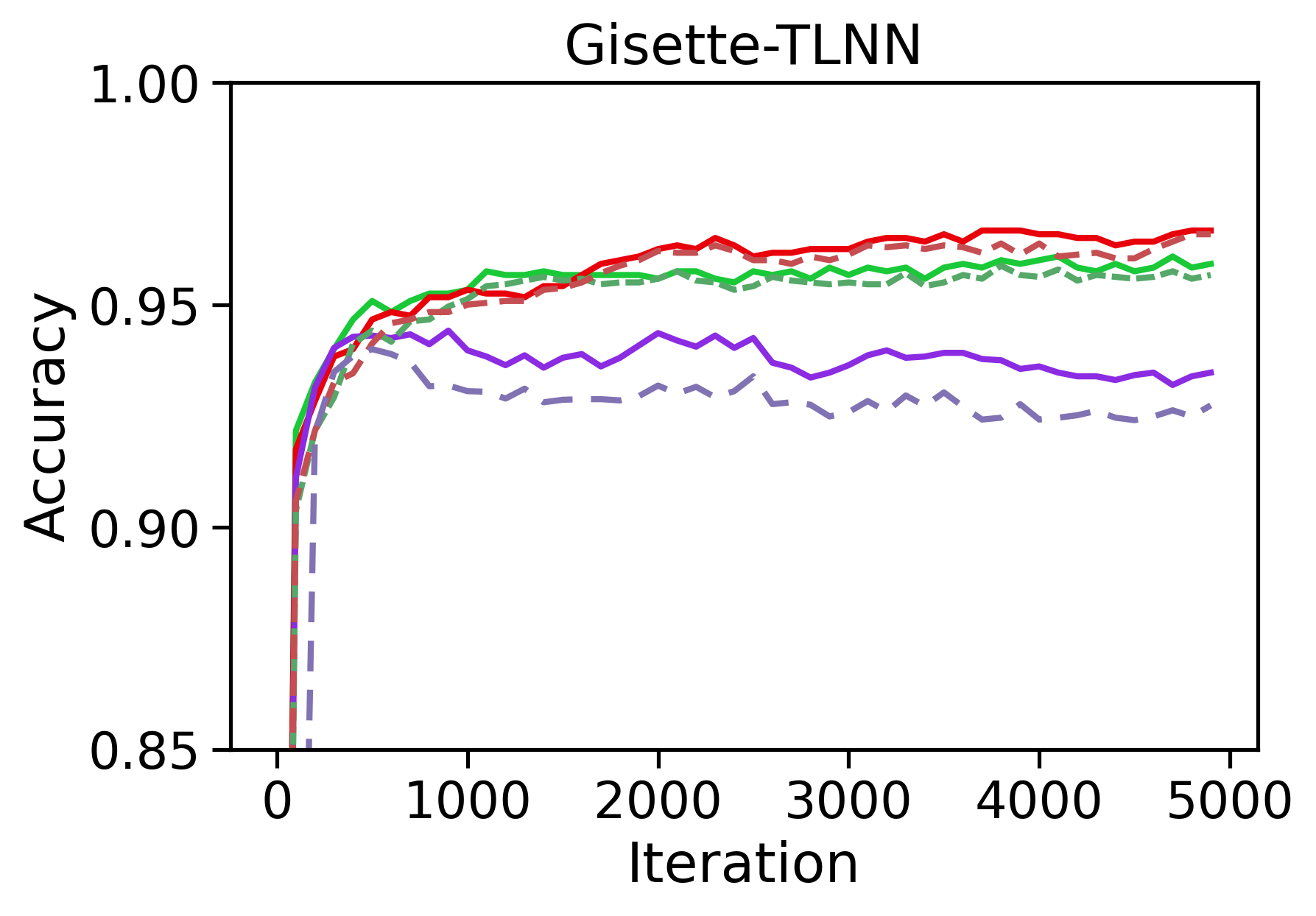}}
	\subfigure{\label{fig:exp2-sigma2:sfigh}\includegraphics[width=0.24\linewidth]{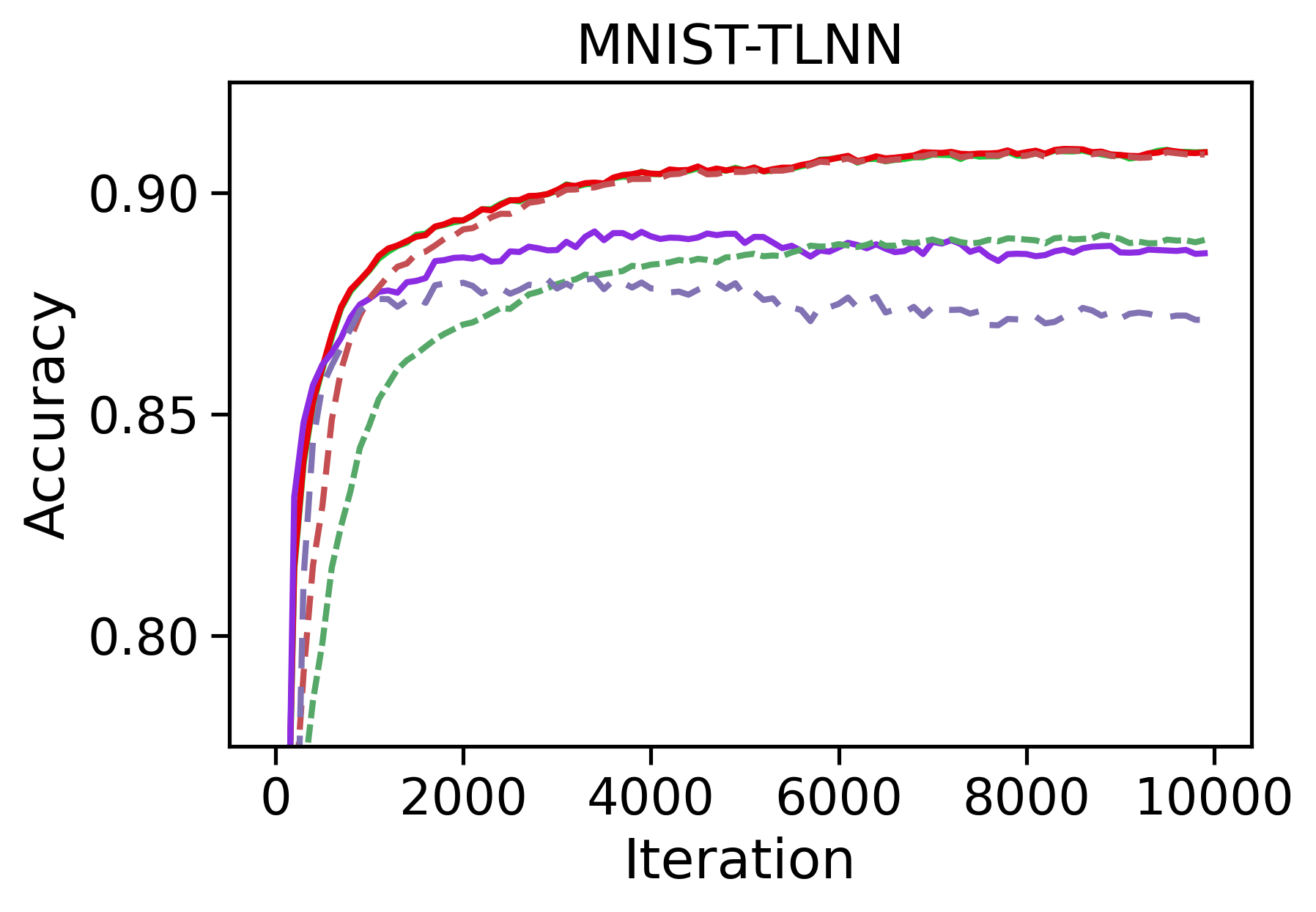}}\\
	
	\subfigure{\includegraphics[width=0.8\linewidth]{images/Exp2/Exp2-Legend.png}}
	\caption{Comparing the testing accuracy curves of DPAdam, ADADP and $\optname$ models across hyperparameter tuning grid from Table~\ref{tab:E1_params} with $\sigma=2$. The limits for the y-axes are adjusted based on the dataset while maintaining a 15\% range for all.}
	\label{fig:exp2-sigma2}
\end{figure*}

\begin{figure*}[htb!]
	\centering
	\subfigure{\label{fig:exp2-sigma8:sfiga}\includegraphics[width=0.24\linewidth]{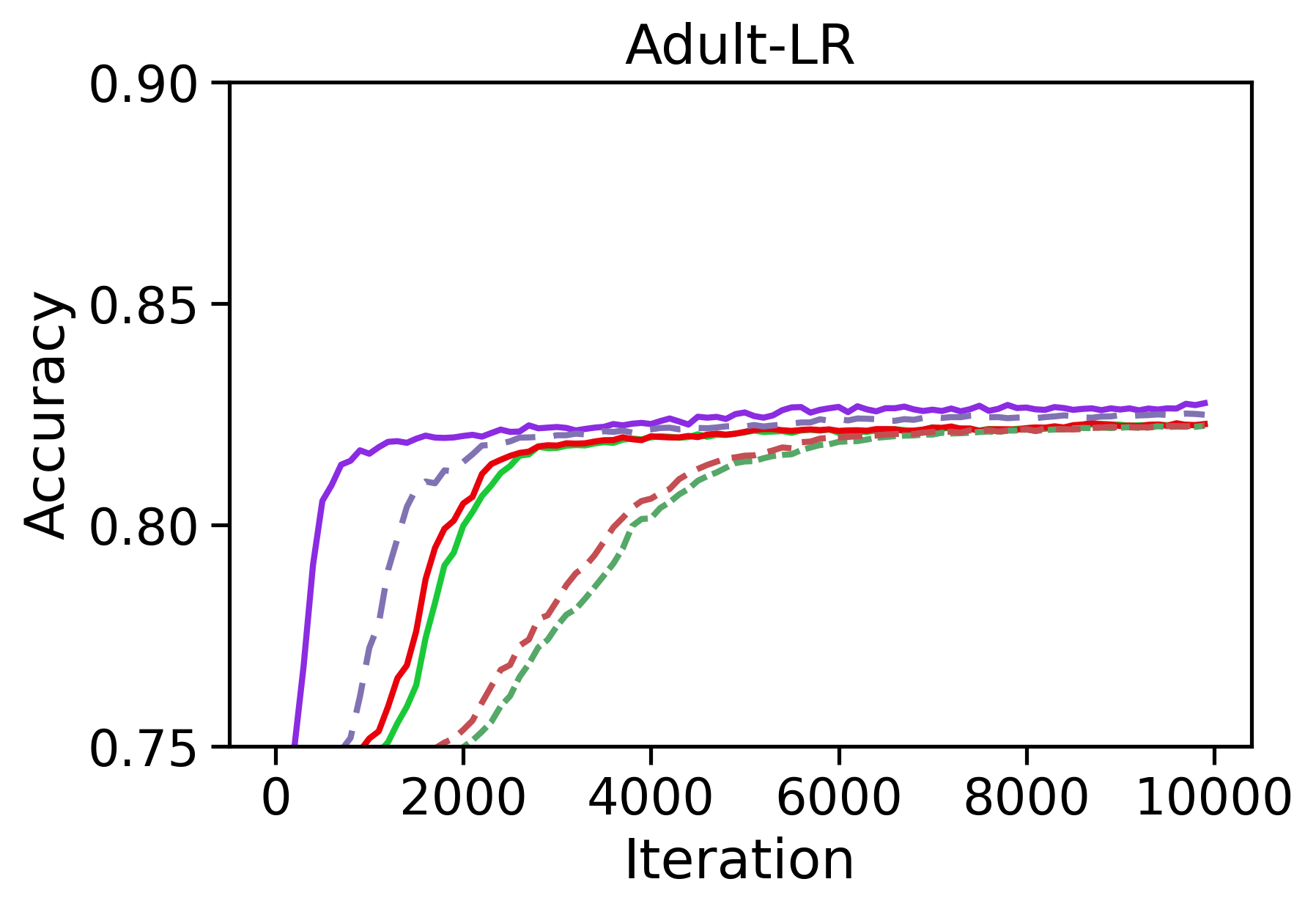}}
	\subfigure{\label{fig:exp2-sigma8:sfigb}\includegraphics[width=0.24\linewidth]{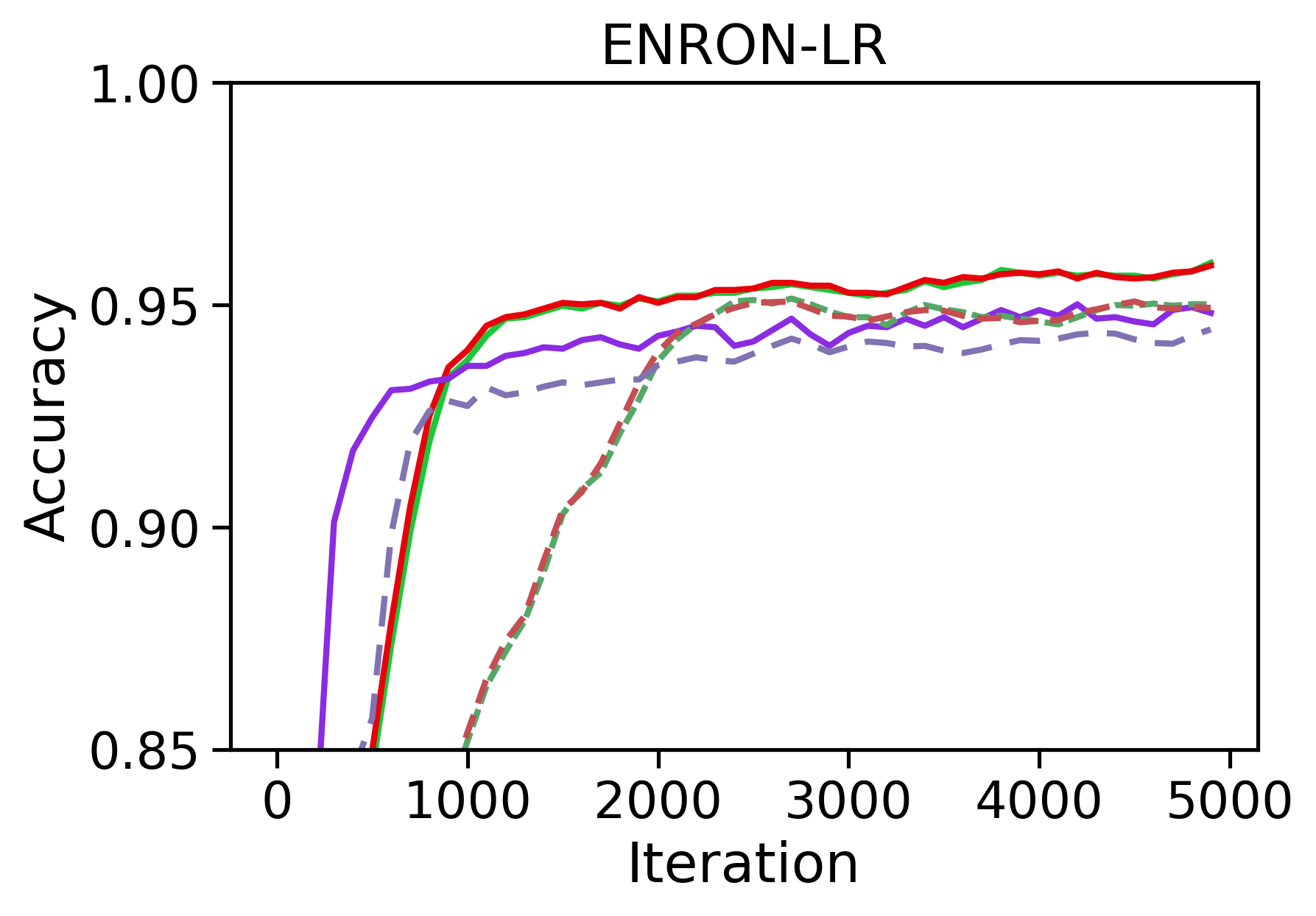}}
	\subfigure{\label{fig:exp2-sigma8:sfigc}\includegraphics[width=0.24\linewidth]{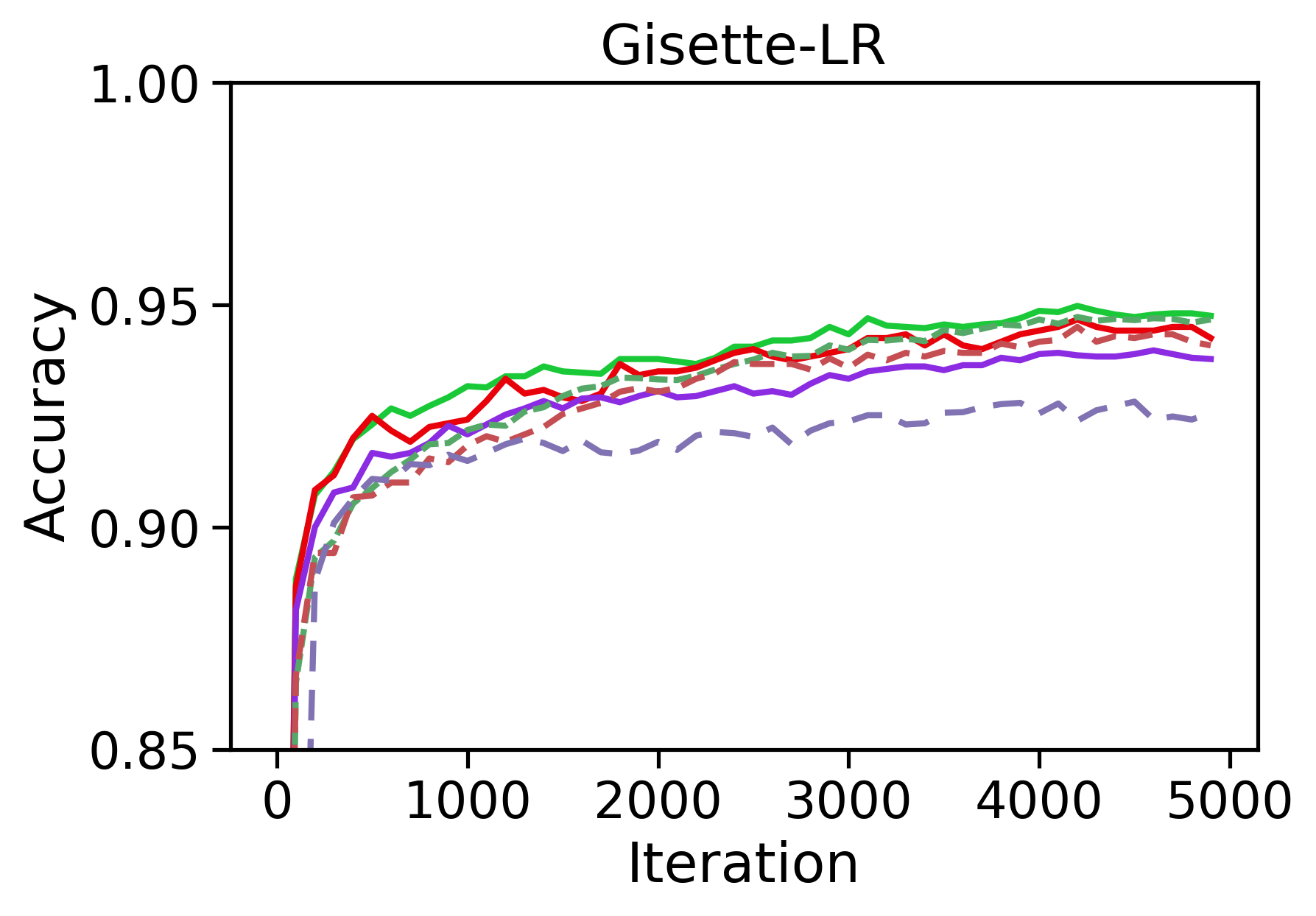}}
	\subfigure{\label{fig:exp2-sigma8:sfigd}\includegraphics[width=0.24\linewidth]{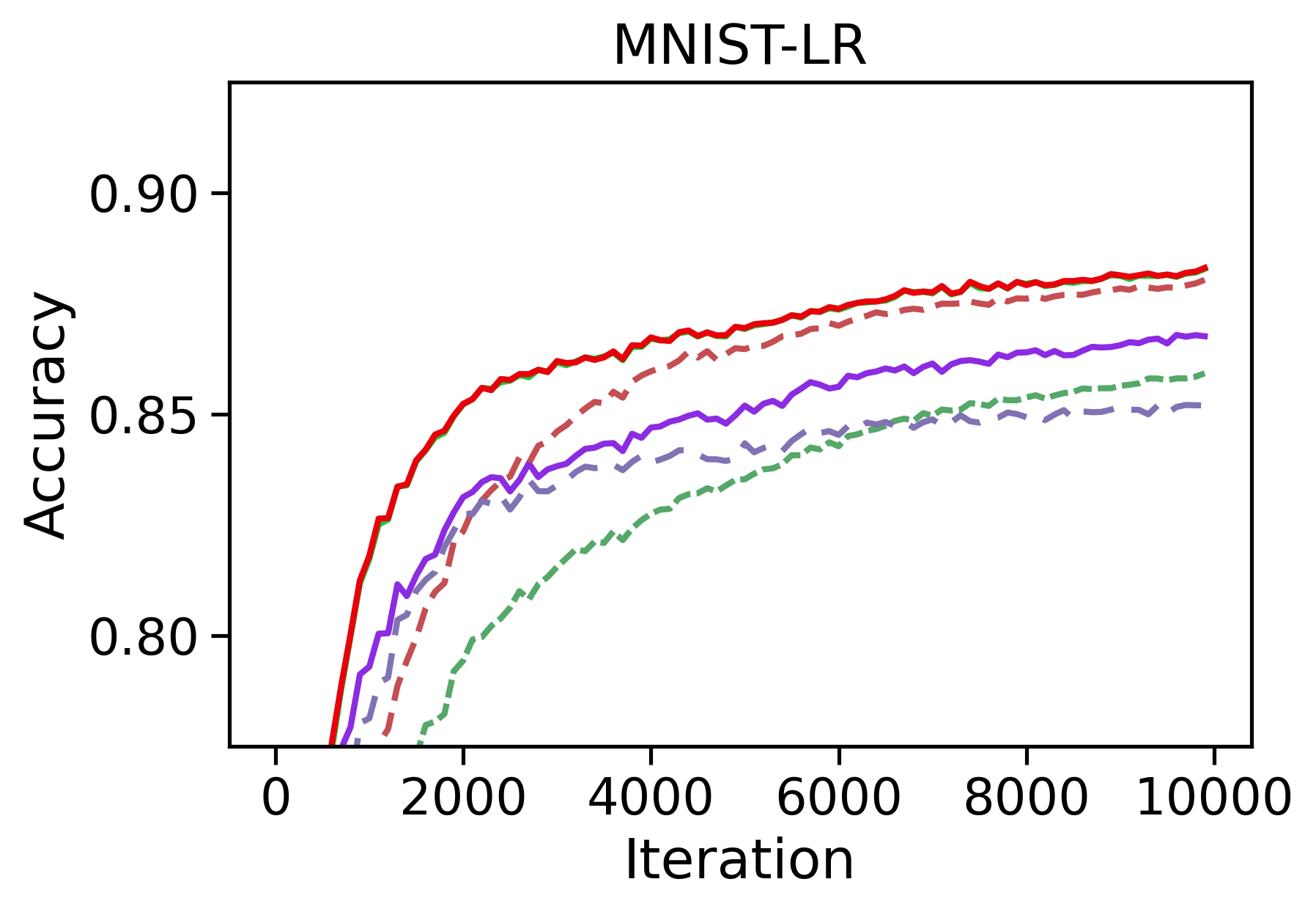}} \\
	
	\subfigure{\label{fig:exp2-sigma8:sfige}\includegraphics[width=0.24\linewidth]{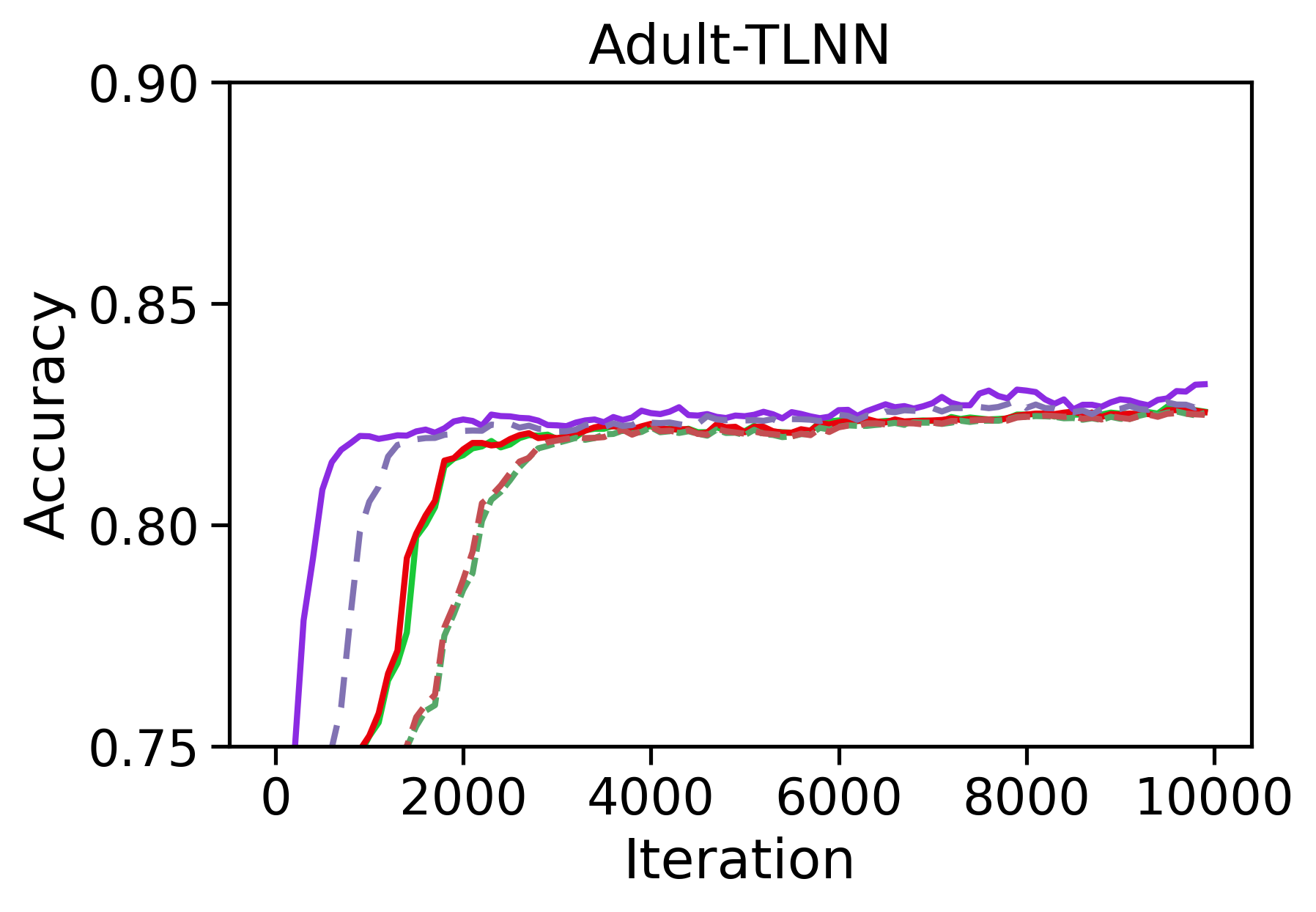}}
	\subfigure{\label{fig:exp2-sigma8:sfigf}\includegraphics[width=0.24\linewidth]{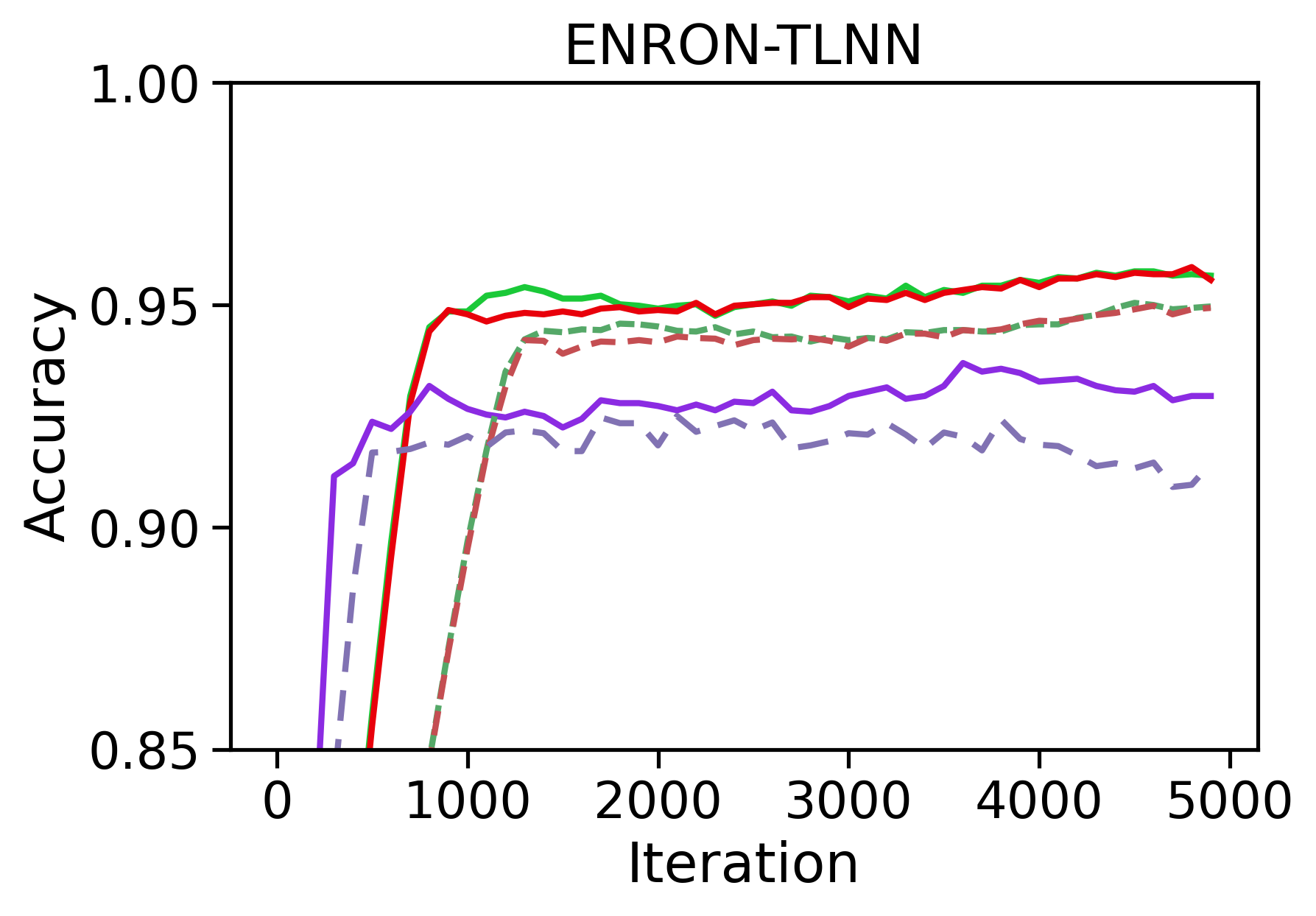}}
	\subfigure{\label{fig:exp2-sigma8:sfigg}\includegraphics[width=0.24\linewidth]{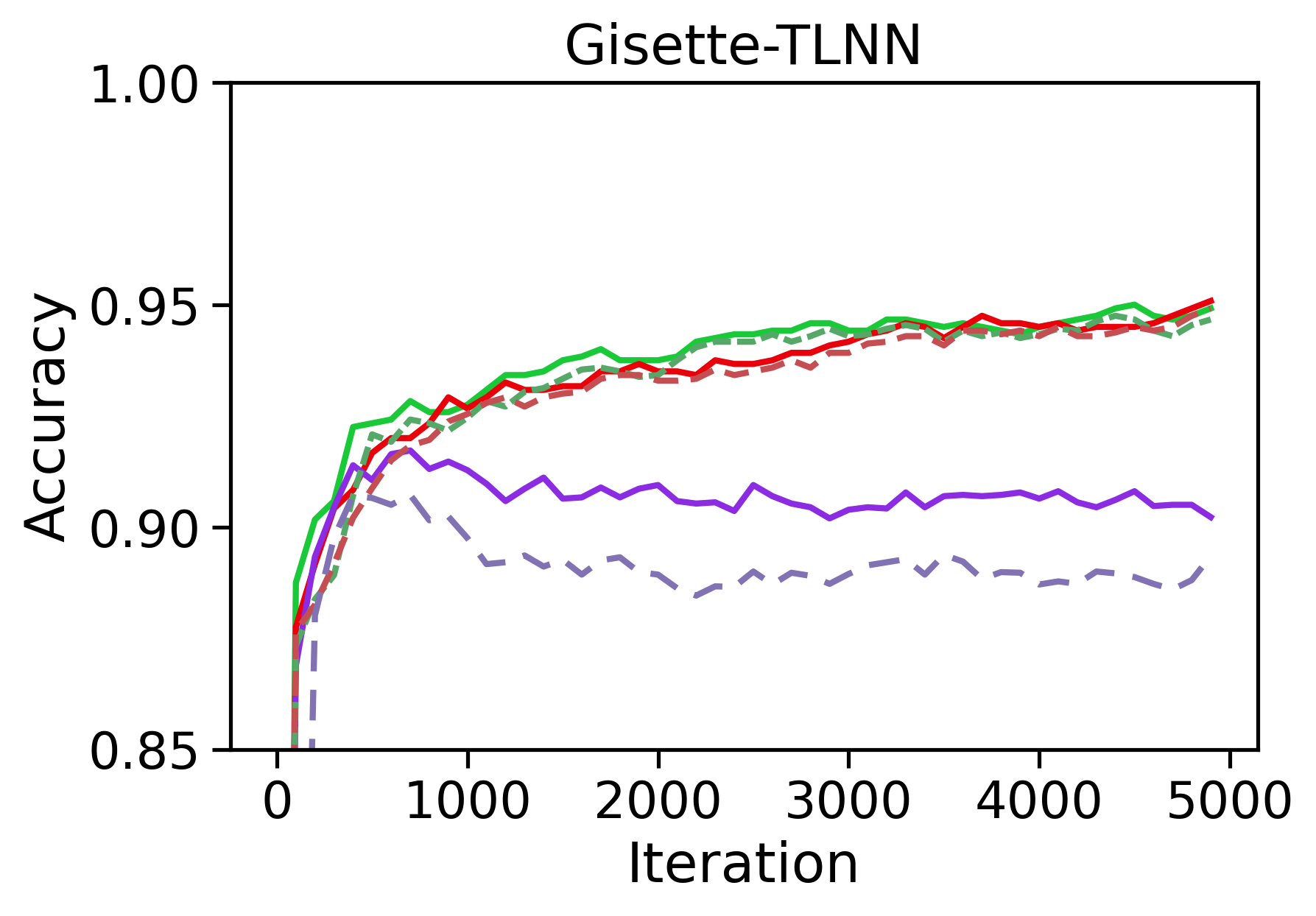}}
	\subfigure{\label{fig:exp2-sigma8:sfigh}\includegraphics[width=0.24\linewidth]{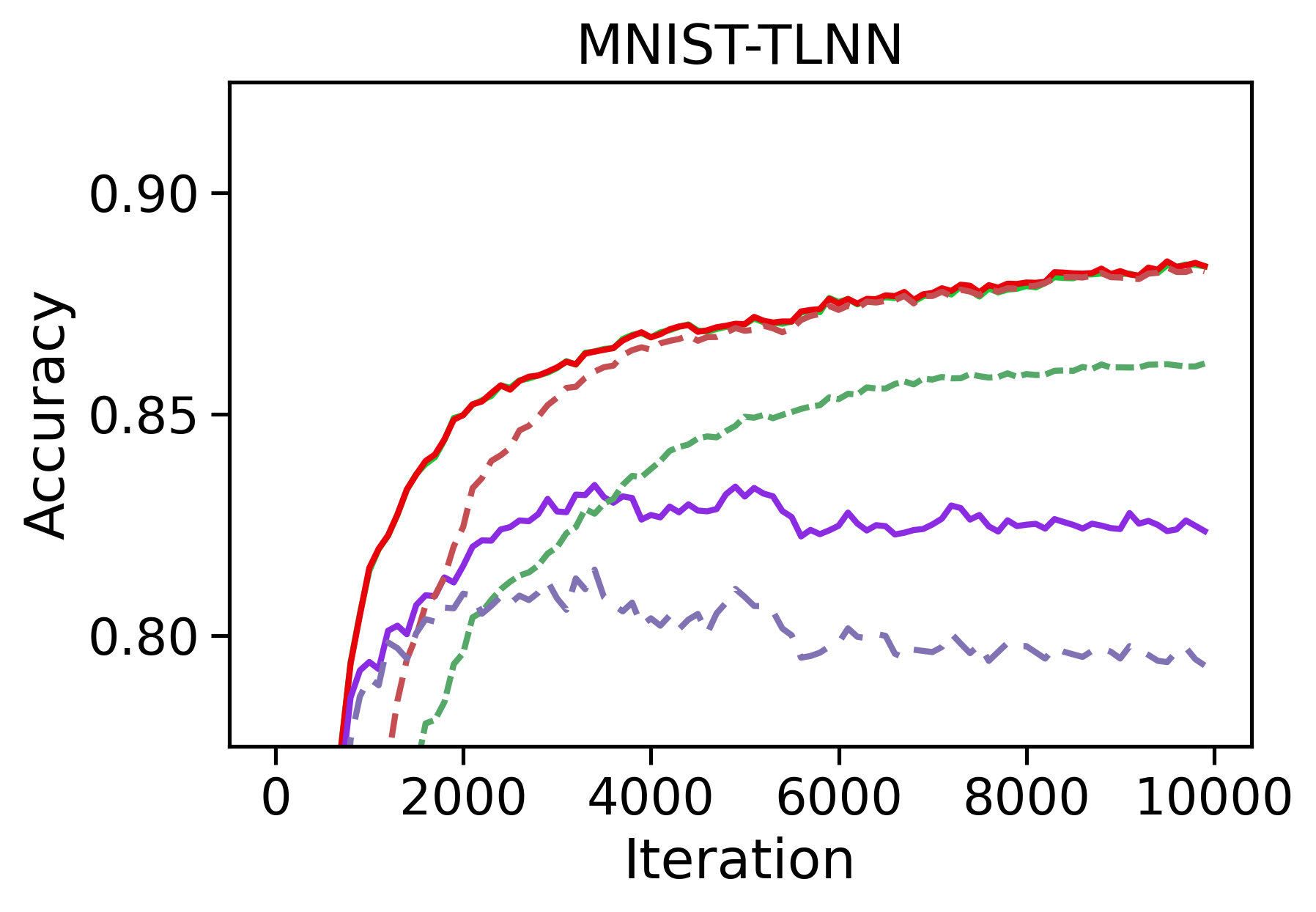}}\\
	
	\subfigure{\includegraphics[width=0.8\linewidth]{images/Exp2/Exp2-Legend.png}}
	\caption{Comparing the testing accuracy curves of DPAdam, ADADP and $\optname$ models across hyperparameter tuning grid from Table~\ref{tab:E1_params} with $\sigma=8$. The limits for the y-axes are adjusted based on the dataset while maintaining a 15\% range for all.}
	\label{fig:exp2-sigma8}
\end{figure*}

\end{document}